\newcommand{\eins}{\boldsymbol{1}}
\newcommand{\argmin}{\operatornamewithlimits{arg \, min}}
\DeclareSymbolFont{wideparensymbol}{OMX}{yhex}{m}{n}
\DeclareMathAccent{\wideparen}{\mathord}{wideparensymbol}{"F3}
\theoremstyle{plain}
\newtheorem{theorem}{Theorem}[section]
\newtheorem{proposition}[theorem]{Proposition}
\newtheorem{lemma}[theorem]{Lemma}
\theoremstyle{definition}
\newtheorem{definition}[theorem]{Definition}
\newtheorem{assumption}[theorem]{Assumption}
\theoremstyle{remark}
\theoremstyle{example}
\newtheorem{example}[theorem]{Example}
\begin{document}

\def\spacingset#1{\renewcommand{\baselinestretch}%
{#1}\small\normalsize} \spacingset{1}

%%%%%%%%%%%%%%%%%%%%%%%%%%%%%%%%%%%%%%%%%%%%%%%%%%%%%%%%%%%%%%%%%%%%%%%%%%%%%%

  \title{\bf On the Robustness of Kernel Ridge Regression Using the Cauchy Loss Function}
  \author{
  Hongwei Wen\\
     \hspace{.2cm}\\
University of Twente\\   
              \texttt{h.wen@utwente.nl}
  \and
    Annika Betken\\
\hspace{.2cm}\\
             % Faculty of Electrical Engineering, Mathematics and Computer Science (EEMCS)\\
              University of Twente\\   
              %Drienerlolaan 5\\
              %7522 NB Enschede, Netherlands\\
              \texttt{a.betken@utwente.nl}\\
              \and
      Wouter Koolen\\
     \hspace{.2cm}\\
    University of Twente \\
    %, Department of Mathematics,Walter-Flex-Str. 3,
     %          D-57072 Siegen, Germany,
              \texttt{w.m.koolen@utwente.nl}
}              
  \maketitle

\bigskip
\begin{abstract}
Robust regression aims to develop methods for estimating an unknown regression function in the presence of outliers, heavy-tailed distributions, or contaminated data, which can severely impact performance. Most existing theoretical results in robust regression assume that the noise has a finite absolute mean, an assumption violated by certain distributions, such as Cauchy and some Pareto noise. In this paper, we introduce a generalized Cauchy noise framework that accommodates all noise distributions with finite moments of any order, even when the absolute mean is infinite. Within this framework, we study the \textit{kernel Cauchy ridge regressor} (\textit{KCRR}), which minimizes a regularized empirical Cauchy risk to achieve robustness. To derive the $L_2$-risk bound for KCRR, we establish a connection between the excess Cauchy risk and $L_2$-risk for sufficiently large scale parameters of the Cauchy loss, which reveals that these two risks are equivalent. Furthermore, under the assumption that the regression function satisfies H\"older smoothness, we derive excess Cauchy risk bounds for KCRR, showing improved performance as the scale parameter decreases. By considering the twofold effect of the scale parameter on the excess Cauchy risk and its equivalence with the $L_2$-risk, we establish the almost minimax-optimal convergence rate for KCRR in terms of $L_2$-risk, highlighting the robustness of the Cauchy loss in handling various types of noise. Finally, we validate the effectiveness of KCRR through experiments on both synthetic and real-world datasets under diverse noise corruption scenarios.

\medskip
\noindent  {\bf Keywords:} Robust regression, generalized Cauchy noise,
robust loss function, 
Cauchy loss function,
kernel ridge regression, 
minimax-optimal convergence rates,
learning theory \\
\end{abstract}

% 171 words, max is 200

%\noindent%
%{\it Keywords:} 
%\vfill

\section{Introduction} \label{sec::Intro}

Robust regression seeks to accurately estimate the true regression function in scenarios where outliers, heavy-tailed distributions, or contaminated data points can significantly distort the results of standard regression methods. Mathematically, it aims to provide accurate estimates even when the noise or response distribution lacks a finite exponential expectation, meaning the distribution may have heavy tails but still possess a finite $p$-th moment. Due to its resilience against such deviations from typical assumptions, robust regression has been extensively applied in fields such as finance \citep{pervez2022robust}, economics \citep{khan2021applications}, engineering \citep{agrusa2022robust}, and environmental science \citep{pirtea2021interplay}.

Many studies on robust regression assume that the absolute mean of the noise is finite, typically represented as noise with a finite $p$-th moment for $p \in [1, \infty)$. For instance, under the assumption of a finite $p$-th moment with $p \in (2,4]$, \cite{feng2015learning} investigate a correntropy-induced regression loss \citep{santamaria2006generalized} and derive error bounds for the $L_2$-distance between the learned regressor and the true regression function. When the noise has finite variance, i.e., $p = 2$, \cite{catoni2012challenging} proposes a log-truncated loss function for mean and variance estimation, demonstrating that the deviation of the estimator is comparable to that of the empirical mean estimator for Gaussian-tailed data, while \cite{xu2020learning} establish excess risk bounds for the empirical log-truncated risk minimizer under the finite variance assumption.
Further research has extended the use of Catoni’s log-truncated loss to cases where the noise has a finite $p$-th moment with $p \in (1,2)$. For example, this loss function has been applied to \textit{least absolute deviation} (\textit{LAD}) regression \citep{chen2021generalized}, mean estimation \citep{lam2021robust}, and multi-armed bandits \citep{lee2020optimal}. Moreover, \cite{xu2023non} propose a generalized form of the log-truncated loss, which is used for solving quantile regression and \textit{generalized linear models} (\textit{GLM}). They establish excess risk bounds for the empirical log-truncated risk minimizer with respect to both the pinball loss and the GLM loss.

In many applications, the noise distribution does not have a finite absolute mean, i.e., the noise has only a finite $p$-th moment for $p \in (0, 1)$, as seen with Cauchy noise and certain Pareto noise distributions. Cauchy noise frequently arises in finance and markets \citep{ljajko2023cauchy}, radar images \citep{karakucs2022cauchy}, low-frequency atmospheric noise \citep{fu2010research}, and underwater acoustic signals \citep{idan2010cauchy, shi2020cauchy}. It is also common in industrial wireless communication systems \citep{laus2018nonlocal, zhang2021measurements}, particularly in massive multiple-input multiple-output systems \citep{gulgun2023massive}. Meanwhile, Pareto noise is often used to model wealth distribution in societies. In such cases, the typical assumption of a finite $p$-th moment with $p \in (1, \infty)$ is violated, rendering existing theoretical guarantees inapplicable. Despite this, the Cauchy loss function \citep{black1996robust} has empirically shown promising results in regression problems under extreme noise conditions \citep{mlotshwa2022cauchy}. However, to the best of our knowledge, no existing work has established error bounds for the empirical Cauchy risk minimizer in non-parametric regression problems when the noise has only a finite $p$-th moment with $p \in (0, 1)$.

In this paper, we propose a generalized Cauchy noise assumption, which assumes that the logarithmic moment of the noise is finite. Since the logarithmic function grows more slowly than any polynomial function, this assumption can be satisfied by noise with a finite $p$-th moment for any $p \in (0, \infty)$. Under this assumption, we find that, unlike the expected absolute loss or Huber loss \citep{huber1992robust}, the expected Cauchy loss \citep{black1996robust} remains finite, and its Bayes function coincides with the true regression function. This confirms that the regression function can be effectively learned by minimizing the Cauchy loss in the presence of various noise types. Based on this insight, we develop a kernel-based regression method called the \textit{kernel Cauchy ridge regressor} (\textit{KCRR}), which minimizes the regularized empirical Cauchy risk within a \textit{reproducing kernel Hilbert space} (\textit{RKHS}).

Since $L_2$-risk is a widely used criterion in robust regression for measuring the $L_2$-distance between the regressor and the true regression function, our goal is to establish the convergence rate of the KCRR with respect to $L_2$-risk. To achieve this, we express the $L_2$-risk of KCRR as the product of two terms: the supremum of the ratio between the $L_2$-risk and the excess Cauchy risk across all regressors, and the excess Cauchy risk of KCRR itself.
To bound the supremum of this risk ratio, we first show that the $L_2$-risk can be bounded by the sum of the excess Cauchy risk and an $L_4$-risk term, which decreases as $\sigma^2$ grows. Upon further analysis, we demonstrate that for large values of $\sigma$, the $L_4$-risk term can be bounded by an $L_2$-risk term. This leads to a calibration inequality, indicating that the $L_2$-risk becomes equivalent to the excess Cauchy risk when $\sigma$ is sufficiently large.
Next, to derive the upper bound of the excess Cauchy risk for KCRR, we decompose it into two components: the sample error and the approximation error. Benefiting from the refined calibration inequality, we show that the Cauchy loss satisfies the variance bound with an optimal exponent, which is crucial for deriving a promising oracle inequality for KCRR. Additionally, since the excess Cauchy risk is smaller than the $L_2$-risk for any $\sigma$, the approximation error with respect to the Cauchy loss can be bounded by the approximation error in terms of $L_2$-risk.
By combining the upper bound of the risk ratio with the excess Cauchy risk of KCRR, we are able to establish an almost minimax-optimal convergence rate for KCRR in terms of $L_2$-risk, given appropriate parameter choices.

The contributions of this paper can be summarized as follows:

\textit{(i)} We introduce a generalized Cauchy noise assumption, which encompasses a broad range of noise distributions, including those without a finite absolute mean, such as Cauchy noise and certain Pareto noise. To tackle the robust regression problem under this challenging noise setting, we propose the KCRR and derive excess Cauchy risk bounds for KCRR. Under the assumption that the regression function is H\"{o}lder smooth, these bounds decrease as the scale parameter $\sigma$ decreases.

\textit{(ii)} Under the generalized Cauchy noise assumption, we establish a link between the excess Cauchy risk and the $L_2$-risk. Specifically, for sufficiently large values of the Cauchy loss scale parameter $\sigma$, minimizing the two risks becomes equivalent. This shows that minimizing the Cauchy loss on noisy samples is equivalent to minimizing the $L_2$-distance between the regressor and the true regression function, making the learning process robust to noise. This equivalence highlights why using the Cauchy loss leads to a resilient regressor in the presence of extreme noise.

\textit{(iii)} Building on these results, we demonstrate that the scale parameter $\sigma$ plays a crucial role in both the excess Cauchy risk bound from \textit{(i)} and the equivalence between the Cauchy risk and $L_2$-risk established in \textit{(ii)}. By selecting an appropriate $\sigma$, we achieve an almost minimax-optimal convergence rate for KCRR in terms of $L_2$-risk, underscoring the robustness of the Cauchy loss in mitigating the effects of extreme noise.

\textit{(iv)} We conduct experiments on synthetic and real-world regression datasets, demonstrating that KCRR outperforms existing methods under various types of noise corruption.

The remainder of this paper is organized as follows: In Section \ref{sec::robustreg}, we define the robust regression problem, introduce the generalized Cauchy noise assumption, explore the properties of the Cauchy loss function, and propose the KCRR method. In Section \ref{sec::mainresults}, we establish the connection between the excess Cauchy risk and $L_2$-risk, and under mild assumptions, derive the almost minimax-optimal convergence rate of KCRR in terms of $L_2$-risk, comparing it with existing rates. Section \ref{sec::ErrorAnalysis} provides a detailed error analysis for the $L_2$-risk of KCRR. Experimental results and empirical comparisons with other loss functions are presented in Section \ref{sec::experiments}. Finally, the proofs for Sections \ref{sec::robustreg}--\ref{sec::ErrorAnalysis} are provided in Section \ref{sec::proofs}, and the paper concludes with Section \ref{sec::conclusion}.

\section{Robust Regression: Kernel Cauchy Ridge Regression} \label{sec::robustreg}

In this section, we first define the robust regression problem in Section \ref{sec::robustregIntro}, outlining the challenges posed by outliers and heavy-tailed noise. In Section \ref{sec::GeneralizedCauchy}, we introduce the generalized Cauchy noise assumption, which provides a framework for modeling such noise. We then delve into the characteristics of the Cauchy loss function in Section \ref{sec::CauchyLoss}, highlighting its robustness against extreme noise and outliers. Finally, in Section \ref{sec::KCRRmethod}, we propose the \textit{kernel Cauchy ridge regression} (\textit{KCRR}) method, which combines the Cauchy loss with kernel-based techniques to provide a powerful approach for robust regression.

\subsection{Robust Regression} \label{sec::robustregIntro}

In this paper, we consider a regression framework where $\mathcal{X} \subset \mathbb{R}^d$ is a compact, non-empty set and $\mathcal{Y} := \mathbb{R}$. The goal of the regression problem is to predict the value of an unobserved response variable $Y \in \mathcal{Y}$ based on the observed value of an explanatory variable $X \in \mathcal{X}$. We formulate the regression model as follows:
\begin{align}\label{equ::regressionmodel}
	Y = f^*(X) + \epsilon,
\end{align}
where $f^*: \mathcal{X} \to \mathcal{Y}$ represents the true regression function with $\|f^*\|_{\infty} < \infty$ and $\epsilon$ denotes the noise variable, which is assumed to be independent of $X$.
In this context, outliers and heavy-tailed noise, represented by the noise variable $\epsilon$, pose significant challenges for learning $f^*$. Outliers can disproportionately influence traditional loss functions such as squared loss, absolute loss, and Huber loss, leading to biased estimates and diminished generalization performance. Moreover, heavy-tailed noise allows for large deviations in $\epsilon$, exacerbating issues related to model stability and performance.
Robust regression methods aim to address these challenges by utilizing loss functions designed to minimize the impact of extreme values. By reducing the influence of outliers and heavy-tailed noise, robust regression ensures more reliable predictions, even in the presence of such disruptions.

Let the joint probability distribution of $(X,Y)$ in \eqref{equ::regressionmodel} on the space $\mathcal{X} \times \mathcal{Y}$ be denoted by $P$. Consider a set of $n$ independent and identically distributed samples $D:=\left\{(X_1, Y_1), \ldots, (X_n, Y_n)\right\}$ drawn from $P$. The objective is to find a regressor $f:\mathcal{X} \to \mathcal{Y}$ that estimates the true regression function $f^*$ based on the observations $D$ generated by model \eqref{equ::regressionmodel}. 
For any measurable function $f: \mathcal{X} \to \mathcal{Y}$, the population risk and empirical risk are defined respectively as follows:
\begin{align*}
	\mathcal{R}_{L,P}(f) := \int_{\mathcal{X} \times \mathcal{Y}} L(y, f(x)) \, dP(x,y), 
	\qquad
	\mathcal{R}_{L,\mathrm{D}}(f) := \frac{1}{n}\sum_{i=1}^n L(Y_i, f(X_i)),
\end{align*}
where $\mathrm{D} := n^{-1}\sum_{i=1}^{n} \delta(X_i,Y_i)$ represents the empirical measure based on the data $D$, and $\delta(X_i,Y_i)$ is the Dirac measure at $(X_i,Y_i)$. 
The Bayes risk, which represents the minimal achievable risk with respect to $P$ and $L$, is given by
\begin{align*}
	\mathcal{R}_{L, P}^{*}:= \inf \{\mathcal{R}_{L, P}(f) \mid f: \mathcal{X} \to \mathcal{Y} \text{ measurable} \}.
\end{align*}
Moreover, a measurable function $f:\mathcal{X} \to \mathcal{Y}$ that satisfies $\mathcal{R}_{L,P}(f) = \mathcal{R}_{L,P}^{*}$ is called a Bayes function for the loss function $L$ and distribution $P$.

Before proceeding, we introduce some notation that will be used throughout this paper.
Let $(\Omega, \mathcal{A}, \nu)$ be a probability space. For $1 \leq p \leq \infty$, we denote by $L_p(\nu)$ the space of measurable functions $g: \Omega \to \mathbb{R}$ with a finite $L_p$-norm. Specifically, for $1 \leq p < \infty$, the $L_p$-norm is defined as
$\|g\|_{L_p(\nu)} := ( \int_{\Omega} |g(x)|^p \, d\nu(x) )^{1/p}$,
and for $p = \infty$, we define
$\|g\|_{\infty} := \inf\{ M > 0 : |g(x)| \leq M \text{ for almost every } x \in \Omega\}$.
The space $L_p(\nu)$, equipped with the norm $\|g\|_{L_p(\nu)}$, forms a Banach space. For any Banach space $E$, we denote its closed unit ball by $B_E$. 
Additionally, we use the notation $a_n \lesssim b_n$ (or $a_n \gtrsim b_n$) to indicate that there exists a constant $c > 0$ such that $a_n \leq c b_n$ (or $a_n \geq c b_n$) for all $n \in \mathbb{N}$. We write $a_n \asymp b_n$ if there exists a constant $c \in (0, 1]$ such that $c b_n \leq a_n \leq c^{-1} b_n$ for all $n \in \mathbb{N}$. 
For a natural number $n$, we denote $[n] := \{1, 2, \ldots, n\}$. Lastly, for $a, b \in \mathbb{R}$, we define $a \vee b := \max\{a, b\}$, representing the larger of the two values.

\subsection{Generalized Cauchy Noise} \label{sec::GeneralizedCauchy}

In the existing literature on robust regression, it is commonly assumed that a given $p$-th moment of the noise variable $\epsilon$ in \eqref{equ::regressionmodel} is finite, meaning that $\mathbb{E} |\epsilon|^p < \infty$, for some $p \in (1, \infty)$. For example, \cite{feng2015learning, Brownlees2015empirical} assume $p \in (2, 4]$, while \cite{xu2020learning, zhang2018ell_1} consider the case where $p = 2$, and \cite{xu2023non,shen2021deep,shen2021robust} extend the analysis to $p \in (1, 2)$. These results apply to noise distributions with fast-decaying tails, such as Gaussian noise, and moderately heavy-tailed distributions like the Student-$t$ noise with degrees of freedom strictly greater than $2$. However, as discussed in Section \ref{sec::Intro}, many real-world scenarios involve data contaminated by extremely heavy-tailed noise, such as Cauchy noise or certain types of Pareto noise, which do not have finite variance or even a finite absolute mean. These cases, illustrated in the following examples, fall outside the scope of traditional assumptions.

\begin{example}[Cauchy Noise]\label{ex::cauchy}
	The probability density function of the Cauchy distribution is given by
	\begin{align*}
		p(\epsilon) = \frac{1}{\pi s(1+ \epsilon^2 / s^2)},
		\qquad 
		\epsilon \in \mathbb{R},
	\end{align*}
	where $s \in (0, \infty)$ is the scale parameter. Notably, for any $p \in [1, \infty)$, the absolute moment $\mathbb{E} |\epsilon|^p = \infty$, meaning that neither the mean nor variance of Cauchy noise are finite. However, for $p \in (0, 1)$, we have $\mathbb{E} |\epsilon|^p = s^p \sec(\pi p / 2)$, indicating that the Cauchy noise satisfies the finite $p$-th moment assumption only when $p \in (0, 1)$.
\end{example}

\begin{example}[Pareto Noise]\label{ex::pareto}
	The probability density function of the (generalized) Pareto noise distribution is given by
	\begin{align*}
		p(\epsilon) = \frac{1}{2s} \left(1 + \frac{\zeta |\epsilon|}{s}\right)^{-(1 + 1/\zeta)},
		\qquad \epsilon \in \mathbb{R},
	\end{align*}
	where $s \in (0, \infty)$ is the scale parameter and $\zeta \in (0, \infty)$ is the shape parameter. For Pareto noise with shape parameter $\zeta$, the $p$-th moment exists only if $\zeta \in (0, 1/p)$. In particular, when $\zeta \in [1/2, \infty)$, the variance becomes infinite, and when $\zeta \in [1, \infty)$, the absolute mean is also infinite.
\end{example}

It is important to note that both Cauchy noise and Pareto noise with $\zeta \in [1, \infty)$ do not satisfy the finite $p$-th moment assumption for $p \in [1, \infty)$; they only meet this condition for some $p \in (0, 1)$. To account for all noise distributions across the range $p \in (0, \infty)$, we propose the following assumption.

\begin{assumption}[Generalized Cauchy Noise]\label{ass::logmoment}
	We assume that the noise variable $\epsilon$ in model \eqref{equ::regressionmodel} has a finite logarithmic moment, specifically, $\mathbb{E} \bigl( \log(1 + \epsilon^2) \bigr) < \infty$.
\end{assumption}

Clearly, because the logarithmic function grows more slowly than any polynomial function, Assumption \ref{ass::logmoment} encompasses noise distributions with finite $p$-th moments for all $p \in (0, \infty)$, as shown in the following lemma.

\begin{lemma}\label{lem::stronger}
	For any $p>0$, if a noise distribution has a finite $p$-th moment, i.e., $\mathbb{E} |\epsilon|^p < \infty$, then it satisfies Assumption \ref{ass::logmoment}, i.e., $\mathbb{E} \bigl( \log(1 + \epsilon^2) \bigr) < \infty$.
\end{lemma}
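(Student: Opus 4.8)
The plan is to reduce the claim to a single pointwise inequality of the form $\log(1+x^2) \le C_p\,(1 + |x|^p)$ valid for all $x \in \mathbb{R}$, and then integrate against the law of $\epsilon$. Once such an inequality is in hand, monotonicity of the expectation immediately gives $\mathbb{E}\bigl(\log(1+\epsilon^2)\bigr) \le C_p\bigl(1 + \mathbb{E}|\epsilon|^p\bigr) < \infty$, which is exactly the assertion. So the entire content of the lemma is the pointwise domination.

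To obtain that pointwise bound I would invoke the elementary inequality $\log u \le (u^s - 1)/s \le u^s/s$, valid for every $u > 0$ and every $s > 0$; this follows by noting that both sides of $\log u \le (u^s-1)/s$ agree at $u = 1$ and comparing derivatives ($1/u$ versus $u^{s-1}$). Applying it with $u = 1 + x^2$ and $s = p/2$ yields $\log(1+x^2) \le \tfrac{2}{p}(1+x^2)^{p/2}$. It then remains only to dominate $(1+x^2)^{p/2}$ by $1 + |x|^p$ up to a multiplicative constant, which is the standard power-sum inequality $(a+b)^r \le \max\{1, 2^{r-1}\}(a^r+b^r)$ used with $a = 1$, $b = x^2$, and $r = p/2$. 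This gives $(1+x^2)^{p/2} \le \max\{1, 2^{p/2-1}\}(1 + |x|^p)$, and combining the two steps produces an explicit admissible constant $C_p = \tfrac{2}{p}\max\{1, 2^{p/2-1}\}$.

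As an alternative route, one could split $\mathbb{R}$ into the regions $|x| \le 1$ and $|x| > 1$: on the compact region $\log(1+x^2) \le \log 2$ is bounded, while on $|x| > 1$ the map $x \mapsto \log(1+x^2)/|x|^p$ is continuous and tends to $0$ as $|x| \to \infty$ (since the logarithm grows more slowly than any positive power), hence is bounded; this again delivers a pointwise estimate $\log(1+x^2) \le \log 2 + C_p |x|^p$. Either version feeds directly into the integration step above.

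I do not anticipate a genuine obstacle, since the statement merely formalizes the slogan that logarithms grow more slowly than any polynomial. The only point requiring mild care is to keep the constant's dependence on $p$ explicit and to ensure the bound holds uniformly over all of $\mathbb{R}$; in particular, near $x = 0$ the additive constant (the $1$, or $\log 2$) absorbs the bounded behaviour of $\log(1+x^2)$, whereas the heavy-tail regime $|x| \to \infty$ is precisely where the finiteness of $\mathbb{E}|\epsilon|^p$ is actually consumed.
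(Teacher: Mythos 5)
Your proof is correct, but it takes a genuinely different route from the paper's. The paper insists on the pointwise bound $\log(1+x^2) \leq x^p$ with leading constant exactly $1$, which only holds above an enormous explicit threshold $x \geq 2^{64} \vee e^{(1/p)^{4/p}}$; establishing this requires a case split ($p \leq 1/4$ versus $p > 1/4$), the auxiliary functions $g(t) = t^p - 4\log t$ and $h(t) = t^{1/4} - 4\log t$, monotonicity arguments, and then a truncation of the expectation at that threshold, with the bounded region contributing a (huge) constant and the tail contributing $\mathbb{E}|\epsilon|^p$. You instead allow a multiplicative constant $C_p$ and obtain a bound valid on all of $\mathbb{R}$ at once, via the elementary inequality $\log u \leq (u^s-1)/s \leq u^s/s$ (correct: both sides of the first inequality agree at $u=1$ and the difference has its minimum there) combined with the power-sum inequality $(a+b)^r \leq \max\{1,2^{r-1}\}(a^r+b^r)$; the integration step is then a one-line monotonicity argument with no truncation, giving $\mathbb{E}\log(1+\epsilon^2) \leq \tfrac{2}{p}\max\{1,2^{p/2-1}\}\bigl(1+\mathbb{E}|\epsilon|^p\bigr) < \infty$. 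Both proofs formalize the same slogan---the logarithm is eventually dominated by any positive power---but your execution is considerably shorter, avoids all the explicit threshold calculus, and yields a clean explicit constant, whereas the paper's extra effort to force the leading constant to be $1$ buys nothing for the lemma's conclusion, since only finiteness is used. Your alternative compactness argument (boundedness of $\log(1+x^2)/|x|^p$ on $\{|x|\geq 1\}$ via continuity and the limit $0$ at infinity) is also valid, at the cost of a non-explicit constant.
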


For simplicity, we introduce the following assumptions: the noise is symmetrically distributed around zero and exhibits monotonically decreasing tails.

\begin{assumption}[Symmetric Noise]\label{ass::symmetry}
	Assume that the noise variable $\epsilon$ is symmetrically distributed, meaning its probability density function $p_{\epsilon}$ satisfies $p_{\epsilon}(t) = p_{\epsilon}( -t)$ for any $t \in \mathbb{R}$.
\end{assumption}

Assumption \ref{ass::symmetry} is clearly met by many commonly used noise distributions, such as Gaussian, Student-$t$, Laplace, and Cauchy noise. This symmetry assumption is frequently employed in robust regression studies (see, e.g., \cite{d2021consistent,Convergence2014Tsakonas}).

\begin{assumption}[Monotonically Decreasing Tails]\label{ass::decreasingtails}  
	The noise variable $\epsilon$ is assumed to have monotonically decreasing tails. Specifically, for any $t, t' \in \mathbb{R}$ such that $0 \leq |t| < |t'|$, it holds that $p_{\epsilon}(t) > p_{\epsilon}(t')$.  
\end{assumption}

It is also evident that many commonly used and widely studied noise distributions satisfy Assumption \ref{ass::decreasingtails}. Examples include Gaussian noise, Student-$t$ noise, Cauchy noise, and Pareto noise.

\subsection{The Cauchy Loss Function} \label{sec::CauchyLoss}

As demonstrated in Lemma \ref{lem::stronger}, certain noise distributions satisfy Assumption \ref{ass::logmoment} but do not possess finite variance or a finite absolute mean, such as Examples \ref{ex::cauchy} and \ref{ex::pareto}. For such noise, commonly used robust regression loss functions, like the absolute loss and Huber loss, become ineffective since the associated risks are infinite.

In this paper, we explore the Cauchy loss function \citep{black1996robust}, which offers greater robustness against outliers compared to traditional loss functions. The Cauchy loss is defined as
\begin{align}\label{equ::cauchyloss}
	L(y, f(x)) = \sigma^2 \cdot \log \left( 1 + \frac{(y - f(x))^2}{\sigma^2} \right),
\end{align}
where $\sigma$ is a parameter that controls the spread of the loss function. For small residuals, $y - f(x)$, the Cauchy loss behaves similarly to the square loss, but for large residuals, it grows logarithmically, reducing the influence of extreme noise. As shown in Figure \ref{fig::compare_loss}, smaller values of $\sigma$ yield smaller Cauchy losses for a given residual, while larger values cause the Cauchy loss to gradually approach the square loss.

The following lemma demonstrates that, under the generalized Cauchy noise Assumption~\ref{ass::logmoment}, the Cauchy risk $\mathcal{R}_{L,P}(f)$ is always finite for any bounded regressor $f$.
\begin{lemma}[Finite Risk]\label{lem::finiterisk}
	Let Assumption \ref{ass::logmoment} hold, and let $L$ represent the Cauchy loss as defined in \eqref{equ::cauchyloss}. Under these conditions, for any bounded regressor $f: \mathcal{X} \to \mathcal{Y}$, its corresponding Cauchy risk $\mathcal{R}_{L,P}(f) = \mathbb{E}_P L(Y, f(X))$ is guaranteed to be finite, i.e.,
	$\mathcal{R}_{L,P}(f) < \infty$.
\end{lemma}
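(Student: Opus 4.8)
The plan is to reduce the Cauchy risk to the logarithmic moment of the noise via two elementary inequalities and monotonicity of the logarithm. First I would rewrite the residual in terms of the noise: since $Y = f^*(X) + \epsilon$, for any regressor $f$ we have $Y - f(X) = \bigl(f^*(X) - f(X)\bigr) + \epsilon$. Because $\|f^*\|_\infty < \infty$ and $f$ is bounded, the function $g(x) := f^*(x) - f(x)$ satisfies $\|g\|_\infty \leq M$ with $M := \|f^*\|_\infty + \|f\|_\infty < \infty$. As the Cauchy loss is nonnegative, it then suffices to dominate the integrand pointwise by an integrable function and to invoke Assumption \ref{ass::logmoment}.

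Next I would control the squared residual using $(a+b)^2 \leq 2a^2 + 2b^2$, which gives $(Y - f(X))^2 \leq 2 g(X)^2 + 2\epsilon^2 \leq 2M^2 + 2\epsilon^2$ pointwise, and hence
\begin{align*}
1 + \frac{(Y - f(X))^2}{\sigma^2} \leq 1 + \frac{2M^2}{\sigma^2} + \frac{2}{\sigma^2}\,\epsilon^2 .
\end{align*}
The key step is to absorb the right-hand side into a constant multiple of $1 + \epsilon^2$: setting $C := \max\bigl\{ 1 + 2M^2/\sigma^2,\ 2/\sigma^2 \bigr\}$, both the constant term and the coefficient of $\epsilon^2$ are bounded by $C$, so the bound is at most $C\,(1 + \epsilon^2)$.

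Combining these observations and applying the logarithm yields the pointwise estimate
\begin{align*}
L(Y, f(X)) = \sigma^2 \log\!\left( 1 + \frac{(Y - f(X))^2}{\sigma^2} \right) \leq \sigma^2 \log C + \sigma^2 \log\bigl(1 + \epsilon^2\bigr),
\end{align*}
where the multiplicative bound $C\,(1+\epsilon^2)$ has been turned into an additive one. Taking expectations over $P$ then gives $\mathcal{R}_{L,P}(f) \leq \sigma^2 \log C + \sigma^2\, \mathbb{E}\bigl( \log(1 + \epsilon^2) \bigr)$, which is finite by Assumption \ref{ass::logmoment}. Since the integrand is nonnegative, the risk is well defined, and this upper bound establishes the claim.

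I do not anticipate a genuine obstacle; the only point requiring minor care is the passage from $\log\bigl(1 + c_1 + c_2 \epsilon^2\bigr)$ to $\log C + \log(1 + \epsilon^2)$, which is precisely the role of the constant $C$. The one subtlety worth flagging is that $C$, and therefore the resulting bound, grows without limit as $\sigma \to 0$; however, for every fixed $\sigma > 0$ the risk remains finite, which is all that the lemma asserts.
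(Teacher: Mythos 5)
Your proof is correct, and it follows the same overall strategy as the paper's: write $Y - f(X) = (f^*(X)-f(X)) + \epsilon$, apply $(a+b)^2 \leq 2(a^2+b^2)$ to separate a bounded term from $\epsilon^2$, and reduce finiteness of the risk to Assumption \ref{ass::logmoment}. Where you genuinely diverge is in the final step of extracting $\log(1+\epsilon^2)$ from $\log\bigl(1 + 2M^2/\sigma^2 + 2\epsilon^2/\sigma^2\bigr)$. The paper factorizes exactly, writing the loss bound as
\begin{align*}
\sigma^2 \log \biggl( 1 + \frac{4 (\|f\|_{\infty}^2 + \|f^*\|_{\infty}^2)}{\sigma^2} \biggr) + \sigma^2 \log \biggl( 1 + \frac{2(f^*(x) - y)^2}{\sigma^2 + 4(\|f\|_{\infty}^2 + \|f^*\|_{\infty}^2)} \biggr),
\end{align*}
and must then split into two cases according to whether $\sigma^2 + 4(\|f\|_{\infty}^2 + \|f^*\|_{\infty}^2) \geq 2$: when it is, the inner coefficient is at most $1$ and monotonicity suffices; when it is not, the paper invokes the inequality $a \log(1+t/a) \leq \log(1+t)$ for $a \in (0,1]$ to recover a bound by $2\log(1+\epsilon^2)$. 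Your trick of absorbing both the additive constant and the coefficient of $\epsilon^2$ into $C := \max\{1 + 2M^2/\sigma^2,\ 2/\sigma^2\}$, so that the argument of the logarithm is at most $C(1+\epsilon^2)$, handles every $\sigma > 0$ uniformly and eliminates the case analysis entirely; the price is a marginally looser constant (irrelevant here, since the lemma is purely qualitative), and the payoff is a shorter, cleaner argument. Both proofs correctly note that the loss is nonnegative, so the expectation is well defined and the pointwise domination closes the argument.
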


Unlike the infinite \textit{mean squared error} (\textit{MSE}) or \textit{mean absolute error} (\textit{MAE}) in the presence of extreme noise or outliers, the finiteness of the Cauchy risk highlights the robustness of the Cauchy loss function. This demonstrates that even under Cauchy noise and Pareto noise in Examples \ref{ex::cauchy} and \ref{ex::pareto}, the Cauchy loss admits a well-defined empirical risk minimization framework.

The key reason for finite Cauchy risk lies in how the Cauchy loss penalizes large prediction errors $|y - f(x)|$ logarithmically, in contrast to the quadratic penalty of MSE or the linear penalty of MAE. As a result, extreme responses exert less influence on the Cauchy risk, making models trained under this loss function more resistant to the impact of outliers.

The following lemma establishes that the regression function $f^*$ is the exact minimizer of the Cauchy risk.

\begin{lemma}[Optimality]\label{lem::optimality}
	Under Assumptions \ref{ass::logmoment},  \ref{ass::symmetry} and \ref{ass::decreasingtails}, with $L$ defined as the Cauchy loss in \eqref{equ::cauchyloss}, the true regression function $f^*$ attains the Bayes risk, i.e., 
	$\mathcal{R}_{L, P}(f^*) 
	= \mathcal{R}_{L, P}^*$.
\end{lemma}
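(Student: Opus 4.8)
The plan is to reduce the optimality of $f^*$ to a pointwise (in $x$) one-dimensional minimization. Since $\epsilon$ is independent of $X$, for any bounded measurable $f$ I would write the risk as an iterated expectation,
\[
\mathcal{R}_{L,P}(f) = \int_{\mathcal{X}} \mathbb{E}_{\epsilon}\left[\sigma^2\log\left(1 + \frac{(f^*(x)+\epsilon-f(x))^2}{\sigma^2}\right)\right]dP_X(x),
\]
where $P_X$ is the marginal of $X$, so that the inner conditional risk depends on $f$ only through the offset $u := f(x)-f^*(x)$. Defining $h(u) := \mathbb{E}_{\epsilon}\bigl[\log(1+(\epsilon-u)^2/\sigma^2)\bigr]$, which is finite for every $u\in\mathbb{R}$ by Assumption \ref{ass::logmoment} together with the bound $(\epsilon-u)^2\le 2\epsilon^2+2u^2$ (cf.\ Lemma \ref{lem::finiterisk}), it suffices to prove that $h$ attains its global minimum uniquely at $u=0$. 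Indeed, this gives $\mathcal{R}_{L,P}(f)=\int \sigma^2 h(f(x)-f^*(x))\,dP_X(x)\ge \sigma^2 h(0)=\mathcal{R}_{L,P}(f^*)$ for every $f$, and since this lower bound is realized by $f^*$ itself, the pointwise minimization transfers to the global one.

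To analyze $h$, I would first record that Assumption \ref{ass::symmetry} makes $h$ even (substitute $\epsilon\to-\epsilon$). Next I would differentiate under the integral sign; this is justified because the $u$-derivative of the integrand is $\psi_u(\epsilon):=2(u-\epsilon)/(\sigma^2+(\epsilon-u)^2)$, which is uniformly bounded in absolute value by $1/\sigma$ and hence dominated with respect to the probability measure of $\epsilon$. After the substitution $v=\epsilon-u$ and folding the two half-lines using the oddness of the kernel $K(v):=2v/(\sigma^2+v^2)$, this yields
\[
h'(u) = \int_0^{\infty} \frac{2v}{\sigma^2+v^2}\,\bigl[\,p_{\epsilon}(u-v)-p_{\epsilon}(u+v)\,\bigr]\,dv.
\]

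The crux is then the sign of the bracketed density difference. For $u>0$ and $v>0$ the triangle inequality gives $|u-v| < u+v = |u+v|$, so the monotone-decreasing-tails Assumption \ref{ass::decreasingtails} yields $p_{\epsilon}(u-v) > p_{\epsilon}(u+v)$ for every such $v$. As $K(v)>0$ on $(0,\infty)$, this forces $h'(u)>0$ for all $u>0$, and by evenness $h'(u)<0$ for all $u<0$. Hence $h$ is strictly decreasing on $(-\infty,0]$ and strictly increasing on $[0,\infty)$, so $u=0$ is its unique global minimizer, which completes the argument.

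The main obstacle I anticipate is that $\phi(v):=\log(1+v^2/\sigma^2)$ is not convex: a direct computation shows $\phi''(v)=2(\sigma^2-v^2)/(\sigma^2+v^2)^2$ changes sign at $v=\pm\sigma$. Consequently the tempting route of writing, via symmetry, $h(u)-h(0)=\tfrac12\,\mathbb{E}\bigl[\phi(\epsilon-u)+\phi(\epsilon+u)-2\phi(\epsilon)\bigr]$ and invoking midpoint convexity of $\phi$ to conclude nonnegativity simply fails. The derivative-based argument above circumvents this by exploiting the interaction between the odd score kernel $K$ and the unimodal, symmetric density rather than any convexity of the loss; the only remaining technical care concerns the justification of differentiation under the integral (handled by the uniform bound $|\psi_u|\le 1/\sigma$) and the elementary fact that minimizing the integrand for $P_X$-almost every $x$ minimizes the integral, which is exactly what the choice $f=f^*$ achieves.
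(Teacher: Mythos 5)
Your proposal is correct and follows essentially the same route as the paper's proof: both reduce the problem to minimizing the inner risk as a one-dimensional function of the offset $u = f(x)-f^*(x)$, differentiate under the integral, and use Assumption \ref{ass::symmetry} together with the monotone-decreasing-tails Assumption \ref{ass::decreasingtails} to show the derivative is strictly positive for $u>0$ and strictly negative for $u<0$, so that $u=0$ is the unique minimizer, after which integration over $P_X$ finishes the argument. Your write-up is somewhat more careful than the paper's (explicit domination bound $|\psi_u|\le 1/\sigma$ justifying the differentiation, and the remark on why convexity-based shortcuts fail), but the underlying idea is identical.
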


By combining Lemmas \ref{lem::finiterisk} and \ref{lem::optimality}, we conclude that under the assumption of symmetric noise with a finite logarithm moment, the true regression function $f^*$ consistently yields a smaller Cauchy risk than any other regressor $f: \mathcal{X} \to \mathbb{R}$. Specifically, $\mathcal{R}_{L, P}(f^*) \leq \mathcal{R}_{L, P}(f) < \infty$ for any bounded $f$. This provides a theoretical guarantee that minimizing the finite Cauchy risk leads to the recovery of the true regression function $f^*$.

\vspace{2mm}
\noindent
\textbf{Comparison with the Correntropy Loss}.  
The correntropy loss \citep{santamaria2006generalized} is defined as 
\begin{align}\label{equ::correntropyloss}
	L_{\mathrm{corr}}(y, f(x)) := \sigma^2 \cdot \left( 1 - \exp \left( - \frac{(y - f(x))^2}{\sigma^2} \right) \right),
\end{align}
where $\sigma \in (0, \infty)$ is the scale parameter. As illustrated in Figure \ref{fig::compare_loss}, for small residuals $y - f(x)$, the correntropy loss behaves similarly to the square loss but asymptotically approaches $\sigma$ as the residual increases. Its bounded nature ensures robustness against large errors, as it does not grow indefinitely with increasing residuals. However, its performance heavily depends on the choice of the bandwidth parameter $\sigma$. If $\sigma$ is too small, the loss function becomes overly sensitive to minor noise, whereas a large $\sigma$ may cause the model to ignore significant outliers. In contrast, the Cauchy loss in \eqref{equ::cauchyloss} strikes a more consistent balance between robustness to outliers and sensitivity to small errors.

\begin{figure}
	\centering
	\includegraphics[width=0.86\linewidth]{./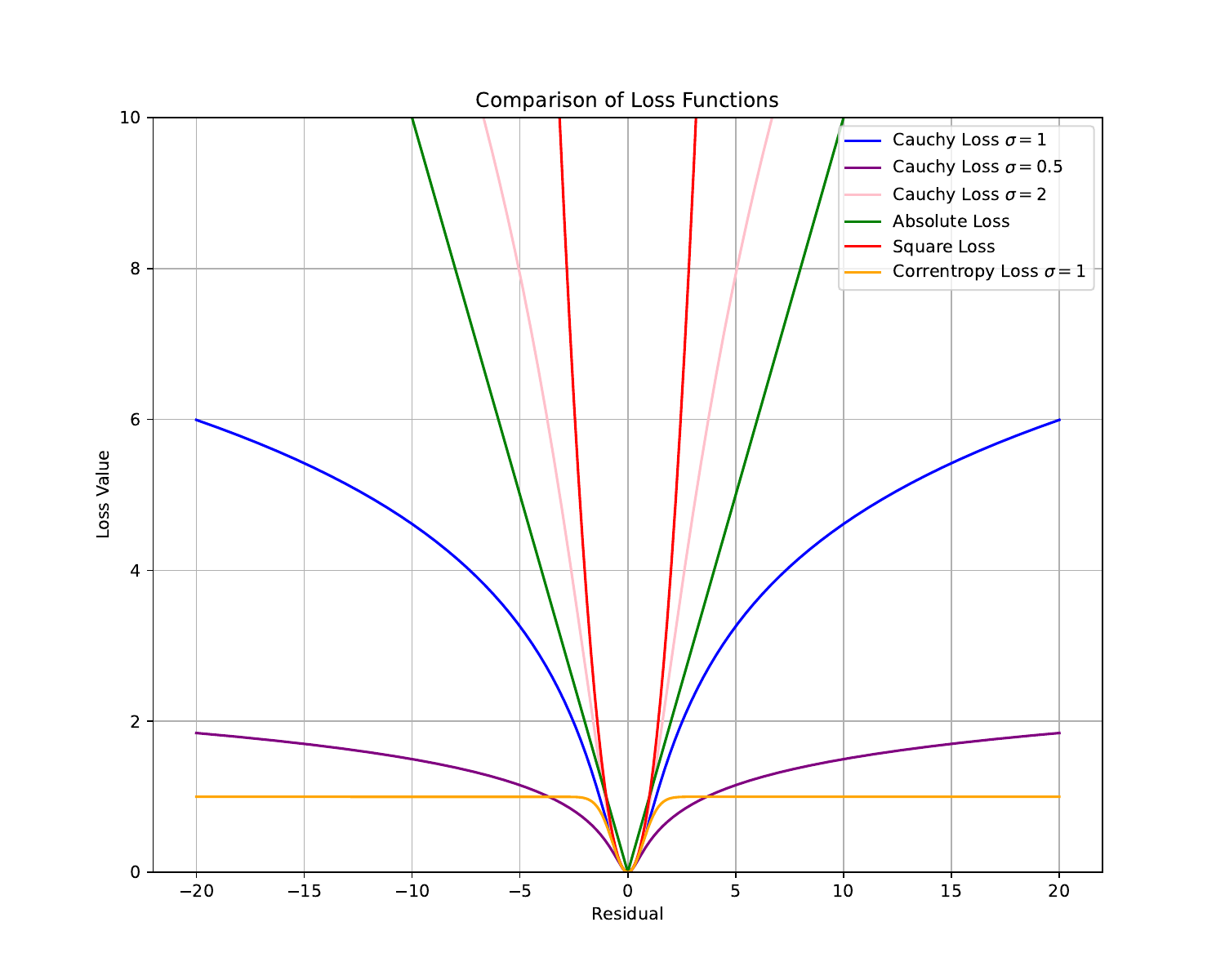}
	\captionsetup{justification=centering}
	\vspace{-6mm}
	\caption{Comparison between different robust loss functions.}
	\label{fig::compare_loss}
\end{figure}

\vspace{2mm}
\noindent
\textbf{Comparison with the Log-Truncated Loss}.  
Log-truncated loss functions, introduced by \cite{catoni2012challenging} for robust learning, take the form 
\begin{align*}
	\ell_{\psi_{\lambda}, s, L}(y, f(x)) := s \cdot \psi_{\lambda} \left( \frac{L(y, f(x))}{s} \right),
\end{align*}
where $\psi_{\lambda}$ is a non-increasing function satisfying 
\begin{align}\label{eq::PsiLambda}
	\log(1 - x + \lambda(|x|)) \leq \psi_{\lambda}(x) \leq \log(1 + x + \lambda(|x|)), \qquad x\geq 0.
\end{align}
In particular, \cite{catoni2012challenging} considers $\lambda(|x|) := |x|^{\beta}/\beta$ with $\beta = 2$, while \cite{xu2023non} explore $\beta \in (1, 2)$. 
When we take $L$ as the absolute loss, $L_{\mathrm{abs}}(y, f(x)) := |y - f(x)|$, set $\beta = 2$, and choose $\psi_{\lambda}(x) := \log ( 1 + x^2 / 2 )$, which satisfies \eqref{eq::PsiLambda}, the log-truncated loss becomes 
\begin{align}\label{eq::logtruncated}
	\ell_{\psi_{\lambda}, s, L_{\mathrm{abs}}}(y, f(x)) = s \cdot \log\left(1 + \frac{(y - f(x))^2}{2s^2}\right),
\end{align}
which is the Cauchy loss given in \eqref{equ::cauchyloss} with $\sigma = s$. 
Compared to the Cauchy loss, the log-truncated loss introduces additional flexibility through the parameter $\beta$, allowing fine-tuning for different noise distributions or robustness needs. However, this added flexibility comes at the cost of increased complexity in hyperparameter tuning and reduced interpretability.

\subsection{Kernel Cauchy Ridge Regression} \label{sec::KCRRmethod}

In this paper, we explore a kernel-based regressor that minimizes the Cauchy loss function to address the robust regression problem. Kernel-based regression is a non-parametric technique that leverages kernel functions to model the relationship between response and explanatory variables. Specifically, let $H$ represent the \textit{reproducing kernel Hilbert space} (\textit{RKHS}) induced by the Gaussian kernel function $k(x,x') := \exp(-\|x - x'\|_2^2 / \gamma^2)$ for $x, x' \in \mathcal{X}$, where $\gamma$ is the bandwidth parameter. For any regressor $f \in H$ and a clipping parameter $M > 0$, the clipped regressor is defined as 
\begin{align} \label{eq::clipping}
	\wideparen{f}(x) = 
	\begin{cases}
		-M & \text{ if } f(x) < -M,
		\\
		f(x) & \text{ if } f(x) \in [-M, M],
		\\
		M & \text{ if } f(x) > M.
	\end{cases}
\end{align}
The parameter $M$ may vary depending on the sample size $n$. By selecting $M$ larger than $\|f^*\|_{\infty}$ (the true regression function's range), the clipped regressor $\wideparen{f}$ is able to fully capture the behavior of $f^*$, thereby avoiding underestimation and preserving the accuracy of the regression model. The clipping operation ensures the regressor $\wideparen{f}$ does not produce extreme or nonsensical values, especially in the presence of unbounded noise or outlier data.

The following lemma demonstrates that the Cauchy loss is clippable, meaning that the Cauchy risk does not degrade after applying clipping, provided that the clipping parameter $M \geq \|f^*\|_{\infty}$.

\begin{lemma}\label{lem::validateclip}
	Let Assumptions \ref{ass::logmoment}, \ref{ass::symmetry}, and \ref{ass::decreasingtails} hold, and let $L$ denote the Cauchy loss function with $f^*$ as the true regression function. Additionally, assume the clipping parameter $M \geq \|f^*\|_{\infty}$. Then, for any regressor $f$, we have
	$\mathcal{R}_{L,P}(\wideparen{f}) \leq  \mathcal{R}_{L,P}(f)$.
\end{lemma}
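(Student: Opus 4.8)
The plan is to reduce the claim to a pointwise statement about the inner (conditional) risk and then to show that this inner risk is unimodal with its minimizer at $f^*(x)$, so that clipping only ever moves the argument closer to the minimizer. First I would write the Cauchy risk as an integral of conditional risks: since the clipping in \eqref{eq::clipping} acts pointwise in $x$ and the loss \eqref{equ::cauchyloss} depends on $x$ only through the value $f(x)$, we have $\mathcal{R}_{L,P}(f)=\int_{\mathcal{X}} C_x(f(x))\,dP_X(x)$, where $P_X$ is the marginal of $X$ and $C_x(t):=\mathbb{E}[L(Y,t)\mid X=x]=\mathbb{E}_{\epsilon}\,\sigma^2\log\!\bigl(1+(f^*(x)+\epsilon-t)^2/\sigma^2\bigr)$ is finite by Lemma~\ref{lem::finiterisk}. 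Because the risks of $\wideparen{f}$ and $f$ differ only through the integrands $C_x(\wideparen{f}(x))$ and $C_x(f(x))$, it suffices to prove $C_x(\wideparen{f}(x))\le C_x(f(x))$ for $P_X$-almost every $x$ and then integrate.

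Next I would fix $x$, abbreviate $g:=f^*(x)$, and note $|g|\le\|f^*\|_{\infty}\le M$, so $g\in[-M,M]$. The central step is to establish that $C_x$ is nonincreasing on $(-\infty,g]$ and nondecreasing on $[g,\infty)$. Writing $\phi(v):=\sigma^2 v/(\sigma^2+v^2)$, which satisfies $|\phi(v)|\le\sigma/2$, dominated convergence justifies differentiating under the expectation and yields $C_x'(t)=-2\,\mathbb{E}_{\epsilon}[\phi(g-t+\epsilon)]=:-2\,\Psi(g-t)$. I would then prove the sign property $\operatorname{sign}\Psi(u)=\operatorname{sign}(u)$. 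Substituting $w=u+\epsilon$ and using that $\phi$ is odd together with the symmetry of $p_{\epsilon}$ (Assumption~\ref{ass::symmetry}) rewrites the integral as $\Psi(u)=\int_0^{\infty}\phi(w)\,[\,p_{\epsilon}(w-u)-p_{\epsilon}(w+u)\,]\,dw$. For $u>0$ and $w>0$ one has $\phi(w)>0$ and $|w-u|<w+u$, so the strictly decreasing tails (Assumption~\ref{ass::decreasingtails}) force $p_{\epsilon}(w-u)>p_{\epsilon}(w+u)$, making the integrand positive; hence $\Psi(u)>0$, and by oddness $\Psi(u)<0$ for $u<0$. Thus $C_x'(t)>0$ for $t>g$ and $C_x'(t)<0$ for $t<g$, giving the claimed unimodality with minimizer $g=f^*(x)$ (refining the pointwise content of Lemma~\ref{lem::optimality}).

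Finally I would combine the two ingredients. If $f(x)\in[-M,M]$ then $\wideparen{f}(x)=f(x)$ and the inequality is trivial. If $f(x)>M\ge g$, then both $M$ and $f(x)$ lie in $[g,\infty)$ with $M<f(x)$, so the monotonicity just established gives $C_x(\wideparen{f}(x))=C_x(M)\le C_x(f(x))$; the case $f(x)<-M\le g$ is symmetric, using that $C_x$ is nonincreasing on $(-\infty,g]$. Integrating the pointwise inequality against $P_X$ then completes the proof.

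The hard part will be the sign property of $\Psi$. Since $\phi$ is a redescending influence function (increasing up to $v=\sigma$ and decreasing thereafter), the Cauchy loss is nonconvex and pointwise clippability of $L$ \emph{fails}; the sign of $\mathbb{E}_{\epsilon}[\phi(u+\epsilon)]$ is therefore not evident and genuinely needs both the symmetry and the monotone-decreasing-tail assumptions. The substitution $w=u+\epsilon$, which turns $\Psi(u)$ into a manifestly sign-definite integral, is the crux on which the whole argument rests.
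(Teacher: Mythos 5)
Your proof is correct and follows essentially the same route as the paper: reduce the claim to the inner (conditional) risk $C_x(t)=g(f^*(x)-t)$, establish its unimodality with minimizer $f^*(x)$ by a derivative-sign argument that pairs $w$ with $-w$ and invokes Assumptions \ref{ass::symmetry} and \ref{ass::decreasingtails}, and finish with a case analysis showing clipping only moves $f(x)$ toward the minimizer. The sole difference is organizational: the paper simply reuses the monotonicity property \eqref{eq::monotonicgu} already proved inside Lemma \ref{lem::optimality}, whereas you re-derive it inline via your $\Psi$, and that derivation coincides with the paper's argument there.
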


As demonstrated in Lemma \ref{lem::validateclip}, the Cauchy loss satisfies the clipping condition \citep{steinwart2006oracle} under Assumptions \ref{ass::logmoment}, \ref{ass::symmetry} and \ref{ass::decreasingtails}. Specifically, for any regressor $f$, the risk of the clipped regressor $\wideparen{f}$ is no greater than that of the original regressor $f$, i.e., $\mathcal{R}_{L,P}(\wideparen{f}) \leq \mathcal{R}_{L,P}(f)$. This shows that applying the clipping operation to a regressor can result in a reduced or unchanged Cauchy risk.

Given a regularization parameter $\lambda > 0$, the \textit{kernel Cauchy ridge regressor} (\textit{KCRR}), denoted by $\wideparen{f}_D$, is obtained by minimizing the empirical Cauchy risk, as follows:
\begin{align}\label{eq::KCRR}
	f_D \in \argmin_{f \in H} \ \lambda \|f\|_H^2 + 
	\mathcal{R}_{L,\mathrm{D}}(\wideparen{f}).
\end{align}
Here, the first term $\lambda \|f\|_H^2$ controls the model complexity and the second term $\mathcal{R}_{L,\mathrm{D}}(\wideparen{f})$ represents the clipped empirical Cauchy risk, ensuring robustness against outliers. Thus, KCRR belongs to the class of \textit{clipped regularized empirical risk minimization} (\textit{CR-ERM}) \cite[Definition 7.18]{steinwart2008support}, which enhances traditional \textit{regularized empirical risk minimization} (\textit{RERM}) by applying a clipping operation. This clipping reduces the influence of outliers or noise, resulting in improved generalization properties. In KCRR, the robustness is further strengthened by integrating the clipping mechanism with the robust Cauchy loss function.

\section{Main Results and Statements} \label{sec::mainresults}

In this section, we aim to establish the convergence rates of KCRR in terms of $L_2$-risk. The $L_2$-risk measures the $L_2$-norm distance between $\wideparen{f}_D$ and the true regression function $f^*$, denoted as $\|\wideparen{f}_D - f^*\|_{L_2(P_X)}$, where $P_X$ is the marginal distribution of $P$ on $X$. This metric directly quantifies the mean squared distance between the regressor $\wideparen{f}_D$ and the regression function $f^*$. 
One of the key advantages of the $L_2$-risk metric is that it only depends on the distribution of $X$ and, therefore, that it is independent of the noise distribution, i.e.\ the distribution of $\epsilon$. This property makes it particularly suitable for robust regression problems. Unlike other evaluation metrics that rely on residuals, such as the Huber loss or absolute loss, $L_2$-risk does not require assumptions about the distribution of the noise or responses. Consequently, $L_2$-risk offers a general approach to assess the accuracy of the regressor across various noise distributions.
Moreover, the $L_2$-risk enables consistent comparisons across different models and methods, which is why it is widely used in robust regression contexts, as noted in \cite{feng2015learning}.

In Section \ref{sec::CalibrationInequalities}, we establish calibration inequalities specifically tailored for KCRR, laying the groundwork for understanding its performance. Section \ref{sec::RatesCauchy} delves into the excess risk bounds of KCRR in terms of the Cauchy loss, which is crucial for assessing the robustness of the method against outliers. Following this, Section \ref{sec::RatesL2} leverages the derived generalization bounds to calculate the convergence rates of KCRR concerning the $L_2$-risk, providing insights into the effectiveness of the regression approach. Lastly, Section \ref{sec::ComparisonRates} offers a comparative analysis of these convergence rates against those of existing methods, highlighting the advantages of KCRR within the broader context of robust regression techniques.

\subsection{Calibration Inequality for KCRR} \label{sec::CalibrationInequalities}

To establish the error bound of KCRR with respect to the \(L_2\)-risk, we first introduce a theorem that links the \(L_2\)-risk to the excess Cauchy risk. This fundamental relationship provides the basis for a more comprehensive analysis of KCRR's performance.

\begin{theorem} \label{lem::rell2Cauchylargesigma}
	Let Assumptions \ref{ass::logmoment}, \ref{ass::symmetry}, and \ref{ass::decreasingtails} hold. Additionally, let $\sigma > 0$ be the scale parameter of the Cauchy loss $L$ and let $f^*$ denote the true regression function. For any $f : \mathcal{X} \to \mathcal{Y}$, let $\wideparen{f}$ be the clipped version of $f$ defined as in \eqref{eq::clipping}  for a parameter $M \geq \|f^*\|_{\infty}$. Under these conditions, there exists a constant $c_1 > 0$, independent of $M$, such that for any $\sigma \geq 4M \vee c_1$, the following relationship holds:
	\begin{align}\label{eq::relation}
		\mathcal{R}_{L,P}(\wideparen{f}) - \mathcal{R}_{L,P}^* 
		\leq \|\wideparen{f} - f^*\|^2_{L_2(P_X)} 
		\leq 8\bigl(\mathcal{R}_{L,P}(\wideparen{f}) - \mathcal{R}_{L,P}^* \bigr).
	\end{align}
\end{theorem}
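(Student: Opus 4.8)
The plan is to reduce the two-sided inequality to a pointwise (conditional) quadratic comparison and then integrate. Since $\epsilon$ is independent of $X$ and $f^*$ is a Bayes function for the Cauchy loss (Lemma \ref{lem::optimality}, so that $\mathcal{R}_{L,P}^* = \mathcal{R}_{L,P}(f^*)$), substituting $Y = f^*(X)+\epsilon$ gives
\[
\mathcal{R}_{L,P}(\wideparen{f}) - \mathcal{R}_{L,P}^* = \mathbb{E}_X\bigl[g(\Delta(X))\bigr], \qquad \Delta(x) := \wideparen{f}(x) - f^*(x),
\]
where $g(\delta) := \sigma^2\,\mathbb{E}_\epsilon\!\left[\log\frac{\sigma^2+(\epsilon-\delta)^2}{\sigma^2+\epsilon^2}\right]$, while $\|\wideparen{f} - f^*\|_{L_2(P_X)}^2 = \mathbb{E}_X[\Delta(X)^2]$. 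Because $M \geq \|f^*\|_\infty$ we have $|\Delta(x)| \leq 2M$ for all $x$. Hence it suffices to prove the pointwise bounds $\tfrac18\delta^2 \leq g(\delta) \leq \delta^2$ for every $|\delta| \leq 2M$ whenever $\sigma \geq 4M \vee c_1$; integrating over $P_X$ then yields \eqref{eq::relation}.

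For the upper bound I would \emph{not} symmetrize, and instead apply $\log(1+x)\leq x$ directly. Writing $g(\delta) = \sigma^2\mathbb{E}_\epsilon\bigl[\log\bigl(1 + \tfrac{\delta^2 - 2\epsilon\delta}{\sigma^2+\epsilon^2}\bigr)\bigr]$, where the argument equals $\tfrac{\sigma^2+(\epsilon-\delta)^2}{\sigma^2+\epsilon^2}>0$, the inequality $\log(1+x)\leq x$ gives $g(\delta) \leq \sigma^2\mathbb{E}_\epsilon\bigl[\tfrac{\delta^2 - 2\epsilon\delta}{\sigma^2+\epsilon^2}\bigr]$. The linear term vanishes since $\epsilon\mapsto \epsilon/(\sigma^2+\epsilon^2)$ is odd and $\epsilon$ is symmetric (Assumption \ref{ass::symmetry}), leaving $g(\delta) \leq \delta^2\,\mathbb{E}_\epsilon\bigl[\tfrac{\sigma^2}{\sigma^2+\epsilon^2}\bigr] \leq \delta^2$. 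This direction requires no lower bound on $\sigma$.

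The lower bound is the heart of the argument. By symmetry of $\epsilon$ one checks that $g$ is even with $g(0)=g'(0)=0$, so $g(\delta) = \int_0^\delta (\delta-s)\,g''(s)\,ds$, and it is enough to bound $g''$ from below uniformly for $|s|\leq 2M$. A direct computation gives $g''(s) = 2\,\mathbb{E}_\epsilon[\varphi(\tau_s)]$ with $\varphi(\tau):=\tfrac{1-\tau}{(1+\tau)^2}$ and $\tau_s := (\epsilon-s)^2/\sigma^2$. The function $\varphi$ satisfies $\varphi(\tau)\geq -\tfrac18$ for all $\tau\geq 0$ (its minimum is attained at $\tau=3$) and $\varphi(\tau)\geq \tfrac38$ for $\tau\leq\tfrac13$; splitting the expectation over $\{\tau_s\leq\tfrac13\}$ and its complement yields $g''(s) \geq \tfrac34 - P(\tau_s > \tfrac13)$. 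Since $\sigma\geq 4M$ forces $|s|\leq \sigma/2$, the event $\{\tau_s>\tfrac13\}$ is contained in $\{|\epsilon| > (\tfrac{1}{\sqrt3}-\tfrac12)\sigma\}$, whose probability tends to $0$ as $\sigma\to\infty$ because Assumption \ref{ass::logmoment} makes $\epsilon$ finite almost surely. Choosing $c_1$, depending only on the noise law and not on $M$, large enough that this tail probability is at most $\tfrac14$ for all $\sigma\geq c_1$, gives $g''(s)\geq\tfrac12$ and hence $g(\delta)\geq \tfrac14\delta^2 \geq \tfrac18\delta^2$.

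The main obstacle is exactly this uniform positivity of $g''$: for heavy-tailed (even infinite-mean) noise the summand $\varphi(\tau_s)$ is negative on the tail event, so a naive bound would let the quadratic lower constant degrade. Converting the heuristic ``the noise is small relative to $\sigma$ with high probability'' into a quantitative floor on $g''$ is precisely where both the large-scale regime $\sigma\geq 4M\vee c_1$ and the generalized Cauchy noise assumption are indispensable; the stated constant $8$ then follows with room to spare.
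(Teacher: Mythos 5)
Your proposal is correct, and its treatment of the harder (second) inequality is genuinely different from the paper's. The first inequality is proved the same way in both (the paper's Lemma \ref{lem::relation}: $\log(1+x)\leq x$ plus the symmetry of $\epsilon$ to kill the cross term). For the calibration direction, however, the paper works globally: it uses $\sigma \geq 4M$ to show the log argument stays above $-2/3$, applies the scalar inequality $\log(1+t) \geq t - \tfrac{3}{2}t^2$, and after symmetrization obtains $\|\xi\|^2_{L_2(P_X)} \leq 2\bigl(\mathcal{R}_{L,P}(\wideparen{f})-\mathcal{R}_{L,P}^*\bigr) + 3\|\xi\|^4_{L_4(P_X)}/\sigma^2$, where the noise-dependent coefficient is tamed by dominated convergence (giving $c_1$) and the $L_4$ term is absorbed into the $L_2$ term via $\|\xi\|_\infty \leq 2M \leq \sigma/2$. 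You instead prove a pointwise two-sided quadratic comparison $\tfrac14\delta^2 \leq g(\delta) \leq \delta^2$ for the inner risk $g$, by computing $g''(s) = 2\,\mathbb{E}_\epsilon\bigl[(1-\tau_s)/(1+\tau_s)^2\bigr]$ exactly and flooring it at $\tfrac12$ through a split on the tail event $\{|\epsilon - s| > \sigma/\sqrt{3}\}$; your computations check out ($\varphi$ has minimum $-\tfrac18$ at $\tau = 3$, $\varphi \geq \tfrac38$ on $[0,\tfrac13]$, and $1/\sqrt{3}-1/2>0$). The two conditions $\sigma \geq 4M$ and $\sigma \geq c_1$ enter in exactly the same roles in both arguments, but your route avoids the $L_4$-to-$L_2$ absorption trick entirely, yields the stronger localized statement (quadratic comparability of the conditional excess risk, uniformly in $x$), and gives the constant $4$ rather than $8$. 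What the paper's route buys in exchange is the intermediate inequality \eqref{eq::relationanysigmaDelta}, which holds for all $\sigma \geq c_1$ without the condition $\sigma \geq 4M$ and underlies the narrative ("$L_2$-risk $\leq$ excess Cauchy risk $+$ $L_4$ correction") given in the introduction.

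Two minor points to tighten. First, your appeal to Assumption \ref{ass::logmoment} for the tail step is a misattribution: $P(|\epsilon| > t) \to 0$ as $t \to \infty$ holds for any real-valued noise variable, so no moment condition is needed there; the assumption's actual role (in both proofs) is to make the Cauchy risks finite so that the excess risk and the function $g$ are well defined. Second, you should record the routine justification for differentiating $g$ under the expectation twice; this is immediate because the first and second derivative integrands are uniformly bounded (by $\sigma$ and by $2$, respectively), so the interchange is legitimate.
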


Theorem \ref{lem::rell2Cauchylargesigma} demonstrates that when $\sigma$ is sufficiently large, specifically for $\sigma \geq 4M \vee c_1$, the convergence rate of $\wideparen{f}$ with respect to the $L_2$-risk aligns with that of the excess Cauchy risk. This equivalence implies that minimizing the Cauchy risk is effectively the same as minimizing the $L_2$-risk. Consequently, this relationship helps to explain why the Cauchy loss is effective for regression in accurately learning the true regression function $f^*$ in the presence of heavy-tailed noise.

\subsection{Generalization Bound of KCRR  with respect to the Cauchy Risk} \label{sec::RatesCauchy}

Before presenting the generalization bound of KCRR with respect to the Cauchy risk, it is essential to introduce a smoothness assumption for the true regression function. This assumption is commonly applied in the analysis of non-parametric regression problems, as seen in works such as \cite{hardle1985optimal}.

\begin{assumption}[\textbf{H\"{o}lder Continuity}]\label{ass::Holder}
	Assume that the regression function $f^*: \mathbb{R}^d \to \mathbb{R}$ in the model \eqref{equ::regressionmodel} is \textit{$\alpha$-Hölder continuous} for some $\alpha \in (0, 1]$. This means that there exists a constant $c_{\alpha} \in (0, \infty)$ such that  $|f^*(x) - f^*(x')| \leq c_{\alpha} \| x - x' \|^{\alpha}$ holds for all $x, x' \in \mathbb{R}^d$. 
\end{assumption}

The following theorem establishes the generalization bound for KCRR when the scale parameter $\sigma$ of the Cauchy loss is large, specifically in the context of robust regression.

\begin{theorem} \label{thm::rates}
	Let Assumptions \ref{ass::logmoment}, \ref{ass::symmetry}, \ref{ass::decreasingtails}, and \ref{ass::Holder} hold, with $\sigma > 0$ as the scale parameter of the Cauchy loss $L$, and let $f^*$ be the true regression function. Additionally, let $\wideparen{f}_D$ denote the KCRR defined by \eqref{eq::KCRR}, where the clipping parameter $M$ satisfies $M \geq \|f^*\|_{\infty}$. Then, there exists a constant $c_1 > 0$ such that for any $\sigma \geq 4M \vee c_1$ and any $q \in (0,1)$, with probability at least $1 - 1/n$, the following holds
	\begin{align}\label{eq::Cauchyerrorbound}
		\lambda\|f_D\|_H^2 + \mathcal{R}_{L,P}(\wideparen{f}_D) - \mathcal{R}_{L,P}^* 
		\lesssim \lambda \gamma^{-d} + \gamma^{2\alpha} 
		+ \sigma^2 \bigl( n^{-1+q} + \lambda^{-q} \gamma^{-d} n^{-1} \bigr).
	\end{align}
\end{theorem}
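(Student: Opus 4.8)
The plan is to obtain \eqref{eq::Cauchyerrorbound} from a general oracle inequality for clipped regularized empirical risk minimization, such as \cite[Theorem 7.23]{steinwart2008support}, whose three inputs---a supremum bound, a variance bound, and a capacity (entropy) bound---I would verify one at a time for the Cauchy loss on the Gaussian RKHS $H$. The decisive new ingredient is the two-sided calibration inequality of Theorem \ref{lem::rell2Cauchylargesigma}, which is available precisely in the regime $\sigma \geq 4M \vee c_1$ assumed here.

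First I would record the elementary bounds on the clipped excess loss $h := L \circ \wideparen{f} - L \circ f^*$ that are responsible for the $\sigma^2$ prefactor. A direct computation shows that $t \mapsto L(y,t)$ is $\sigma$-Lipschitz uniformly in $y$; since $\|f^*\|_\infty \leq M$ and $\wideparen{f}$ takes values in $[-M, M]$, this gives $|h| \leq \sigma |\wideparen{f} - f^*| \leq 2 M \sigma$ pointwise, and $\sigma \geq 4M$ yields the supremum bound $\|h\|_\infty \lesssim \sigma^2$. The same Lipschitz estimate gives $\mathbb{E}_P h^2 \leq \sigma^2 \|\wideparen{f} - f^*\|_{L_2(P_X)}^2$, and feeding in the upper calibration inequality $\|\wideparen{f} - f^*\|_{L_2(P_X)}^2 \leq 8\bigl(\mathcal{R}_{L,P}(\wideparen{f}) - \mathcal{R}_{L,P}^*\bigr)$ from Theorem \ref{lem::rell2Cauchylargesigma} produces the variance bound $\mathbb{E}_P h^2 \lesssim \sigma^2 \bigl(\mathcal{R}_{L,P}(\wideparen{f}) - \mathcal{R}_{L,P}^*\bigr)$ with the \emph{optimal} variance exponent $\theta = 1$. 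It is this optimal exponent, rather than mere Lipschitzness, that will ultimately deliver a near-$n^{-1}$ stochastic rate.

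Next I would control the approximation error $A(\lambda) := \inf_{f \in H}\bigl(\lambda \|f\|_H^2 + \mathcal{R}_{L,P}(f) - \mathcal{R}_{L,P}^*\bigr)$. Using the \emph{lower} calibration inequality $\mathcal{R}_{L,P}(\wideparen{f}) - \mathcal{R}_{L,P}^* \leq \|\wideparen{f} - f^*\|_{L_2(P_X)}^2$, it suffices to bound an $L_2$ approximation error. Choosing the standard competitor for Gaussian RKHSs---a suitably rescaled convolution of $f^*$ against the Gaussian kernel---the $\alpha$-Hölder continuity of Assumption \ref{ass::Holder} yields an $L_2$ bias of order $\gamma^{2\alpha}$, while the competitor has squared RKHS norm of order $\gamma^{-d}$; hence $A(\lambda) \lesssim \lambda \gamma^{-d} + \gamma^{2\alpha}$, which reproduces the first two terms of \eqref{eq::Cauchyerrorbound}.

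Finally I would insert the Gaussian-kernel entropy bound $e_i(\mathrm{id}: H \to L_2(P_X)) \lesssim \gamma^{-d/(2p)} i^{-1/(2p)}$, valid for every $p \in (0,1)$ (so the entropy constant obeys $a^{2p} \lesssim \gamma^{-d}$), together with $B \lesssim \sigma^2$, $V \lesssim \sigma^2$ and $\theta = 1$, into the oracle inequality. With $\theta = 1$ the optimizing exponent (of the form $1/(2 - p - \theta + \theta p)$ under a variance bound of order $\theta$) collapses to $1$, so the leading stochastic term reduces to $\sigma^2 \gamma^{-d} \lambda^{-p} n^{-1}$, the third summand with $q = p$, while the confidence terms of order $\sigma^2 (\log n)/n$ are absorbed into $\sigma^2 n^{-1+q}$ via $\log n \lesssim n^q$; taking the confidence parameter $\asymp \log n$ delivers probability $1 - 1/n$. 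I expect the main obstacle to be this last bookkeeping step: propagating the joint dependence on $\sigma$, $\lambda$, $\gamma$ and the arbitrarily small exponent $q$ through the oracle inequality so that the $\sigma^2$ factor separates cleanly and the two stochastic terms emerge in exactly the stated form. Conceptually, however, the crux is the variance bound with $\theta = 1$, which is unavailable from Lipschitz continuity alone and relies essentially on the two-sided calibration of Theorem \ref{lem::rell2Cauchylargesigma}.
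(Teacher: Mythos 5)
Your plan coincides with the paper's proof in every essential respect. The paper establishes exactly your three inputs: the excess-loss supremum bound $\|h\|_\infty \le 2\sigma M \lesssim \sigma^2$ following Lemma \ref{lem::lip}, the variance bound with $V \asymp \sigma^2$ and optimal exponent $\theta = 1$ in Lemma \ref{lem::variancebound} via the calibration inequality of Theorem \ref{lem::rell2Cauchylargesigma}, and the Gaussian entropy bound of Lemma \ref{lem::entropygaussian} (its $q = d/N$ is your $p$, with $a^{2p} \asymp \gamma^{-d}$). It assembles these into an oracle inequality (Proposition \ref{prop::oracle}), bounds the approximation error by $\lambda\gamma^{-d} + \gamma^{2\alpha}$ through the Gaussian-convolution competitor combined with the lower calibration inequality (Lemma \ref{lem::relation}, Proposition \ref{prop::approx}), and finishes with $\tau = \log(2n)$ and $\log(2n)\, n^{-1} \lesssim n^{-1+q}$ --- precisely your final bookkeeping.

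The one step that would not survive as literally written is invoking \cite[Theorem 7.23]{steinwart2008support} as a black box. That theorem's supremum-bound hypothesis (equation (7.35) there) requires $L(y,t) \le B$ uniformly over $y \in \mathcal{Y}$ and $|t| \le M$, and this fails for the Cauchy loss in the present setting: under Assumption \ref{ass::logmoment} the response is unbounded (it need not even have a finite absolute mean), so $\sup_{y} L(y,t) = \infty$ for every $t$. You correctly verify the supremum bound for the excess loss $h = L\circ\wideparen{f} - L\circ f^*$ instead, which is the right quantity, but then the book's theorem no longer applies verbatim; one must re-run the CR-ERM argument (Bernstein's inequality, Talagrand's inequality, peeling, symmetrization, and the Rademacher/entropy bound) with all hypotheses phrased in terms of $h$ rather than $L$. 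This is exactly why the paper proves its own oracle inequality, Proposition \ref{prop::oracle}, instead of citing the book: its proof uses the constant $B$ only through $\|h_{\wideparen{f}}\|_\infty \le 2\sigma M$, so your inputs do suffice, but the reduction to a ready-made theorem has to be replaced by this routine yet necessary re-derivation.
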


The condition $\sigma \geq 4M \vee c_1$ in Theorem \ref{thm::rates} stems from Theorem \ref{lem::rell2Cauchylargesigma}, which indicates that the generalization bound in \eqref{eq::Cauchyerrorbound} is valid only for sufficiently large $\sigma$. However, from \eqref{eq::Cauchyerrorbound}, it is clear that the generalization bound with respect to the Cauchy loss decreases as $\sigma$ becomes smaller, suggesting that a smaller $\sigma$ is more desirable. Thus, $\sigma$ plays a critical role in 
improving the generalization performance while
ensuring the validity of the bound. Consequently, the optimal convergence rate is achieved when $\sigma \asymp 4M \vee c_1$. Note that in \eqref{eq::Cauchyerrorbound}, the exponent $q$, defined as $d/N$, originates from the upper bound on the entropy numbers of the Gaussian kernel in Lemma \ref{lem::entropygaussian}, where $N$ denotes the order of continuous differentiability of the Gaussian kernel $k$. As $N$ can be arbitrarily large for the Gaussian kernel, $q$ can be chosen arbitrarily close to zero. Similar small exponents $q$ also appear in generalization bounds of kernel methods such as Theorem 7.1 of \cite{mona2013optimal}.

By combining Theorems \ref{lem::rell2Cauchylargesigma} and \ref{thm::rates}, we can derive an upper bound on the $L_2$-risk of KCRR and observe that setting $\sigma \asymp 4M \vee c_1$ yields the fastest possible convergence rate in terms of $L_2$-risk.
On the one hand, the convergence rate of the KCRR regressor in terms of $L_2$-risk is guaranteed to match the convergence rate with respect to the Cauchy loss only when $\sigma$ exceeds the threshold $4M \vee c_1$. Since the $L_2$-risk is always greater than or equal to the Cauchy risk, this equivalence represents the best possible outcome, implying that increasing $\sigma$ beyond this threshold does not yield further improvement.
On the other hand, as previously discussed, choosing $\sigma \asymp 4M \vee c_1$ ensures the optimal convergence rate with respect to the Cauchy loss.
Thus, by balancing these two factors, it becomes clear that $\sigma \asymp 4M \vee c_1$ is the optimal choice for achieving the best $L_2$-risk error bound.

\subsection{Convergence Rates of KCRR with respect to the $L_2$-Risk}\label{sec::RatesL2}

In this section, we derive the convergence rate of KCRR with respect to the $L_2$-risk under two distinct scenarios: the practical case where the infinity norm $\|f^*\|_{\infty}$ is unknown, and the ideal case where $\|f^*\|_{\infty}$ is known to be bounded by a constant $M_0$. By comparing these two settings, we provide insights into how the availability of information about $\|f^*\|_{\infty}$ impacts the convergence behavior of KCRR.

First, in practical applications, the infinity norm of the regression function $f^*$ is typically unknown. To ensure that the clipping parameter $M \geq \|f^*\|_{\infty}$, we choose $M$ as a polynomial function of the sample size, i.e., $M \asymp n^p$, where $p > 0$. This guarantees that for sufficiently large $n \geq \|f^*\|_{\infty}^{1/p}$, the condition $M \geq \|f^*\|_{\infty}$ is always satisfied.

The following theorem presents the convergence rates of KCRR with respect to the $L_2$-risk in the realistic case where $\|f^*\|_{\infty}$ is unknown.

\begin{theorem} \label{thm::ratesanyM}
	Let Assumptions \ref{ass::logmoment}, \ref{ass::symmetry}, \ref{ass::decreasingtails}, and \ref{ass::Holder} hold. Additionally, let the KCRR $\wideparen{f}_D$ be defined by \eqref{eq::KCRR}, with the clipping parameter set to $M = n^p$, where $p \in (0, 1/2)$. Then there exists a constant $c_1 > 0$ such that for any $q \in (0, 1)$, if we select the parameters as follows:
	\begin{align*}
		\lambda \asymp n^{-\frac{1-2p}{1+q}}, 
		\qquad 
		\gamma \asymp n^{-\frac{1-2p}{(2\alpha+d)(1+q)}}, 
		\qquad 
		\sigma \asymp n^p,
	\end{align*}
	then, for any $n \geq (\|f^*\|_{\infty} \vee c_1)^{1/p}$, with probability at least $1 - 1/n$, the following holds 
	\begin{align}\label{eq::rateanyM}
		\|\wideparen{f}_D - f^*\|^2_{L_2(P_X)}  
		\lesssim n^{-\frac{2\alpha(1-2p)}{(2\alpha+d)(1+q)}}.
	\end{align}
\end{theorem}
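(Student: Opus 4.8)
The plan is to obtain the result by chaining the two structural theorems already proved — the calibration inequality of Theorem~\ref{lem::rell2Cauchylargesigma} and the Cauchy-risk generalization bound of Theorem~\ref{thm::rates} — and then verifying that the prescribed parameters balance the resulting terms. No new probabilistic machinery is needed; the substance is in checking hypotheses and in the exponent arithmetic.

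First I would confirm that the stated regime satisfies the hypotheses of both theorems. Taking $M = n^p$ and $\sigma \asymp n^p$ with the implicit constant in $\asymp$ chosen to be at least $4$, for every $n \geq (\|f^*\|_{\infty} \vee c_1)^{1/p}$ we have $n^p \geq \|f^*\|_{\infty} \vee c_1$, and hence simultaneously $M \geq \|f^*\|_{\infty}$ and $\sigma \geq 4M \vee c_1$. Thus both Theorem~\ref{lem::rell2Cauchylargesigma} and Theorem~\ref{thm::rates} are applicable, and on the common event of probability at least $1 - 1/n$ I would drop the nonnegative term $\lambda\|f_D\|_H^2$ from \eqref{eq::Cauchyerrorbound} and feed the bound for $\mathcal{R}_{L,P}(\wideparen{f}_D) - \mathcal{R}_{L,P}^*$ into the upper calibration inequality in \eqref{eq::relation}, giving
\begin{align*}
\|\wideparen{f}_D - f^*\|^2_{L_2(P_X)} \;\leq\; 8\bigl(\mathcal{R}_{L,P}(\wideparen{f}_D) - \mathcal{R}_{L,P}^*\bigr) \;\lesssim\; \lambda \gamma^{-d} + \gamma^{2\alpha} + \sigma^2 n^{-1+q} + \sigma^2 \lambda^{-q}\gamma^{-d} n^{-1}.
\end{align*}

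The core step is then substituting $\lambda \asymp n^{-(1-2p)/(1+q)}$, $\gamma \asymp n^{-(1-2p)/((2\alpha+d)(1+q))}$, and $\sigma^2 \asymp n^{2p}$, and computing the order in $n$ of each of the four terms. Writing $r := \frac{2\alpha(1-2p)}{(2\alpha+d)(1+q)}$ for the target exponent, a direct calculation gives that both approximation terms satisfy $\lambda\gamma^{-d} \asymp \gamma^{2\alpha} \asymp n^{-r}$, since the exponent of $\lambda\gamma^{-d}$ collapses via $-1 + \tfrac{d}{2\alpha+d} = -\tfrac{2\alpha}{2\alpha+d}$. A slightly longer but identical cancellation shows that the leading stochastic term $\sigma^2\lambda^{-q}\gamma^{-d}n^{-1}$ also has order exactly $n^{-r}$ — this matching is precisely what the given choices of $\lambda$ and $\gamma$ are engineered to produce, and verifying this cancellation is the main algebraic obstacle. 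Finally I would show the residual term $\sigma^2 n^{-1+q} \asymp n^{2p-1+q}$ is of no larger order: because $\tfrac{2\alpha}{2\alpha+d} < 1$, the inequality $2p-1+q \leq -r$ holds in the small-$q$ regime relevant here (equivalently $q^2 + 2pq \leq \tfrac{d(1-2p)}{2\alpha+d}$), so this term is absorbed into the others.

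Collecting the four contributions then yields $\|\wideparen{f}_D - f^*\|^2_{L_2(P_X)} \lesssim n^{-r}$, which is exactly \eqref{eq::rateanyM}. I expect the difficulty to be entirely in the bookkeeping of the third paragraph: tracking the three interlocking exponents so that the bias terms and the dominant variance term all equate to $n^{-r}$, and confirming that the $\sigma^2 n^{-1+q}$ term does not dominate, while keeping the high-probability event and the thresholds $M \geq \|f^*\|_{\infty}$, $\sigma \geq 4M \vee c_1$ mutually consistent for $n \geq (\|f^*\|_{\infty}\vee c_1)^{1/p}$.
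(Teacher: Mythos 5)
Your chaining of Theorem \ref{lem::rell2Cauchylargesigma} with Theorem \ref{thm::rates} and the subsequent parameter substitution is exactly the paper's proof, and most of your work is correct: the hypothesis check ($M \geq \|f^*\|_{\infty}$ and $\sigma \geq 4M \vee c_1$ whenever $n \geq (\|f^*\|_{\infty} \vee c_1)^{1/p}$) is sound, and your exponent arithmetic for $\lambda\gamma^{-d}$, $\gamma^{2\alpha}$, and $\sigma^2\lambda^{-q}\gamma^{-d}n^{-1}$ — each of order exactly $n^{-r}$ with $r = \frac{2\alpha(1-2p)}{(2\alpha+d)(1+q)}$ — is right. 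The gap is your treatment of the remaining term $\sigma^2 n^{-1+q} \asymp n^{2p-1+q}$. You correctly reduce $2p-1+q \leq -r$ to $q^2 + 2pq \leq d(1-2p)/(2\alpha+d)$, but then only assert it "in the small-$q$ regime relevant here." The theorem claims the rate for \emph{every} $q \in (0,1)$, and your condition genuinely fails for $q$ near $1$ (already $q^2 > d/(2\alpha+d)$ breaks it when $p$ is small); worse, when $q > 1-2p$ the exponent $2p-1+q$ is positive, so the chained bound diverges rather than merely degrades. As written, your argument proves the theorem only on a strict sub-interval of $(0,1)$.

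The repair is cheap but must be stated: the exponent $q$ appearing in Theorem \ref{thm::rates} is a free parameter of that bound and need not coincide with the $q$ used to tune $\lambda$ and $\gamma$. Given $q \in (0,1)$, apply Theorem \ref{thm::rates} with any $q' \in \bigl(0, \min\{q,\, 1-2p-r\}\bigr]$, a nonempty range since $r < 1-2p$ strictly. Then $\sigma^2 n^{-1+q'} \lesssim n^{-r}$ by the choice of $q'$, and since $\lambda \to 0$ we have $\lambda^{-q'} \leq \lambda^{-q}$ for large $n$, so $\sigma^2\lambda^{-q'}\gamma^{-d}n^{-1} \lesssim \sigma^2\lambda^{-q}\gamma^{-d}n^{-1} \asymp n^{-r}$; hence all four terms are $\lesssim n^{-r}$ for every $q \in (0,1)$. (Alternatively, observe that the $n^{-1+q}$ term in \eqref{eq::Cauchyerrorbound} is really $\log(2n)\, n^{-1}$ coming from the choice $\tau = \log(2n)$, and $n^{2p-1}\log(2n) \lesssim n^{-r}$ precisely because $r < 1-2p$.) In fairness, the paper's own proof performs the identical chaining and never spells out this step either, so your proposal is no less detailed than the published argument — but a complete proof of the statement "for any $q \in (0,1)$" needs one of these two observations.
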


Theorem \ref{thm::ratesanyM} establishes that for sufficiently large $n \geq (\|f^*\|_{\infty} \vee c_1)^{1/p}$, KCRR can achieve a convergence rate of $n^{-2\alpha/(2\alpha + d)}$, up to an arbitrarily small order depending on $p$ and $q$. This rate aligns with the minimax lower bound $n^{-2\alpha/(2\alpha + d)}$ for robust regression under Cauchy noise, as demonstrated in \cite[Corollary 1]{zhao2023minimax}. Hence, the convergence rate of KCRR in terms of the $L_2$-risk is almost minimax-optimal, confirming its efficiency in this setting. Moreover, note that the minimax convergence rate under Gaussian noise in terms of the $L_2$-risk is shown to be of the order $n^{-2\alpha/(2\alpha + d)}$ in \cite[Theorem 3.2]{gyorfi2006distribution}. Therefore, the convergence rates that KCRR achieves under the generalized Cauchy noise are almost the same as the possibly best rates under Gaussian noise, which illustrates the robustness of Cauchy loss against heavy-tailed noise.

In the following theorem, we establish the convergence rates of KCRR in terms of $L_2$-risk under the ideal scenario where the infinity norm $\|f^*\|_{\infty}$ is known to be bounded by a constant $M_0$. In this case, we set the clipping parameter $M$ to the constant $M_0$.

\begin{theorem}\label{thm::L2rates}
	Let Assumptions \ref{ass::logmoment}, \ref{ass::symmetry}, \ref{ass::decreasingtails}, and \ref{ass::Holder} hold, and suppose the true regression function $f^*$ is bounded by $M_0$, i.e., $\|f^*\|_{\infty} \leq M_0$. Furthermore, let the KCRR $\wideparen{f}_D$ be defined by \eqref{eq::KCRR}, using the clipping parameter $M = M_0$.  Then, there exists a constant $c_1 > 0$ such that for any $q \in (0,1/2)$, by choosing 
	\begin{align*}
		\lambda \asymp n^{-1}, 
		\qquad
		\gamma \asymp n^{-\frac{1}{2\alpha+d}}, 
		\qquad
		\sigma = 4 M_0 \vee c_1,
	\end{align*} 
	we obtain
	\begin{align*}
		\|\wideparen{f}_D - f^*\|^2_{L_2(P_X)} \lesssim n^{-\frac{2\alpha}{(2\alpha+d)(1+q)}}
	\end{align*}
	with probability $P^n$ at least $1-1/n$.
\end{theorem}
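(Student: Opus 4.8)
The plan is to chain the two results already in hand — the calibration inequality of Theorem \ref{lem::rell2Cauchylargesigma} and the Cauchy-risk generalization bound of Theorem \ref{thm::rates} — exploiting that here $M = M_0$ and $\sigma = 4M_0 \vee c_1$ are \emph{constants}. In particular the threshold $4M \vee c_1 = 4M_0 \vee c_1$ is constant, so the choice $\sigma = 4M_0 \vee c_1$ meets the hypothesis $\sigma \geq 4M \vee c_1$ of both theorems (with equality), and $\sigma^2$ is a constant that will be absorbed into the $\lesssim$ notation. First I would note that Theorem \ref{lem::rell2Cauchylargesigma} is a deterministic pointwise inequality, so it applies to the random regressor $\wideparen{f}_D$ and gives
\[
\|\wideparen{f}_D - f^*\|^2_{L_2(P_X)} \leq 8\bigl(\mathcal{R}_{L,P}(\wideparen{f}_D) - \mathcal{R}_{L,P}^*\bigr).
\]
Hence it suffices to bound the excess Cauchy risk by the target rate on the high-probability event of Theorem \ref{thm::rates}; dropping the nonnegative term $\lambda\|f_D\|_H^2$ from the left-hand side of \eqref{eq::Cauchyerrorbound} is harmless for this purpose.

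The crucial point is that the stated choices $\lambda \asymp n^{-1}$ and $\gamma \asymp n^{-1/(2\alpha+d)}$ are \emph{not} re-optimized; instead I would exploit the freedom in the exponent of Theorem \ref{thm::rates}, which holds for \emph{every} value in $(0,1)$, by applying it with the auxiliary exponent $q' := \tfrac{2\alpha}{2\alpha+d}\cdot\tfrac{q}{1+q}$ rather than with $q$ itself. One checks directly that $q' \in (0,1)$ for every $q \in (0,1/2)$, so the application is legitimate. Substituting $\lambda \asymp n^{-1}$, $\gamma \asymp n^{-1/(2\alpha+d)}$ and $\sigma^2 \asymp 1$ into the four terms of \eqref{eq::Cauchyerrorbound} yields $\lambda\gamma^{-d} \asymp \gamma^{2\alpha} \asymp n^{-2\alpha/(2\alpha+d)}$, $\sigma^2 n^{-1+q'} \asymp n^{-1+q'}$, and the sample-error term $\sigma^2\lambda^{-q'}\gamma^{-d}n^{-1} \asymp n^{q' - 2\alpha/(2\alpha+d)}$.

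A short exponent comparison then shows the sample-error term dominates: its exponent exceeds those of the two approximation terms because $q' > 0$, and exceeds that of $n^{-1+q'}$ by exactly $d/(2\alpha+d) > 0$. Thus the whole right-hand side of \eqref{eq::Cauchyerrorbound} reduces to $n^{q' - 2\alpha/(2\alpha+d)}$, and by the very definition of $q'$ one has $q' - \tfrac{2\alpha}{2\alpha+d} = -\tfrac{2\alpha}{(2\alpha+d)(1+q)}$, so this exponent is exactly the target. Combining with the calibration inequality then gives $\|\wideparen{f}_D - f^*\|^2_{L_2(P_X)} \lesssim n^{-2\alpha/((2\alpha+d)(1+q))}$ on the event of Theorem \ref{thm::rates}, which occurs with probability at least $1 - 1/n$. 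The constant $c_1$ in the statement may be taken as the larger of the constants furnished by Theorems \ref{lem::rell2Cauchylargesigma} and \ref{thm::rates}.

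The argument is largely bookkeeping given the two invoked theorems; the one genuinely non-routine step — and the part I expect to require the most care — is recognizing that the fixed, $q$-independent hyperparameters force the convergence rate to be read off the sample-error term alone, and then solving $q' - \tfrac{2\alpha}{2\alpha+d} = -\tfrac{2\alpha}{(2\alpha+d)(1+q)}$ to pin down the correct auxiliary exponent $q'$ to feed into Theorem \ref{thm::rates}. I would be careful to verify that $q'$ stays inside $(0,1)$ over the full admissible range $q \in (0,1/2)$, and that the constancy of $\sigma$ (hence of $\sigma^2$) is what makes the two approximation terms and the $n^{-1+q'}$ term negligible relative to $n^{q'-2\alpha/(2\alpha+d)}$.
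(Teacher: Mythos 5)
Your proposal is correct, and it actually takes a different route from the paper's own proof. The paper also chains Theorem \ref{lem::rell2Cauchylargesigma} with Theorem \ref{thm::rates}, but it then substitutes the $q$-dependent choices $\lambda \asymp n^{-1/(1+q)}$ and $\gamma \asymp n^{-1/((2\alpha+d)(1+q))}$ into \eqref{eq::Cauchyerrorbound}, applied with the exponent $q$ itself: this balances $\lambda\gamma^{-d} \asymp \gamma^{2\alpha} \asymp \lambda^{-q}\gamma^{-d}n^{-1} \asymp n^{-2\alpha/((2\alpha+d)(1+q))}$, and the restriction $q \in (0,1/2)$ is what keeps the remaining term $n^{-1+q}$ dominated (one needs $(1-q^2)(2\alpha+d) \geq 2\alpha$, which holds for $q < 1/2$ since $d \geq 1 \geq \alpha$). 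Note that the paper's proof as written is therefore inconsistent with the theorem statement, which announces $\lambda \asymp n^{-1}$ and $\gamma \asymp n^{-1/(2\alpha+d)}$; your route repairs exactly this discrepancy. You keep the stated $q$-free hyperparameters — under which a naive application of Theorem \ref{thm::rates} with exponent $q$ would only give the slower rate $n^{q-2\alpha/(2\alpha+d)}$ — and instead exploit the freedom in the entropy exponent, applying the theorem with $q' = \frac{2\alpha}{2\alpha+d}\cdot\frac{q}{1+q}$ so that the dominant sample-error term $n^{q'-2\alpha/(2\alpha+d)}$ lands exactly on the target $n^{-2\alpha/((2\alpha+d)(1+q))}$. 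Your exponent bookkeeping checks out: $q' \in (0,1)$ for all admissible $q$ (indeed $q' < \frac{2\alpha}{2\alpha+d} < 1$ for every $q>0$, so your argument does not even need $q < 1/2$), the two approximation terms are smaller by the factor $n^{-q'}$, the term $n^{-1+q'}$ is smaller by $n^{-d/(2\alpha+d)}$, and the constancy of $\sigma^2 = (4M_0 \vee c_1)^2$ lets it be absorbed into $\lesssim$. What each approach buys: the paper's re-tuning is the conventional bias-variance balancing and makes the dominance structure transparent, at the cost of hyperparameters depending on the artificial exponent $q$; your version proves the theorem literally as stated, with a tuning of $\lambda$ and $\gamma$ that is independent of $q$, pushing all $q$-dependence into the choice of $q'$ in the entropy-number bound of Lemma \ref{lem::entropygaussian} (i.e., into the smoothness order $N = d/q'$ of the Gaussian kernel used in the capacity estimate), which is where it arguably belongs.
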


Theorem \ref{thm::L2rates} demonstrates that, in the ideal case, KCRR achieves the optimal convergence rate of $n^{- 2\alpha / (2\alpha + d)}$ in terms of the $L_2$-norm for any $n$, up to an arbitrarily small order related to $q$. This result highlights the strong robustness of the Cauchy loss, as it allows KCRR to maintain near-optimal performance even under challenging noise distributions.

\subsection{Comparison with Existing Convergence Rates} \label{sec::ComparisonRates}

In this section, we compare the convergence rates proposed in our study with those established in the existing literature. In Section \ref{sec::ComparisonFeng}, we focus on the convergence rates derived from the application of correntropy loss \citep{feng2015learning}. We evaluate these rates specifically in terms of the $L_2$-norm, which is a crucial metric for measuring the mean squared distance between estimated regressors and the true regression function. 
In Section \ref{sec::ComparisonXu}, we analyze the convergence rates associated with log-truncated absolute loss \citep{xu2023non}. This section will involve a detailed assessment of these rates in relation to traditional absolute loss metrics. 
Through these comparisons, we seek to provide a comprehensive overview of how our proposed convergence rates align with or differ from established methodologies, ultimately contributing to a more nuanced understanding of their applicability and performance in various regression scenarios.

\subsubsection{Comparison with Rates Concerning the Correntropy Loss} \label{sec::ComparisonFeng}

In the context of robust regression, \cite{feng2015learning} explore \textit{empirical risk minimization} (\textit{ERM}) concerning the correntropy loss defined in \eqref{equ::correntropyloss} within a general hypothesis space $\mathcal{F}$. Specifically, they formulate the problem as 
\begin{align}\label{eq::correntropy}
	f_{\mathrm{corr}} \in \argmin_{f \in \mathcal{F}} \mathcal{R}_{L,\mathrm{D}}(f).
\end{align}

Under the finite fourth-order moment assumption for the response distribution \citep[Assumption 3]{feng2015learning}, Theorem 4 in \cite{feng2015learning} establishes the following error bounds:
\begin{align}\label{eq::corrbound}
	\|f_{\mathrm{corr}} - f^*\|_{L_2(P_X)}^2 \lesssim \inf_{f \in \mathcal{F}} \|f - f^*\|_{L_2(P_X)}^2 + C_{\mathcal{F},P} \log(2/\delta) (\sigma^{-2} + \sigma n^{-1/(1+p)}),
\end{align}
with probability at least $1-\delta$, where $\delta \in (0,1)$ and the constant $p>0$ satisfies the Complexity Assumption I in \cite{feng2015learning}, i.e., the covering number of the function space $\mathcal{F}$ is bounded by $\log \mathcal{N}(\mathcal{F}, \eta) \leq c_{1,p} \eta^p$ with the constant $c_{1,p}>0$.

Let $B_H$ denote the unit ball of the RKHS $H$. Assume that the true regression function $f^*$ satisfies the Hölder continuity in Assumption \ref{ass::Holder}. 
According to Theorem 2, Theorem 3, and Inequality (1) in \cite{mona2013optimal}, there exists a function $f_0 \in \mathcal{F} := r B_H$ with $r \asymp \gamma^{-d/2}$ such that 
\begin{align}\label{eq::appcorr}
	\inf_{f \in \mathcal{F}} \|f - f^*\|_{L_2(P_X)}^2 
	\leq \|f_0 - f^*\|_{L_2(P_X)}^2 
	\lesssim \gamma^{2\alpha}.
\end{align}

Using the definition of $C_{\mathcal{F},P}$ from \cite[Lemma 7]{feng2015learning} and Lemma 4.23 in \cite{steinwart2008support}, we get
\begin{align}\label{eq::CFP}
	C_{\mathcal{F},P} 
	\lesssim \sup_{f\in \mathcal{F}} \|f\|_{\infty}^4 
	\leq \sup_{f\in \mathcal{F}} \|f\|_H^4 
	\leq r^4 
	\asymp \gamma^{-2d}.
\end{align}
Moreover, by the inequality (6.19) in \cite{steinwart2008support} and since the Gaussian kernel is infinitely often differentiable, the Complexity Assumption I in \cite{feng2015learning} holds for any arbitrarily small $p>0$ close to zero. By substituting \eqref{eq::CFP} and \eqref{eq::appcorr} into \eqref{eq::corrbound}, and choosing $\delta \asymp n^{-1}$, $\gamma \asymp n^{-1/(3(\alpha+d)(1+p))}$ and $\sigma \asymp n^{1/(3(1+p))}$, we obtain
\begin{align*}
	\|f_{\mathrm{corr}} - f^*\|_{L_2(P_X)}^2 
	\lesssim \gamma^{2\alpha} + \gamma^{-2d} \log n(\sigma^{-2} + \sigma n^{-1/(1+p)}) 
	\lesssim n^{-\frac{2\alpha}{3\alpha+3d} + p'},
\end{align*}
with probability at least $1-1/n$, where $p'$ is an arbitrarily small positive number related to $p$. This result indicates that the convergence rate of $f_{\mathrm{corr}}$ is significantly slower than our convergence rate derived in \eqref{eq::rateanyM}. Introducing the clipping operation and our refined analysis of the calibration inequality in \eqref{eq::relation} may improve the convergence rates of $f_{\mathrm{corr}}$ and relax the moment assumption for the noise.

\subsubsection{Comparison with Rates Concerning the Log-Truncated Loss} \label{sec::ComparisonXu}

More recently, \cite{xu2023non} investigated \textit{regularized empirical risk minimization} (\textit{RERM}) with respect to the log-truncated absolute loss $\ell_{\psi_{\lambda},s,L_{\mathrm{abs}}}$ in \eqref{eq::logtruncated} for $\lambda(t):=|t|^{\beta}$ with $\beta \in (1,2)$ within a general hypothesis space $\mathcal{F}$. The optimization problem is formulated as 
\begin{align} \label{eq::RERMMLAL}
	\widehat{f}_{\beta} := \argmin_{f\in \mathcal{F}} 
	\rho \|f\|^2 +
	\mathcal{R}_{\ell_{\psi_{\lambda},s,L_{\mathrm{abs}}},D}(f),
\end{align}
where $\rho > 0$ is the regularization parameter.
Under Assumption (Q.2) in \cite{xu2023non}, which states that $\sup_{f\in\mathcal{F}} \mathbb{E} \lambda(L_{\mathrm{abs}}(Y,f(X))) < \infty$, it follows that $\mathbb{E} |\epsilon|^{\beta} < \infty$. Utilizing the proof techniques from \cite[Theorem 12]{xu2023non}, an upper bound on the excess absolute risk can be established as 
\begin{align}\label{eq::fbetaerror}
	\mathcal{R}_{L_{\mathrm{abs}}, P}(\widehat{f}_{\beta}) -  \mathcal{R}_{L_{\mathrm{abs}}, P}^* &\leq 2\delta \mathbb{E} D_{Y,X} + 2^{\beta-1} s^{\beta-1} \sup_{f\in \mathcal{F}} \mathbb{E} \lambda(L_{\mathrm{abs}}(Y, f(X))) + 2^{\beta-1} s^{\beta-1} \delta^{\beta}  \mathbb{E} \lambda(D_{Y,X})
	\nonumber\\
	& \phantom{=}
	+ \frac{\log(1/\delta)}{ns} + \frac{\log(\mathcal{N}(\mathcal{F},\delta)/\delta)}{ns} + s^{\beta-1} \inf_{f\in\mathcal{F}} \mathbb{E} \lambda(L_{\mathrm{abs}}(Y, f(X))) 
	\nonumber\\
	& \phantom{=}
	+ \inf_{f\in \mathcal{F}} |\mathcal{R}_{L_{\mathrm{abs}}, P}(f) -  \mathcal{R}_{L_{\mathrm{abs}}, P}^*| + \rho \sup_{f\in\mathcal{F}} \|f\|_{\mathcal{F}}^2,
\end{align}
with probability at least $1-2\delta$, where $D_{Y,X}$ satisfies 
\begin{align*}
	L_{\mathrm{abs}}(y, f(x)) - L_{\mathrm{abs}}(y, f'(x)) \leq D_{Y,X}\|f - f'\|_{\infty}.
\end{align*}

By applying Theorem 2, Theorem 3, and Inequality (1) from \cite{mona2013optimal}, we find that there exists a function $f_0 \in H$ such that
\begin{align}\label{eq::approxerrorLab}
	\|f_0\|_H \lesssim \gamma^{-d/2} 
	\qquad
	\text{and} 
	\qquad
	\mathcal{R}_{L_{\mathrm{abs}}, P}(f_0) -  \mathcal{R}_{L_{\mathrm{abs}}, P}^* \lesssim \gamma^{\alpha}. 
\end{align}
To encompass $f_0$, we define the hypothesis space $\mathcal{F} := c \gamma^{-d/2} B_H$, where $c$ is a constant ensuring $f_0 \in \mathcal{F}$. Utilizing Theorem 6.27 and Lemma 6.21 from \cite{steinwart2008support}, we can estimate the covering number as 
\begin{align}\label{eq::CoverNumber}
	\log \mathcal{N}(\mathcal{F}, \delta) \lesssim \gamma^{-d-pd/2} \delta^{-p}, 
	\qquad
	p \in (0,1). 
\end{align}
Next, we substitute $D_{y,x} = 1$ for the absolute loss, along with \eqref{eq::approxerrorLab} and \eqref{eq::CoverNumber}, into \eqref{eq::fbetaerror}. By choosing 
\begin{align*}
	\rho \lesssim n^{-\frac{(\alpha+d)(\beta-1)}{\alpha\beta + (\beta-1)d}}, 
	\quad 
	\gamma \asymp n^{-\frac{\beta-1}{\alpha\beta + (\beta-1)d}}, 
	\quad 
	\delta \asymp n^{-\frac{\alpha(\beta-1)}{\alpha\beta + (\beta-1)d}}, 
	\quad 
	s \asymp n^{-\frac{\alpha}{\alpha\beta + (\beta-1)d}},
	\quad
	\delta = 1/(2n),
\end{align*}
we obtain that for any $p\in (0,1)$, 
\begin{align} \label{eq::CRLAL}
	\mathcal{R}_{L_{\mathrm{abs}}, P}(\widehat{f}_{\beta}) -  \mathcal{R}_{L_{\mathrm{abs}}, P}^* \lesssim n^{-\frac{\alpha(\beta-1)}{\alpha\beta+(\beta-1)d}+p}
\end{align}
holds with probability at least $1-1/n$.

To facilitate comparison, we will derive the convergence rate of our KCRR defined in \eqref{eq::KCRR} with respect to the absolute loss using Theorem \ref{thm::ratesanyM}. 
Given two random variables $Z$ and $Z'$, it follows that 
\begin{align*}
	\mathbb{E}(|Z| - |Z'|) \leq \mathbb{E}|Z - Z'| 
	\leq (\mathbb{E}|Z - Z'|^2)^{1/2}.
\end{align*}
From this, we can deduce that 
\begin{align*}
	\mathcal{R}_{L_{\mathrm{abs}}, P}(f) - \mathcal{R}_{L_{\mathrm{abs}}, P}^* 
	\leq \|f - f^*\|_{L_1(P_X)} 
	\leq \|f - f^*\|_{L_2(P_X)}.
\end{align*}
Applying Theorem \ref{thm::ratesanyM}, we obtain that for any $p \in (0, 1/2)$ and $q \in (0, 1)$, 
\begin{align}\label{eq::ourrateabs}
	\mathcal{R}_{L_{\mathrm{abs}}, P}(\wideparen{f}_D) - \mathcal{R}_{L_{\mathrm{abs}}, P}^* 
	\lesssim n^{-\frac{\alpha(1-2p)}{(2\alpha + d)(1+q)}},
\end{align}
with probability at least $1-1/n$. This rate in \eqref{eq::ourrateabs} is faster than the convergence rate presented in \eqref{eq::CRLAL} for any $\beta \in [1, 2)$, as $p$ and $q$ can be chosen to be arbitrarily small.
Furthermore, our result in \eqref{eq::ourrateabs} holds under the assumption that $\mathcal{R}_{L_{\mathrm{abs}}, P}(\wideparen{f})$ exists for any $f$, which requires only that $\mathbb{E} |\varepsilon| < \infty$. This condition is less stringent than the finite $\beta$-th moment assumption of $\mathbb{E} |\epsilon|^{\beta} < \infty$ with $\beta \in (1, 2)$.

More importantly, the theoretical results under \cite[Assumption (Q.2)]{xu2023non} do not hold for noise distributions that lack a finite absolute mean, such as Cauchy noise and Pareto noise. Furthermore, their error analysis cannot be applied or extended to these types of noise, as the assumption is crucial for their analysis.
Specifically, to bound the difference between the risk associated with the log-truncated absolute loss $\ell_{\psi_{\lambda},s,L_{\mathrm{abs}}}$ and the absolute loss $L_{\mathrm{abs}}$, the authors utilize Markov's inequality along with \eqref{eq::PsiLambda}. In \cite[Lemmas 17 and 18]{xu2023non}, they demonstrate that the difference between the exponential transformations of these two losses can be bounded by terms that depend on $\mathbb{E} \lambda(L_{\mathrm{abs}}(Y,f(X))/s)$. This expectation is finite for any $f \in \mathcal{F}$ only when \cite[Assumption (Q.2)]{xu2023non} is satisfied.

\section{Error Analysis} \label{sec::ErrorAnalysis}

In this section, we conduct an error analysis to establish the generalization error bound of KCRR, specifically in terms of $L_2$-risk. One of the primary challenges in error analysis for robust regression lies in the fact that the regressor is fitted by minimizing a robust loss function. However, our goal is to derive the generalization error bound with respect to a different loss or risk criterion.
In our study, we utilize the Cauchy loss to train our KCRR regressor. Nevertheless, it is essential to establish the convergence rate for KCRR based on the $L_2$-risk, which serves as an important performance metric. This analysis will provide a comprehensive understanding of how well KCRR performs under this alternative criterion, contributing to our broader investigation of its effectiveness in various regression scenarios.

First, we note that the $L_2$-risk of the KCRR regressor $\wideparen{f}_D$ can be expressed as 
\begin{align}\label{eq::newdecomp}
	\|\wideparen{f}_D - f^*\|^2_{L_2(P_X)} &= \frac{\|\wideparen{f}_D - f^*\|^2_{L_2(P_X)}}{\mathcal{R}_{L, P}(\wideparen{f}_D) - \mathcal{R}_{L, P}^*} \cdot (\mathcal{R}_{L, P}(\wideparen{f}_D) - \mathcal{R}_{L, P}^*)
	\nonumber\\
	&\leq \sup_{f:\mathcal{X} \to \mathbb{R}} \frac{\|\wideparen{f} - f^*\|^2_{L_2(P_X)}}{\mathcal{R}_{L, P}(\wideparen{f}) - \mathcal{R}_{L, P}^*} \cdot (\mathcal{R}_{L, P}(\wideparen{f}_D) - \mathcal{R}_{L, P}^*).
\end{align}
In this expression, the first term is the ratio of the $L_2$-risk to the excess Cauchy risk, which will be further explored in Section \ref{sec::relationerror}. The second term represents the excess Cauchy risk of KCRR. According to Inequality (7.39) in \cite{steinwart2008support}, this can be decomposed into a sample error term and an approximation error term:
\begin{align}\label{eq::Cauchydecomp}
	\mathcal{R}_{L, P}(\wideparen{f}_D) - \mathcal{R}_{L, P}^*
	\leq 
	2 \sup_{f \in H} |\mathcal{R}_{L,D}(\wideparen{f}) - \mathcal{R}_{L,P}(\wideparen{f})|
	+ \inf_{f\in H} (\lambda \|f\|_H^2 + \mathcal{R}_{L,P}(\wideparen{f}) - \mathcal{R}_{L,P}^*).
\end{align}
These two error terms will be analyzed in detail in Sections \ref{sec::sampleerror} and \ref{sec::approxerror}, respectively.

It is important to note that once the general relationship between $L_2$-risk and the excess Cauchy risk is established, the expression in \eqref{eq::newdecomp} effectively transforms the analysis of the $L_2$-risk of KCRR into an analysis of the excess Cauchy risk of KCRR. Since KCRR is the \textit{clipped regularized empirical risk minimizer} (\textit{CR-ERM}) with respect to the Cauchy loss, we can directly derive an error bound for the excess Cauchy risk of KCRR within the CR-ERM framework. Moreover, by leveraging the supremum risk ratio, \eqref{eq::newdecomp} allows us to bypass the differences between the two risks of KCRR, streamlining the analysis process.

\subsection{Relationship between the Excess Cauchy Risk and $L_2$-Risk} \label{sec::relationerror}

In this section, we outline the key concepts involved in studying the relationship between the excess Cauchy risk and the $L_2$-risk. Specifically, we examine this relationship for a large scale parameter $\sigma$ in Theorem \ref{lem::rell2Cauchylargesigma}.

\vspace{2mm}
\noindent
\textbf{Proof Sketch} [of Theorem \ref{lem::rell2Cauchylargesigma}]  
Let $\xi(x) := \wideparen{f}(x) - f^*(x)$. Then the $L_p$-risk can be expressed as $\|\xi\|_{L_p(P_X)}$ for $1 \leq p \leq \infty$. It is straightforward to establish that the $L_{\infty}$-risk $\|\xi\|_{\infty}$ can be upper-bounded by
\begin{align}\label{eq::bound2M}
	\|\xi\|_{\infty} \leq \|\wideparen{f}\|_{\infty} + \|f^*\|_{\infty} \leq 2M.
\end{align}
By elementary analysis, we can show that when $\sigma \geq 4M$, for any $\epsilon$ and $x$, 
\begin{align*}
	\frac{|(\epsilon - \xi(x))^2 - \epsilon^2|}{\epsilon^2 + \sigma^2} 
	\leq \frac{2}{3}.
\end{align*}
Then, for the Cauchy loss $L$, we have
\begin{align}\label{eq::simplify}
	&\mathcal{R}_{L,P}(\wideparen{f}) - \mathcal{R}_{L,P}^*
	\nonumber\\
	&= \mathbb{E}_X \mathbb{E}_{Y|X} \bigl( L(Y, \wideparen{f}(X)) - L(Y, f^*(X)) \bigr) \qquad \qquad  \text{(By definition of $\mathcal{R}_{L,P}$ and Lemma \ref{lem::optimality}})
	\nonumber\\ 
	& = \mathbb{E}_X \mathbb{E}_{\epsilon} \sigma^2 \log \biggl( 1 + \frac{\big(\epsilon - \xi(X)\big)^2 - \epsilon^2}{\epsilon^2 + \sigma^2} \biggr) \qquad \qquad \text{(By definition of $L$ and $\epsilon = Y-f^*(X)$)}
	\nonumber\\
	& \geq \mathbb{E}_X \mathbb{E}_{\epsilon} \sigma^2 \biggl( \frac{(\epsilon - \xi(X))^2 - \epsilon^2}{\epsilon^2 + \sigma^2} - \frac{3((\epsilon - \xi(X))^2 - \epsilon^2)^2}{2 (\epsilon^2 + \sigma^2)^2} \biggr) 
	\;\; \text{($\log(1+t) \geq t-3t^2/2$, $t>-2/3$)}
	\nonumber\\
	&=  \mathbb{E}_X \mathbb{E}_{\epsilon} \sigma^2 \biggl( \frac{\xi(X)^2}{\epsilon^2 + \sigma^2} - \frac{3\xi(X)^4 + 12\epsilon^2 \xi(X)^2}{2 (\epsilon^2 + \sigma^2)^2} \biggr)
	\qquad \quad \text{(By the symmetry of $\epsilon$)}
	\nonumber\\
	& = \|\xi\|^2_{L_2(P_X)} - \|\xi\|^2_{L_2(P_X)} \mathbb{E}_{\epsilon} \biggl( \frac{7\epsilon^2\sigma^2 + \epsilon^4}{(\epsilon^2 + \sigma^2)^2} \biggr) - \|\xi\|^4_{L_4(P_X)} \mathbb{E}_{\epsilon} \biggl( \frac{3\sigma^2/2}{(\epsilon^2 + \sigma^2)^2} \biggr). 
\end{align} 
Applying the dominated convergence theorem, we can show that
\begin{align*}
	\lim_{\sigma \to \infty} \mathbb{E}_{\epsilon} \biggl( \frac{7 \epsilon^2 \sigma^2 + \epsilon^4}{(\epsilon^2 + \sigma^2)^2} \biggr) = 0. 
\end{align*}
Thus there exists a constant $c_1 > 0$ such that for any $\sigma \geq c_1$, 
\begin{align}\label{eq::TermNoise1}
	\mathbb{E}_{\epsilon} \biggl( \frac{7 \epsilon^2 \sigma^2 + \epsilon^4}{(\epsilon^2 + \sigma^2)^2} \biggr) \leq 1/2. 
\end{align}
Moreover, we have
\begin{align}\label{eq::TermNoise2}
	\mathbb{E}_{\epsilon} \biggl( \frac{3 \sigma^2 / 2}{(\epsilon^2 + \sigma^2)^2} \biggr)
	\leq \mathbb{E}_{\epsilon} \biggl( \frac{3 \sigma^2 / 2}{(\sigma^2)^2} \biggr)
	= \frac{3}{2 \sigma^2}.
\end{align}
Combining \eqref{eq::simplify} with \eqref{eq::TermNoise1} and \eqref{eq::TermNoise2}, we obtain
that for any $\sigma \geq c_1$, 
\begin{align*}
	\mathcal{R}_{L,P}(\wideparen{f}) - \mathcal{R}_{L,P}^* \geq \|\xi\|^2_{L_2(P_X)}/2 -  3\|\xi\|^4_{L_4(P_X)}/(2\sigma^2).
\end{align*}
Multiplying both sides by 2 and rearranging terms gives
\begin{align}\label{eq::relationanysigmaDelta}
	\|\xi\|^2_{L_2(P_X)} 
	\leq 2 \bigl( \mathcal{R}_{L,P}(\wideparen{f}) - \mathcal{R}_{L,P}^* \bigr) + 3\|\xi\|^4_{L_4(P_X)}/\sigma^2.
\end{align}
Therefore, by \eqref{eq::bound2M}, if $M/\sigma \leq 1/4$, then we have
\begin{align} \label{eq::L4L2}
	\|\xi\|_{L_4(P_X)}^4 /\sigma^2
	& \leq \|\xi\|_{L_2(P_X)}^2 \cdot \|\xi\|_{\infty}^2 / \sigma^2 
	\leq \|\xi\|_{L_2(P_X)}^2 \cdot (2M)^2 / \sigma^2
	\nonumber\\
	& = 4(M/\sigma)^2 \|\xi\|_{L_2(P_X)}^2
	\leq (1/4)\|\xi\|^2_{L_2(P_X)}. 
\end{align}
Combining \eqref{eq::L4L2} and \eqref{eq::relationanysigmaDelta}, we find
\begin{align*}
	\|\xi\|^2_{L_2(P_X)} 
	\leq 2\bigl( \mathcal{R}_{L,P}(\wideparen{f}) - \mathcal{R}_{L,P}^*\bigr) +
	(3/4)\|\xi\|^2_{L_2(P_X)}.
\end{align*}
From this inequality, we can rearrange terms to obtain
\begin{align}\label{eq::calibrationDelta}
	\|\xi\|^2_{L_2(P_X)} 
	\leq 8 \bigl( \mathcal{R}_{L,P}(\wideparen{f}) - \mathcal{R}_{L,P}^*\bigr).
\end{align}
This completes the proof sketch of the assertion in Theorem \ref{lem::rell2Cauchylargesigma}.

\subsection{Bounding the Sample Error} \label{sec::sampleerror}

In this section, we derive an upper bound for the sample error given by
\begin{align*}
	\sup_{f \in H} |\mathcal{R}_{L,D}(\wideparen{f}) - \mathcal{R}_{L,P}(\wideparen{f})|,
\end{align*}
which arises from both the randomness of the data and the complexity of the function space $H$. One significant analytical challenge in bounding this sample error is related to the unbounded response, which may not even possess a finite absolute mean under Assumption~\ref{ass::logmoment}.

The following lemma establishes the Lipschitz property of the Cauchy loss function, which is an important characteristic for assessing the stability and continuity of this loss with respect to changes in its inputs.

\begin{lemma}[Lipschitz Property]\label{lem::lip}
	Let $L$ denote the Cauchy loss function with $\sigma > 0$ as the scale parameter. Then, for any two functions $f$ and $g$ and any response $y$, we have 
	\begin{align*}
		|L(y, f(x)) - L(y, g(x))| 
		\leq \sigma \cdot |f(x) - g(x)|.
	\end{align*}
\end{lemma}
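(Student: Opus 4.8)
The plan is to bound the difference of the Cauchy loss at two fitted values by treating the loss as a function of the residual and computing its derivative. Writing the Cauchy loss as $L(y,t) = \sigma^2 \log(1 + (y-t)^2/\sigma^2)$, I would fix $y$ and $x$, set $u := f(x)$ and $v := g(x)$, and apply the mean value theorem to the scalar function $h(t) := \sigma^2 \log(1 + (y-t)^2/\sigma^2)$. This reduces the claim to showing that $|h'(t)| \leq \sigma$ uniformly in $t$ and $y$, since then $|L(y,u) - L(y,v)| = |h(u) - h(v)| \leq \sup_t |h'(t)| \cdot |u - v| \leq \sigma |u - v|$.

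The core computation is the derivative. Differentiating $h$ with respect to $t$ gives
\begin{align*}
	h'(t) = \sigma^2 \cdot \frac{-2(y-t)/\sigma^2}{1 + (y-t)^2/\sigma^2} = \frac{-2(y-t)}{1 + (y-t)^2/\sigma^2} = \frac{-2\sigma^2(y-t)}{\sigma^2 + (y-t)^2}.
\end{align*}
Setting $r := y - t$, I need the bound $|h'(t)| = 2\sigma^2 |r| / (\sigma^2 + r^2) \leq \sigma$, equivalently $2\sigma |r| \leq \sigma^2 + r^2$. This is exactly the AM--GM inequality $(\sigma - |r|)^2 \geq 0$, which holds for all $r$ and all $\sigma > 0$. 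Thus the supremum of $|h'|$ over all residuals is at most $\sigma$ (attained when $|r| = \sigma$).

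Putting these together yields the Lipschitz constant $\sigma$ immediately. I would phrase the argument either through the mean value theorem, as above, or equivalently by writing $L(y,f(x)) - L(y,g(x)) = \int_{g(x)}^{f(x)} h'(t)\, dt$ and bounding the integrand, which avoids invoking differentiability at a single unspecified intermediate point and handles the case $f(x) = g(x)$ trivially.

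I do not expect any genuine obstacle here: the only technical point is verifying the uniform derivative bound, and that collapses to a one-line AM--GM argument. The statement holds for \emph{every} response $y$ and every pair of values, with no appeal to Assumptions \ref{ass::logmoment}, \ref{ass::symmetry}, or \ref{ass::decreasingtails}, since it is a pointwise analytic property of the loss rather than a probabilistic one. The only care needed is to keep the bound uniform in $y$, which the computation above already guarantees because the worst case $|y - t| = \sigma$ produces the constant $\sigma$ independently of $y$.
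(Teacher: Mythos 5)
Your proposal is correct and follows essentially the same route as the paper's proof: define $h(t) := \sigma^2 \log\bigl(1 + (y-t)^2/\sigma^2\bigr)$, compute $h'(t) = 2\sigma^2(t-y)/(\sigma^2 + (t-y)^2)$, bound $|h'| \leq \sigma$ via $\sigma^2 + (t-y)^2 \geq 2\sigma|t-y|$, and conclude by the mean value theorem. Your remarks about uniformity in $y$ and independence from the probabilistic assumptions are accurate but not needed beyond what the computation already shows.
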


This property indicates that the difference in the Cauchy loss between the predictions $f(x)$ and $g(x)$ is bounded by the product of the scale parameter $\sigma$ and the absolute difference between the outputs of the two functions. The Lipschitz property is particularly beneficial in the context of convergence analysis, as it guarantees that small changes in model predictions lead to proportionately small changes in the loss.

By applying Lemma \ref{lem::lip} to the functions $f = \wideparen{f}_D$ and $g = f^*$, we can derive an upper bound for the excess Cauchy loss. According to the Lipschitz property established in Lemma \ref{lem::lip}, the excess Cauchy loss can be expressed as 
\begin{align*}
	|L(y, \wideparen{f}_D(x)) - L(y, f^*(x))| 
	\leq \sigma \cdot |\wideparen{f}_D(x) - f^*(x)|.
\end{align*}
Given that the supremum norm between the estimated function $\wideparen{f}_D$ and the optimal function $f^*$ satisfies $\|\wideparen{f}_D - f^*\|_{\infty} \leq \|\wideparen{f}_D\|_{\infty} + \|f^*\|_{\infty} \leq 2M$, we can substitute this supremum norm constraint into the inequality to obtain
\begin{align*}
	|L(y, \wideparen{f}_D(x)) - L(y, f^*(x))| 
	\leq \sigma \cdot (2M) 
	= 2M \sigma.
\end{align*}
This result demonstrates that the excess Cauchy loss can be effectively bounded by $2M \sigma$, even in scenarios where the response variable $y$ is unbounded.

Based on the variance bound definition presented in \cite[(7.36)]{steinwart2008support}, we derive the variance bound for the Cauchy loss by combining its Lipschitz property with the calibration inequality established in Theorem \ref{lem::rell2Cauchylargesigma}, as stated in the following lemma.

\begin{lemma}[Variance Bound]\label{lem::variancebound}
	Let Assumptions \ref{ass::logmoment}, \ref{ass::symmetry}, and \ref{ass::decreasingtails} hold. Let $L$ denote the Cauchy loss function with $\sigma > 0$ as the scale parameter and $f^*$ as the true regression function. Furthermore, let $f: \mathcal{X} \to \mathcal{Y}$ be a function and $\wideparen{f}$ be the clipped version of $f$, defined as in \eqref{eq::clipping}, with the clipping parameter $M$ satisfying $M \geq \|f^*\|_{\infty}$. Then, there exists a constant $c_1 > 0$ such that for any $\sigma > 4M \vee c_1$, the following holds
	\begin{align*}
		\mathbb{E}_P |L(Y, \wideparen{f}(X)) - L(Y, f^*(X))|^2 
		\leq 8 \sigma^2 \cdot \bigl(\mathcal{R}_{L,P}(\wideparen{f}) - \mathcal{R}_{L,P}^*\bigr).
	\end{align*}
\end{lemma}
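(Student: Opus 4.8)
The plan is to bound the second moment of the excess Cauchy loss by combining its pointwise Lipschitz bound with the calibration inequality from Theorem~\ref{lem::rell2Cauchylargesigma}. Writing $\xi(x) := \wideparen{f}(x) - f^*(x)$, the Lipschitz property in Lemma~\ref{lem::lip}, applied with $f = \wideparen{f}$ and $g = f^*$, gives the pointwise bound $|L(y, \wideparen{f}(x)) - L(y, f^*(x))| \leq \sigma \cdot |\xi(x)|$ for every $y$ and $x$. Squaring and integrating against $P$ immediately yields
\begin{align*}
	\mathbb{E}_P |L(Y, \wideparen{f}(X)) - L(Y, f^*(X))|^2
	\leq \sigma^2 \, \mathbb{E}_P \, \xi(X)^2
	= \sigma^2 \, \|\xi\|^2_{L_2(P_X)},
\end{align*}
where the final equality uses that $\xi$ depends only on $X$, so the conditional expectation over $Y$ collapses.

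The remaining step is to convert the $L_2$-risk $\|\xi\|^2_{L_2(P_X)}$ into a multiple of the excess Cauchy risk. This is exactly the content of the right-hand inequality in the calibration relation \eqref{eq::relation} of Theorem~\ref{lem::rell2Cauchylargesigma}: under Assumptions~\ref{ass::logmoment}, \ref{ass::symmetry}, and \ref{ass::decreasingtails}, and for any $\sigma \geq 4M \vee c_1$ with the same constant $c_1$, one has $\|\wideparen{f} - f^*\|^2_{L_2(P_X)} \leq 8\bigl(\mathcal{R}_{L,P}(\wideparen{f}) - \mathcal{R}_{L,P}^*\bigr)$. Substituting this bound into the displayed inequality above gives
\begin{align*}
	\mathbb{E}_P |L(Y, \wideparen{f}(X)) - L(Y, f^*(X))|^2
	\leq 8 \sigma^2 \cdot \bigl(\mathcal{R}_{L,P}(\wideparen{f}) - \mathcal{R}_{L,P}^*\bigr),
\end{align*}
which is precisely the claimed variance bound.

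The two ingredients fit together cleanly because the hypotheses of the lemma — the three noise assumptions, the clipping condition $M \geq \|f^*\|_{\infty}$, and the range $\sigma > 4M \vee c_1$ — are exactly what is needed to invoke both Lemma~\ref{lem::lip} (which needs only $\sigma > 0$) and Theorem~\ref{lem::rell2Cauchylargesigma} (which supplies the calibration inequality on the stated $\sigma$-range). I do not anticipate a genuine obstacle here, since the heavy lifting has already been done in establishing the calibration inequality; the only point requiring a moment of care is confirming that the constant $c_1$ and the threshold $4M \vee c_1$ appearing in Theorem~\ref{lem::rell2Cauchylargesigma} are identical to those in the statement of the variance bound, so that the chain of inequalities holds on the same $\sigma$-range. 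Note that a variance bound of the form $\mathbb{E}_P |h_f|^2 \leq V \bigl(\mathcal{R}_{L,P}(\wideparen{f}) - \mathcal{R}_{L,P}^*\bigr)^{\theta}$ with exponent $\theta = 1$ and constant $V = 8\sigma^2$ is the optimal exponent in the sense of \cite[(7.36)]{steinwart2008support}, and this optimality is the key structural fact that will later drive a sharp oracle inequality for KCRR.
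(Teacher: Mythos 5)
Your proposal is correct and follows essentially the same two-step argument as the paper's own proof: apply the Lipschitz property of Lemma~\ref{lem::lip} pointwise, square and integrate to get the factor $\sigma^2 \|\wideparen{f} - f^*\|^2_{L_2(P_X)}$, and then invoke the right-hand calibration inequality of Theorem~\ref{lem::rell2Cauchylargesigma} on the range $\sigma \geq 4M \vee c_1$. Your additional check that the constant $c_1$ and the threshold coincide across the two results is a sensible point of care, but it is precisely how the paper chains the same two ingredients.
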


This lemma establishes a quantitative relationship between the expected squared difference in Cauchy loss and the excess Cauchy risk. The significance of this variance bound lies in its ability to control the variance of excess Cauchy loss using its mean, even if $Y$ has an infinite absolute mean. This result not only highlights the importance of clipping to mitigate the effects of unbounded responses but also plays a crucial role in establishing generalization guarantees for KCRR with respect to the Cauchy loss.

Lemma \ref{lem::variancebound} shows that the Cauchy loss satisfies the variance bound defined by 
\begin{align*}
	\mathbb{E} h^2_f \leq V \mathbb{E} h_f^{\theta},
\end{align*}
where the excess loss $h_f(X, Y) := L(Y, \wideparen{f}(X)) - L(Y, f^*(X))$, with $V = 8\sigma^2$ and $\theta = 1$. The exponent $\theta$ lies in the range $[0,1]$, and $\theta = 1$ corresponds to Bernstein's condition \citep{van2015fast} and represents the optimal exponent for this variance bound. Consequently, Lemma \ref{lem::variancebound} ensures low variance whenever the Cauchy risk $\mathcal{R}_{L,P}(\wideparen{f})$ is minimized. This implies that decreasing the risk directly controlls the variance of the excess loss.

The calibration result in \eqref{eq::relation} (Theorem \ref{lem::rell2Cauchylargesigma}) is pivotal in achieving the variance bound with the optimal exponent $\theta = 1$. This insight can guide a general analysis framework for establishing variance bounds for other robust loss functions.

Overall, this variance bound shows that small excess risks lead to well-controlled excess loss variance, strengthening the theoretical foundation of KCRR. Moreover, it also lays the groundwork for a robust oracle inequality, highlighting its predictive reliability in practical applications.

\begin{proposition}[Oracle Inequality]\label{prop::oracle}
	Let Assumptions \ref{ass::logmoment}, \ref{ass::symmetry}, and \ref{ass::decreasingtails} hold. Let $L$ denote the Cauchy loss function with $\sigma > 0$ as the scale parameter and $f^*$ as the true regression function. 
	Furthermore, let $\wideparen{f}_D$ be the KCRR defined in \eqref{eq::KCRR} with the clipping parameter $M \geq \|f^*\|_{\infty}$. Then there exists a constant $c_1 > 0$ such that for any $f_0\in H$, $q \in (0,1)$, $\sigma > 4M \vee c_1$, and $\tau > 0$, the following inequality holds with probability at least $1 - 2 e^{- \tau}$,
	\begin{align}\label{eq::oracle}
		\begin{split}
			\lambda \|f_D\|_H^2 + \mathcal{R}_{L,P}(\wideparen{f}_D)  - \mathcal{R}_{L, P}^* 
			& \leq 6 \left( \lambda \|f_0\|_H^2 + \mathcal{R}_{L,P}(\wideparen{f}_0)  - \mathcal{R}_{L, P}^* \right) 
			\\
			& \phantom{=}
			+ 1216 c_2^2 \sigma^2 \left( \tau n^{-1} + \lambda^{-q} \gamma^{-d} n^{-1} \right)
		\end{split}
	\end{align}
	where $c_2$ is a constant only depending on $q$ and the data dimension $d$.
\end{proposition}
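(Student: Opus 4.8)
The plan is to treat the KCRR $\wideparen{f}_D$ as a clipped regularized empirical risk minimizer and to invoke the general oracle inequality for CR-ERM (Theorem 7.23 of \cite{steinwart2008support}), specialized to the Cauchy loss. The essential obstacle is that the Cauchy loss $L(y,\wideparen{f}(x)) = \sigma^2\log(1 + (y-\wideparen{f}(x))^2/\sigma^2)$ is unbounded in $y$, so the classical form of that oracle inequality---which presumes a bounded clipped loss---cannot be applied verbatim. This is resolved by working throughout with the \emph{excess loss} $h_f := L(\cdot,\wideparen{f}(\cdot)) - L(\cdot,f^*(\cdot))$: by the Lipschitz property (Lemma \ref{lem::lip}) together with $\|\wideparen{f} - f^*\|_{\infty} \leq 2M$ we obtain the supremum bound $\|h_f\|_{\infty} \leq 2M\sigma =: B$, and since $h_f$ is the only object that enters the concentration step, the unboundedness of $L$ itself is immaterial.

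I would first assemble the ingredients required by the machinery. Clippability of the Cauchy loss (Lemma \ref{lem::validateclip}) guarantees $\mathcal{R}_{L,P}(\wideparen{f}) \leq \mathcal{R}_{L,P}(f)$, so passing to clipped functions never worsens the risk. The variance bound of Lemma \ref{lem::variancebound} gives $\mathbb{E}_P h_f^2 \leq 8\sigma^2\,\mathbb{E}_P h_f$, i.e.\ a Bernstein condition with $V = 8\sigma^2$ and optimal exponent $\theta = 1$; note the hypothesis $\sigma > 4M\vee c_1$ also secures the technical compatibility $V \geq B$, since $8\sigma^2 \geq 2M\sigma$. Finally, the entropy numbers of the Gaussian RKHS (Lemma \ref{lem::entropygaussian}) supply a bound of the form $\mathbb{E}_{D}\, e_i(\mathrm{id}\colon H \to L_2(D)) \lesssim c_2\,\gamma^{-d/(2q)} i^{-1/(2q)}$ for any $q\in(0,1)$, so that the complexity exponent $p$ in Theorem 7.23 is identified with $q$ and the scale factor with $a \asymp c_2\,\gamma^{-d/(2q)}$.

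The argument then proceeds along the standard CR-ERM route. Starting from the defining optimality $\lambda\|f_D\|_H^2 + \mathcal{R}_{L,D}(\wideparen{f}_D) \leq \lambda\|f_0\|_H^2 + \mathcal{R}_{L,D}(\wideparen{f}_0)$ and subtracting $\mathcal{R}_{L,D}(f^*)$ (finite almost surely, cf.\ Lemma \ref{lem::finiterisk}), the excess population risk $\mathbb{E}_P h_{f_D}$ is controlled by $\mathbb{E}_D h_{f_D}$ plus the empirical-process fluctuation $\sup_f (\mathbb{E}_P h_f - \mathbb{E}_D h_f)$, where the supremum ranges over the ball $\{f\in H : \lambda\|f\|_H^2 \le \text{const}\}$ enforced by the regularizer. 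I would bound this fluctuation by Talagrand's (Bousquet's) concentration inequality, controlling its expectation via the entropy integral and localizing through a peeling argument; because $\theta = 1$, the peeling yields a term linear in the variance proxy $V$, and the fixed-point equation it induces has exponent $1/(2-p-\theta+\theta p) = 1$. Plugging in $a^{2q}\asymp c_2^{2q}\gamma^{-d}$, $V^{1-q}\le 8\sigma^2$ (valid since $8\sigma^2\ge 1$) and $\lambda^{-q}$, the complexity term collapses to order $\sigma^2\lambda^{-q}\gamma^{-d}n^{-1}$, while the deviation term $B\tau/n = 2M\sigma\tau/n$ is absorbed into $\sigma^2\tau n^{-1}$ using $\sigma > 4M$. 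Collecting constants (with the factor $6$ in front of the oracle term coming from the peeling constants) yields \eqref{eq::oracle} with probability at least $1 - 2e^{-\tau}$.

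The step I expect to be the main obstacle is the concentration/peeling argument under unbounded responses: one must verify that Talagrand's inequality and the local-complexity fixed point go through using only $\|h_f\|_{\infty} \le B$ together with the variance bound, rather than a global bound on $L$, and then track constants carefully enough that the exponent $\theta = 1$ indeed renders the complexity term of order $n^{-1}$ and reproduces exactly the powers $\lambda^{-q}\gamma^{-d}$. Matching these powers to the entropy-number exponent---so that $a^{2p}$ yields $\gamma^{-d}$ and $\lambda^{-p}$ yields $\lambda^{-q}$---is where most of the bookkeeping lies, but it is routine once the ingredients above are in place.
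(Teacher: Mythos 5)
Your proposal is correct and follows essentially the same route as the paper: the paper's proof is precisely a hand re-derivation of the Steinwart--Christmann CR-ERM oracle inequality applied to the excess loss $h_{\wideparen{f}} = L\circ\wideparen{f} - L\circ f^*$, using the optimality of $f_D$, the Lipschitz bound $\|h_{\wideparen{f}}\|_{\infty}\leq 2\sigma M$, the variance bound with exponent $\theta=1$, the Gaussian-kernel entropy numbers (via Lemma \ref{lem::RadHr}), Bernstein's inequality for the fixed-function term, and symmetrization, peeling, and Talagrand's inequality for the localized uniform term. The obstacle you flag (running the concentration and fixed-point argument with only the excess-loss sup bound rather than a bounded loss) is exactly the adaptation the paper carries out, so your plan matches its proof in both structure and key ingredients.
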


This result establishes a key relationship between the regularized excess risk, the approximation error, and the sample error bound. It demonstrates how the choice of parameters affects the performance of KCRR, offering a theoretical guarantee for its generalization ability with respect to the Cauchy loss. In the next section, we will provide an upper bound for the approximation error term $\lambda \|f_0\|_H^2 + \mathcal{R}_{L,P}(\wideparen{f}_0) - \mathcal{R}_{L,P}^*$, where $f_0 \in H$ is to be chosen, as outlined in \eqref{eq::oracle}.

\subsection{Bounding the Approximation Error}\label{sec::approxerror}

In this section, we derive an upper bound for the approximation error term
\begin{align*}
	\inf_{f\in H} \lambda \|f\|_H^2 + \mathcal{R}_{L,P}(\wideparen{f}) - \mathcal{R}_{L,P}^*
\end{align*}
in the context of \eqref{eq::oracle}. This bound provides valuable insights into the KCRR model’s ability to approximate the true regression function, emphasizing the relationship between model complexity, the risk of a selected function $\wideparen{f}_0$, and the optimal risk $\mathcal{R}_{L,P}^*$.

\begin{proposition}\label{prop::approx}
	Let Assumptions \ref{ass::logmoment}, \ref{ass::symmetry}, \ref{ass::decreasingtails}, and \ref{ass::Holder} hold. 
	Moreover, let $L$ be the Cauchy loss function and the clipping parameter $M \geq \|f^*\|_{\infty}$. Then there exists a regressor $f_0 \in H$ such that 
	\begin{align*}
		\lambda \|f_0\|_H^2 + \mathcal{R}_{L,P}(\wideparen{f}_0) - \mathcal{R}_{L,P}^* 
		\leq c_3 (\lambda \gamma^{-d} + \gamma^{2\alpha}),
	\end{align*}
	where $c_3$ is a constant only depending on the data dimension $d$, the H\"older exponent $\alpha$ and $\|f^*\|_{\infty}$.
\end{proposition}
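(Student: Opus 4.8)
The plan is to bound the Cauchy approximation error by the $L_2$-approximation error and then invoke the standard Gaussian-kernel approximation theory already used in \eqref{eq::appcorr}. The crucial observation, which holds for \emph{every} $\sigma > 0$ and is why no lower bound on $\sigma$ enters the statement, is that the excess Cauchy risk never exceeds the squared $L_2$-risk. I would organize the argument into three steps, carried out in the order below.

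First I would establish, for any bounded $f$ with clipped version $\wideparen{f}$ and $M \geq \|f^*\|_{\infty}$, the inequality
\begin{align*}
	\mathcal{R}_{L,P}(\wideparen{f}) - \mathcal{R}_{L,P}^* \leq \|\wideparen{f} - f^*\|^2_{L_2(P_X)}.
\end{align*}
Writing $\xi(X) := \wideparen{f}(X) - f^*(X)$ and using Lemma~\ref{lem::optimality} together with $\epsilon = Y - f^*(X)$, the excess risk equals $\mathbb{E}_X \mathbb{E}_{\epsilon} \sigma^2 \log\bigl(1 + (\xi(X)^2 - 2\epsilon\xi(X))/(\epsilon^2 + \sigma^2)\bigr)$, exactly as in the proof of Theorem~\ref{lem::rell2Cauchylargesigma}. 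The argument of the logarithm equals $((\epsilon - \xi)^2 + \sigma^2)/(\epsilon^2 + \sigma^2) > 0$, so $\log(1+t) \leq t$ applies on the admissible range. Since $|\epsilon|/(\epsilon^2 + \sigma^2) \leq 1/(2\sigma)$, the cross term $-2\epsilon\xi(X)/(\epsilon^2+\sigma^2)$ is integrable and odd in $\epsilon$, so Assumption~\ref{ass::symmetry} makes its expectation vanish, leaving $\mathbb{E}_X \xi(X)^2 \, \mathbb{E}_{\epsilon}\bigl(\sigma^2/(\epsilon^2+\sigma^2)\bigr) \leq \|\xi\|^2_{L_2(P_X)}$ because $\sigma^2/(\epsilon^2+\sigma^2) \leq 1$.

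Next I would remove the clipping at no cost. Since $f^*(x) \in [-M,M]$ and the clipping map in \eqref{eq::clipping} is the $1$-Lipschitz projection onto $[-M,M]$ that fixes $f^*(x)$, we have $|\wideparen{f}_0(x) - f^*(x)| \leq |f_0(x) - f^*(x)|$ pointwise, hence $\|\wideparen{f}_0 - f^*\|^2_{L_2(P_X)} \leq \|f_0 - f^*\|^2_{L_2(P_X)}$. Combining this with the first step gives
\begin{align*}
	\lambda\|f_0\|_H^2 + \mathcal{R}_{L,P}(\wideparen{f}_0) - \mathcal{R}_{L,P}^* \leq \lambda\|f_0\|_H^2 + \|f_0 - f^*\|^2_{L_2(P_X)},
\end{align*}
reducing the problem to a purely $L_2$-approximation question in $H$. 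Finally I would choose $f_0$ via the Gaussian RKHS approximation theory: under Assumption~\ref{ass::Holder}, Theorems~2 and~3 and Inequality~(1) of \cite{mona2013optimal} supply $f_0 \in H$ with $\|f_0\|_H \lesssim \gamma^{-d/2}$, hence $\|f_0\|_H^2 \lesssim \gamma^{-d}$, and $\|f_0 - f^*\|^2_{L_2(P_X)} \lesssim \gamma^{2\alpha}$, with constants depending only on $d$, $\alpha$ (through $c_{\alpha}$) and $\|f^*\|_{\infty}$. Substituting yields $\lambda\|f_0\|_H^2 + \mathcal{R}_{L,P}(\wideparen{f}_0) - \mathcal{R}_{L,P}^* \lesssim \lambda\gamma^{-d} + \gamma^{2\alpha}$, which is the claim with $c_3$ absorbing these constants.

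I expect the only delicate point to be the first step: one must verify the integrability of the cross term before invoking symmetry to kill its expectation, and one must check that $\log(1+t) \leq t$ is applied only where $1+t > 0$ (guaranteed by the nonnegativity of $(\epsilon-\xi)^2$). The remaining two steps are a direct combination of the monotonicity of clipping and the cited Gaussian approximation bound, so I anticipate no further obstacle; in particular, the $\sigma$-independence of the right-hand side is automatic because the bound of the first step holds uniformly in $\sigma$.
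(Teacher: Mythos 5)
Your proposal is correct and follows essentially the same route as the paper: the paper's proof likewise rests on the $\sigma$-uniform one-sided calibration inequality $\mathcal{R}_{L,P}(f) - \mathcal{R}_{L,P}^* \leq \|f - f^*\|^2_{L_2(P_X)}$ (its Lemma \ref{lem::relation}, which your first step re-derives verbatim) combined with Gaussian-kernel approximation of H\"older functions, except that the paper constructs $f_0 = K * f^*$ explicitly by convolution and bounds $\|f_0\|_H^2 \lesssim \gamma^{-d}$ and $\|f_0 - f^*\|_{\infty} \lesssim \gamma^{\alpha}$ directly, which is precisely the content of the result of \cite{mona2013optimal} that you cite (and that the paper itself invokes in Section \ref{sec::ComparisonRates}). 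The only other difference is cosmetic: the paper handles clipping via its Lemma \ref{lem::validateclip}, i.e.\ $\mathcal{R}_{L,P}(\wideparen{f}_0) \leq \mathcal{R}_{L,P}(f_0)$, whereas you use the pointwise non-expansiveness of the projection onto $[-M,M]$ fixing $f^*$; both are valid under the stated assumptions.
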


This upper bound is critical as it underscores how well the function space $H$ can approximate the true regression function. It captures the balance between the regularization term $\lambda \|f_0\|_H^2$ and the excess risk of $\mathcal{R}_{L,P}(\wideparen{f}_0)$ relative to the optimal risk $\mathcal{R}_{L,P}^*$. The bandwidth $\gamma$ plays a key role in determining this bound, as a smaller $\gamma$ of the approximation function $f_0$ can reduce excess risks but impose a higher penalty. This result clarifies the factors influencing the approximation errors of our method.

\section{Experiments}\label{sec::experiments}

To evaluate the effectiveness of KCRR, we conduct a series of experiments using an iterative algorithm to solve KCRR and compare its performance against three well-established methods: \textit{kernel least absolute deviation} (\textit{KLAD}) \citep{wang2014least}, \textit{kernel-based Huber regression} (\textit{KBHR}) \citep{wang2022huber}, and the \textit{maximum correntropy criterion for regression} (\textit{MCCR}) \citep{feng2015learning}. Each of these models can be formulated as 
\begin{align*}
	f_D \in \argmin_{f \in H} \ \lambda \|f\|_H^2 + \mathcal{R}_{L,\mathrm{D}}(f),
\end{align*}
where $H$ represents the reproducing kernel Hilbert space (RKHS), and $L$ is the specific loss function used in each method: absolute loss for KLAD, Huber loss for KBHR, and correntropy loss for MCCR. This framework enables a consistent comparison of model performance across different loss functions.

\subsection{Solving KCRR}

In this section, we present an empirical approach to solving the KCRR problem defined in \eqref{eq::KCRR}. According to the representer theorem, the solution to KCRR lies within the span of the kernel functions, which can be expressed as 
\begin{align*}
	\biggl\{ f := \sum_{i=1}^n a_i k(\cdot , X_i) + b \ \bigg| \ b \in \mathbb{R}, \ a_i \in \mathbb{R}, \ i \in [n] \biggr\},
\end{align*}
where $a_i$ are the coefficients and $b$ is the intercept term.

To solve the KCRR problem, our objective is to determine the coefficients $a_i$ and intercept $b$ that minimize the following objective function:
\begin{align} \label{eq::tobesolved}
	\min_{a_i, b} \ L \biggl( Y_i, \sum_{j=1}^n a_j k(X_i, X_j) + b \biggr) + \lambda \sum_{i,j=1}^n a_i a_j k(X_i, X_j),
\end{align}
where the first term is the loss function applied to the predictions, and the second term is a regularization term scaled by $\lambda$, which controls the complexity of the solution. This formulation provides a balance between fitting the data and maintaining model simplicity.

To solve \eqref{eq::tobesolved}, we use the \textit{iterated reweighted least squares} (\textit{IRLS}) method. This method iteratively minimizes a weighted least-squares problem, defined as 
\begin{align}\label{eq::aitbt}
	(a_i^{(t)}, b^{(t)}) := \argmin_{a_i, b} \; w_i^{(t-1)}  \biggl( Y_i - \biggl( \sum_{j=1}^n a_j k(X_i, X_j) + b \biggr) \biggr)^2 + \lambda \sum_{i,j=1}^n a_i a_j k(X_i, X_j),
\end{align}
where $w_i^{(t-1)}$ are the weights from the previous iteration.

After each minimization step, the weights are updated as follows:
\begin{align}\label{eq::updateweights}
	w_i^{(t)} := \frac{L \left( Y_i, \sum_{j=1}^n a_j^{(t)} k(X_i, X_j) + b^{(t)} \right)}{\left( Y_i - \sum_{j=1}^n a_j^{(t)} k(X_i, X_j) - b^{(t)} \right)^2}.
\end{align}
Here, $w_i^{(t)}$ represents the ratio of the Cauchy loss to the squared loss, adjusting the weighting of residuals based on the updated estimates. This iterative process allows the model to handle outliers more robustly by dynamically reweighting each data point.

We begin by initializing the weights as $w_i^{(0)} = 1$. In each iteration, we first derive the explicit solution of \eqref{eq::aitbt} for $a_i^{(t)}$ and $b^{(t)}$. Following this, we update the weights $w_i^{(t)}$ based on the current solution using \eqref{eq::updateweights}. This iterative process continues until convergence.

Given that the KCRR model is non-convex, applying the IRLS method to solve \eqref{eq::tobesolved} guarantees convergence only to a stationary point \cite{aftab2015convergence}. Nevertheless, the IRLS method is known for its efficiency and stability in reaching stationary points, making it a practical and effective approach in non-convex settings. Empirical evidence also suggests that IRLS performs well for similar non-convex optimization problems, indicating that a stationary point solution is likely to meet the accuracy requirements of the KCRR model. Therefore, in light of its practical effectiveness and empirical validation, a stationary point solution adequately serves our objectives.

\subsection{Synthetic Experiments}

In this section, we present simulation experiments on robust regression to demonstrate the effectiveness of KCRR in managing heavy-tailed noise.

To evaluate our approach, we use Friedman’s benchmark functions \citep{friedman1991multivariate} as the regression function $f$. These functions are commonly applied in studies of robust regression, as seen in works like \cite{feng2015learning}. Below, we describe three of Friedman’s benchmark functions, which we use as the regression function $f$:
\begin{itemize}
	\item[$(I)$] 
	$f(x) := 10 \sin(x^{(1)} x^{(2)}) + 20(x^{(3)} - 0.5)^2 + 10x^{(4)} + 5x^{(5)}$, where $x := (x^{(j)})_{j\in[5]}$;
	\item[$(II)$] 
	$f(x) := \sqrt{(x^{(1)})^2 + (x^{(2)} x^{(3)} - 1/(x^{(2)} x^{(4)}))^2}$, where $x := (x^{(j)})_{j\in[4]}$;
	\item[$(III)$] 
	$f(x) := \arctan\left( ( x^{(2)} x^{(3)} - 1/(x^{(2)} x^{(4)}) ) / x^{(1)} \right)$, where $x := (x^{(j)})_{j\in[4]}$.
\end{itemize}
For regression function $(I)$, each coordinate $x^{(j)}$, $j \in [5]$, is independently drawn from a uniform distribution on the interval $[0, 1]$. For regression functions $(II)$ and $(III)$, each coordinate $x^{(j)}$, $j \in [4]$, is independently drawn from distinct uniform distributions over the following intervals: $x^{(1)} \sim U[0, 100]$, $x^{(2)} \sim U[40\pi, 500\pi]$, $x^{(3)} \sim U[0, 1]$, and $x^{(4)} \sim U[1, 11]$.

We consider three different noise distributions, $\epsilon$, as follows:
\begin{itemize}
	\item[$(i)$] 
	For Gaussian noise $\epsilon$, the location parameter is set to zero. The scale parameter is adjusted so that the noise’s standard deviation is one-third of that of $f(X)$, in line with \cite{tipping2001sparse}. Specifically, this means $(\mathrm{Var}(f(X)) / \mathrm{Var}(\epsilon))^{1/2} = 3$.
	\item[$(ii)$] 
	For Cauchy noise $\epsilon$, as in Example \ref{ex::cauchy}, the scale parameter $s$ is selected to achieve a signal-to-noise power ratio of $3$. Thus, we choose $s$ so that $( \mathbb{E} |f(X)|^{1/2} / \mathbb{E} |\epsilon|^{1/2} )^2 = 3$.
	\item[$(iii)$] 
	For Pareto noise $\epsilon$, as in Example \ref{ex::pareto}, the shape parameter $\zeta$ is set to $2.01$. The scale parameter $s$ is determined based on a signal-to-noise ratio of $3$, specifically by choosing $s$ such that $( \mathbb{E} |f(X)|^{1/3} / \mathbb{E} |\epsilon|^{1/3} )^3 = 3$.
\end{itemize}

To generate the response variable, noise was added to the regression function, resulting in $Y = f(X) + \epsilon$. For each of Friedman’s functions, we produced $1,000$ noisy observations using the three different noise types described above, which were used for model training and cross-validation. Furthermore, an additional $1,000$ noise-free observations were generated for testing.

For KLAD, MCCR, and KCRR, the hyperparameter grids for the regularization parameter $\lambda$ and the bandwidth $\gamma$ are set to $\{ 0.1, 0.01, \ldots, 10^{-5} \}$ and $\{ 0.5, 0.25, \ldots, 2^{-5} \}$, respectively. The grid for the squared scale parameter $\sigma^2$ in the Cauchy and correntropy losses is selected as $\{10^{-1}, \dots, 10^{-8}\}$. These three models are fit on the standardized data.
For KBHR, the hyperparameter grid for the scale parameter $\sigma$ in the Huber loss, defined as $L_{\mathrm{Huber}}(y,f(x)):= (y - f(x))^2 \eins \{ |y - f(x)| \leq \sigma \} + (\sigma |y - f(x)| - \sigma^2 / 2)$, is set to $\{ 1, 5, 10, 20, 50, 100, 200, 300, 500, 1000 \}$. Prior to fitting KBHR, we standardize the feature variables.

To fit KLAD and KBHR, we employ stochastic gradient descent as implemented in the {\tt Sklearn} package. For MCCR, we use IRLS as recommended by \cite{feng2015learning}.

The hyperparameters for all these algorithms are selected using 10-fold cross-validation, with the \textit{mean absolute error} (\textit{MAE}) as the selection criterion. The MAE is defined as
\begin{align*}
	\mathrm{MAE}(\widehat{f}) := \frac{1}{k} \sum_{i=1}^k |y_i - \widehat{f}(x_i)|,
\end{align*}
where $(x_i, y_i)_{i=1}^k$ represent the $k$ cross-validation samples.

To evaluate the performance of these algorithms on the test data, we use two metrics: the \textit{MAE} defined as $\mathrm{MAE}(\widehat{f}):= \frac{1}{m} \sum_{i=1}^m |f^*(x_i) - \widehat{f}(x_i)|$ and the \textit{relative sum of squared error} (\textit{RSSE}) defined as
\begin{align*}
	\mathrm{RSSE}(\widehat{f}) := \frac{\sum_{i=1}^m (f^*(x_i) - \widehat{f}(x_i))^2}{\sum_{i=1}^m (f^*(x_i) - \bar{f}^*)^2},
\end{align*}
where $(x_i)_{i=1}^m$ are the $m$ test samples and $\bar{f}^*$ is the mean of the $f^*(x_i)$ values for $i \in [m]$. We conduct our experiments ten times and report the average values and standard errors for both metrics in Tables \ref{tab::FriedmanMAE} and \ref{tab::FriedmanRSSE}, respectively.

\begin{table*}[!h]
	\centering
	\captionsetup{justification=centering}
	\vspace{0pt}
	\caption{MAE Performance on Friedman's functions under different noise types.}
	\label{tab::FriedmanMAE}
	\vspace{-6pt}
	\resizebox{0.86\textwidth}{!}{
		\begin{tabular}{cc|cccc}
			\toprule
			Dataset & Noise & KLAD & KBHR & MCCR & KCRR  \\ \midrule
			\multirow{3}{*}{$(I)$} & $(i)$ & 1.4241	± 0.0155 & 1.0441 ± 0.0096 & 0.7424 ± 0.0197&		\textbf{0.6122 ± 0.0182} \\
			& $(ii)$ & 1.4055 ± 0.0481 & 1.3600 ± 0.0340 & 0.9464 ± 0.0251 &	\textbf{0.9026 ± 0.0188} \\
			& $(iii)$ & 1.2748 ± 0.0957 & 1.8464 ± 0.2287 & 0.5498 ± 0.0491 & \textbf{0.3821 ± 0.0312} \\
			\midrule
			\multirow{3}{*}{$(II)$} & $(i)$ & 54.0076 ± 1.1573 & 26.7090 ± 0.5068 & 24.1177 ± 0.7749&   \textbf{22.3529 ± 0.7565} \\
			& $(ii)$ & 39.8103 ± 2.2581 & 34.5415 ± 1.1040 & \textbf{18.6588	± 0.6364} &	19.4297	± 1.3433 \\
			& $(iii)$ & 37.1561 ± 2.4826 & 70.4954 ± 16.2335 & 9.1603 ± 1.0183&	\textbf{6.7785 ± 1.4664} \\
			\midrule
			\multirow{3}{*}{$(III)$} & $(i)$ & 0.0945 ± 0.0023 & 0.1181 ± 0.0028 & 0.0606 ± 0.0009&		\textbf{0.0463 ± 0.0007} \\
			& $(ii)$ & 0.1159 ± 0.0032 & 0.1336 ± 0.0050 & 0.1005 ± 0.0028 &	\textbf{0.0902 ± 0.0031} \\
			& $(iii)$ & 0.1207 ± 0.0099 & 0.1725 ± 0.0059 & 0.0547 ± 0.0017 &	\textbf{0.0425 ± 0.0012} \\
			\bottomrule
		\end{tabular}
	}	
	% \vspace{2mm}
	\begin{tablenotes}
		\footnotesize
		\item[*]\qquad\qquad For each dataset and each noise, we denote the best performance with \textbf{bold}.
	\end{tablenotes}
\end{table*}

\begin{table*}[!h]
	\centering
	\captionsetup{justification=centering}
	\vspace{0pt}
	\caption{RSSE Performance on Friedman's functions under different noise types.}
	\label{tab::FriedmanRSSE}
	\vspace{-6pt}
	\resizebox{0.86\textwidth}{!}{
		\begin{tabular}{cc|cccc}
			\toprule
			Dataset & Noise & KLAD & KBHR & MCCR & KCRR  \\ \midrule
			\multirow{3}{*}{$(I)$} & $(i)$ & 0.1619 ± 0.0041 & 0.0841 ± 0.0019 & 0.0406 ± 0.0021&		\textbf{0.0275 ± 0.0018} \\
			& $(ii)$ & 0.1464 ± 0.0101 & 0.1330 ± 0.0069 & 0.0659 ± 0.0035&	\textbf{0.0603 ± 0.0020}\\
			& $(iii)$ & 0.1308 ± 0.0208 & 0.2665 ± 0.0622 & 0.0235 ± 0.0048 &	\textbf{0.0141 ± 0.0023} \\
			\midrule
			\multirow{3}{*}{$(II)$} & $(i)$ & 0.0465 ± 0.0017 & 0.0094 ± 0.0001 & 0.0068 ± 0.0005&		\textbf{0.0057 ± 0.0004} \\
			& $(ii)$ & 0.0227 ± 0.0028 & 0.0153 ± 0.0011 & \textbf{0.0041 ± 0.0002}&	0.0046 ± 0.0006 \\
			& $(iii)$ & 0.0199 ± 0.0045 & 0.0952 ± 0.0387 & 0.0012 ± 0.0003&	\textbf{0.0011 ± 0.0007} \\
			\midrule
			\multirow{3}{*}{$(III)$} & $(i)$ & 0.3334 ± 0.0078 & 0.3940 ± 0.0046 & 0.0994 ± 0.0060&		\textbf{0.0653 ± 0.0045} \\
			& $(ii)$ & 0.4060 ± 0.0130 & 0.4973 ± 0.0164 & 0.2263 ± 0.0148&	\textbf{0.1735 ± 0.0121} \\
			& $(iii)$ & 0.3236 ± 0.0399 & 0.7692 ± 0.0603 & 0.0929 ± 0.0072& \textbf{0.0792	± 0.0075} \\
			\bottomrule
		\end{tabular}
	}	
	% \vspace{2mm}
	\begin{tablenotes}
		\footnotesize
		\item[*]\qquad\qquad For each dataset and each noise, we denote the best performance with \textbf{bold}.
	\end{tablenotes}
\end{table*}

Tables \ref{tab::FriedmanMAE} and \ref{tab::FriedmanRSSE} show that KCRR consistently outperforms KLAD, KBHR, and MCCR across Friedman’s functions, under various types of noise. Notably, KCRR demonstrates a significant advantage in scenarios with Pareto noise (case $(iii)$), which lacks a finite $1/2$-order moment. This result highlights that the Cauchy loss function offers greater robustness against extremely heavy-tailed noise compared to the other loss functions.

\subsection{Real-world Data Experiments}

\begin{wraptable}{r}{0.5\textwidth}
	% \begin{table*}[!ht] 
		% \setlength{\tabcolsep}{9pt}
		\centering
		\captionsetup{justification=centering}
		\vspace{-1pt}
		\caption{\normalsize{Descriptions of Real Data}}
		\label{tab::RealdataDescription}
		\begin{tabular}{l|rr}
			\toprule
			Dataset  & $n$ & $d$    \\
			\midrule
			{\tt Computer} & $209$ & $10$  \\
			{\tt Facebook} & $500$ & $17$  \\
			{\tt Housing} & $506$ & $13$  \\
			{\tt Yacht} & $308$ & $7$ \\
			\bottomrule    
		\end{tabular}
		% \end{table*}
\end{wraptable}

We evaluate the performance of our models using four real-world regression datasets from the UCI Machine Learning Repository \citep{kelly2007uci}: {\tt Computer Hardware}, {\tt Facebook Metrics}, {\tt Boston Housing}, and {\tt Yacht Hydrodynamics}. The details of these datasets, including the sample size $n$ and the number of features $d$, are provided in Table~\ref{tab::RealdataDescription}.

For all the robust methods, the bandwidth parameter $\gamma$ in the kernel function is chosen from the grid $\{0.5, 0.25, \ldots, 2^{-6}\}$, and the squared scale parameter $\sigma^2$ in the Cauchy loss and correntropy loss is selected from $\{10^{-3}, 10^{-2}, \ldots, 10\}$. The remaining parameter grids are kept the same as in the synthetic experiments. Each dataset is randomly split into $70\%$ for training and $30\%$ for testing. The experiments are repeated ten times, with parameters selected using $10$-fold cross-validation based on the MAE criterion.

\begin{table*}[!h]
	\centering
	\captionsetup{justification=centering}
	\vspace{0pt}
	\caption{MAE Performance on real-world datasets.}
	\label{tab::realdataMAE}
	\vspace{-6pt}
	\resizebox{0.86\textwidth}{!}{
		\begin{tabular}{c|cccc}
			\toprule
			Dataset & KLAD & KBHR & MCCR &  KCRR \\ \midrule
			{\tt Computer} & 44.4389 ± 4.5196 & 36.1817 ± 2.8616 & 30.3279 ± 2.7860 & \textbf{28.3316 ± 2.1660} 
			\\
			{\tt Facebook} & 79.2629 ± 6.7764 & 51.3577 ± 4.2343 & 13.5840 ± 2.0781 & \textbf{11.5963 ± 2.0316} 
			\\
			{\tt Housing} & 3.1702 ± 0.0796 & 2.5449 ± 0.0687 & 2.1804 ± 0.0411& \textbf{2.0714 ± 0.0422}
			\\
			{\tt Yacht} & 6.3457 ± 0.4402 & 5.2080 ± 0.2433 & 1.0794 ± 0.0650 & \textbf{0.3984 ± 0.0315}
			\\\bottomrule
		\end{tabular}
	}
	% \vspace{2mm}
	\begin{tablenotes}
		\footnotesize
		\item[*]\qquad\qquad\qquad\qquad For each dataset, we denote the best performance with \textbf{bold}.
	\end{tablenotes}
\end{table*}

\begin{table*}[!h]
	\centering
	\captionsetup{justification=centering}
	\vspace{0pt}
	\caption{RSSE Performance on real-world datasets.}
	\label{tab::realdataRSSE}
	\vspace{-6pt}
	\resizebox{0.86\textwidth}{!}{
		\begin{tabular}{c|cccc}
			\toprule
			Dataset & KLAD & KBHR & MCCR &  KCRR \\ \hline
			{\tt Computer} & 0.4924 ± 0.0545 & 0.2676 ± 0.0304 & 0.2635	± 0.0898& \textbf{0.1546 ± 0.0268} 
			\\
			{\tt Facebook} & 0.5594 ± 0.0370 & 0.1620 ± 0.0270 & 0.0643	± 0.0225& \textbf{0.0614 ± 0.0230} 
			\\
			{\tt Housing} & 0.3318 ± 0.0132 & 0.1948 ± 0.0144 & 0.1443 ± 0.0107& \textbf{0.1201 ± 0.0058} 
			\\
			{\tt Yacht} & 0.6305 ± 0.0329 & 0.2585 ± 0.0132 & 0.0119 ± 0.0018&	\textbf{0.0026 ± 0.0007} 
			\\\bottomrule
		\end{tabular}
	}	
	% \vspace{2mm}
	\begin{tablenotes}
		\footnotesize
		\item[*]\qquad\qquad\qquad\qquad For each dataset, we denote the best performance with \textbf{bold}.
	\end{tablenotes}
\end{table*}

Tables \ref{tab::realdataMAE} and \ref{tab::realdataRSSE} show that KCRR consistently outperforms other kernel-based robust regression methods on real-world datasets, as measured by both the MAE and RSSE metrics. These results underscore the effectiveness and adaptability of the Cauchy loss in managing various types of noise commonly found in real-world data.

\section{Proofs}\label{sec::proofs}

In this section, we present the proofs for the results in previous sections. Specifically, Section \ref{subsec::proofrobust} demonstrates the finiteness of the Cauchy risk and establishes the corresponding Bayes function under the generalized Cauchy noise assumption. Section \ref{subsec::proofmain} provides detailed proofs for the main theoretical results outlined in Section \ref{sec::mainresults}. Lastly, Section \ref{subsec::prooferror} covers the proof related to the error analysis for the $L_2$-risk of KCRR in Section \ref{sec::ErrorAnalysis}.

\subsection{Proofs Related to Section \ref{sec::robustreg}}\label{subsec::proofrobust}

\begin{proof}[of Lemma \ref{lem::stronger}]
	For any $x \geq \sqrt{2}$, we have $\log(1+x^2) \leq \log(2x^2) \leq \log(x^4) = 4\log x$. First, we prove that $4\log x \leq x^p$ for any $p \leq 1/4$ and $x\geq e^{(1/p)^{4/p}}$.  To this end, we construct the function $g(t):=t^p-4\log t$ for any $t>0$. The derivative function of $g$ is given by $g'(t)=pt^{p-1}-4/t=t^{-1}(pt^p-4)$, which is larger than zero for $t>(4/p)^{1/p}$. Therefore, the function $g$ is increasing on $t>(4/p)^{1/p}$. Now we show $(4/p)^{1/p} \leq e^{(1/p)^{4/p}}$, which is equivalent to $(1/p) \log(4/p) \leq (1/p)^{4/p}$.
	Since $1+x \leq e^x$ for $x>0$, we have $ 1+\log(4/p) \leq 4/p$ and thus we get
	\begin{align*}
		\log(4/p) \leq 4/p-1 \leq {(1/p)}^{4/p-1},
	\end{align*}
	where the last inequality follows from $x \leq e^x$. Therefore, we get $(1/p)\log(4/p) \leq {(1/p)}^{4/p}$ and thus $g$ is increasing on $t > e^{(1/p)^{4/p}}$. 
	In addition, we check that $g(e^{(1/p)^{4/p}}) >0$. Specifically, we have 
	\begin{align*}
		g(e^{(1/p)^{4/p}}) = e^{(1/p)^{4/p-1}} - e^{\log 4 + (4/p)\log(1/p)}.
	\end{align*}
	Since $1/p \geq 4$ and $1/p > \log(1/p) $ for any $p\leq 1/4$, we have 
	\begin{align*}
		(1/p)^{4/p-1} &> (1/p)^{3/p} = (1/p)^{3/p-3}(1/p)^{3} > 2(1/p)^{3} > (1/p^2 + 2) (1/p) 
		\\
		& > 1/p + (1/p^2 + 1) (1/p) > 1/p + (4/p) \log (1/p) > \log 4 + (4/p) \log (1/p).
	\end{align*}
	Therefore, we get $g(e^{(1/p)^{4/p}}) > 0$ and thus $g(t) > 0$ for any $t \geq e^{(1/p)^{4/p}}$. Thus, we finish the proof of $\log(1+x^2) \leq x^p$ for any $x \geq e^{(1/p)^{4/p}}$ and $p \leq 1/4$.
	
	In the following, we show that $\log(1+x^2) \leq x^p$ for any $x \geq 2^{64}$ and $p > 1/4$. Under $p > 1/4$, we have $x^p \geq x^{1/4}$ and thus it suffices to show that $4\log x < x^{1/4}$ for $x \geq 2^{16}$. To this end, we construct the function $h(t):= t^{1/4}-4\log t$ on $t>0$. The derivative function of $h$ is given by $h'(t)= (1/4)t^{-3/4}-4/t = (4t)^{-1}(t^{1/4}-16)$, which is larger than zero for any $t > 2^{16}$. Therefore, $h(t)$ is increasing on $t > 2^{16}$. Moreover, $h(2^{64})=2^{16}-4 \times 64\log 2 > 0$. Therefore $h(t) > 0$ for any $t > 2^{16}$.
	
	By combining these two sides, we get 
	$\log(1+x^2) \leq x^p$ for any $p>0$ and $x \geq 2^{64} \vee e^{(1/p)^{4/p}}$. Then we have 
	\begin{align*}
		\mathbb{E}\log(1 + |\epsilon|^2) &= \mathbb{E}(\log(1 + |\epsilon|^2) \eins\{|\epsilon|\leq 2^{64} \vee e^{(1/p)^{4/p}}\}) + \mathbb{E}(\log(1 + |\epsilon|^2) \eins\{|\epsilon| >  2^{64} \vee e^{(1/p)^{4/p}}\})
		\nonumber\\
		&\leq \log(1 + 2^{128} \vee e^{2(1/p)^{4/p}}) + \mathbb{E}(|\epsilon|^p \eins\{|\epsilon| >  2^{64} \vee e^{(1/p)^{4/p}}\})
		\nonumber\\
		&\leq \log(1 + 2^{128} \vee e^{2(1/p)^{4/p}}) + \mathbb{E}(|\epsilon|^p).
	\end{align*}
	Therefore if $\mathbb{E}(|\epsilon|^p)<\infty$, we get $\mathbb{E}\log(1 + |\epsilon|^2) < \infty$. 
\end{proof}

\begin{proof}[of Lemma \ref{lem::finiterisk}]
	Using the definition of the Cauchy loss $L$ and the inequalities $(a+b)^2 \leq 2(a^2 + b^2)$, we get 
	\begin{align}\label{eq::lossbound}
		& L(y,f(x)) 
		= \sigma^2 \log \biggl( 1 + \frac{(y - f(x))^2}{\sigma^2} \biggr) 
		\nonumber\\
		& \leq \sigma^2 \log \biggl( 1 + \frac{2(f(x) - f^*(x))^2}{\sigma^2} + \frac{2(f^*(x)-y)^2}{\sigma^2} \biggr)
		\nonumber\\
		& \leq \sigma^2 \log \biggl( 1 + \frac{4 (\|f\|_{\infty}^2 + \|f^*\|_{\infty}^2)}{\sigma^2} + \frac{2(f^*(x) - y)^2}{\sigma^2} \biggr)
		\nonumber\\
		&= \sigma^2 \log \biggl( 1 + \frac{4 (\|f\|_{\infty}^2 + \|f^*\|_{\infty}^2)}{\sigma^2} \biggr) + \sigma^2 \log \biggl( 1 + \frac{2(f^*(x) - y)^2}{\sigma^2 + 4(\|f\|_{\infty}^2 + \|f^*\|_{\infty}^2)} \biggr).
	\end{align}
	If $\sigma^2 + 4(\|f\|_{\infty}^2 + \|f^*\|_{\infty}^2) \geq 2$, then we have 
	\begin{align}\label{eq::lossbound1}
		\mathbb{E} L(Y,f(X)) 
		& \leq \sigma^2 \log \biggl( 1 + \frac{4 (\|f\|_{\infty}^2 + \|f^*\|_{\infty}^2)}{\sigma^2} \biggr) + \sigma^2 \mathbb{E} \log \bigl( 1 + (f^*(X) - Y)^2 \bigr)
		\nonumber\\
		& = \sigma^2 \log \biggl( 1 + \frac{4 (\|f\|_{\infty}^2 + \|f^*\|_{\infty}^2)}{\sigma^2} \biggr) + \sigma^2 \mathbb{E}_{\epsilon} \log \bigl( 1 + \epsilon^2 \bigr) 
		< \infty,
	\end{align}
	where the last inequality follows from Assumption \ref{ass::logmoment}. 
	
	Otherwise if $\sigma^2 + 4(\|f\|_{\infty}^2+ \|f^*\|_{\infty}^2) < 2$,
	by using \eqref{eq::lossbound} and $a \log(1+t/a) \leq \log(1+t)$ for any $t \geq 0$ and $a \in (0, 1]$, we have for any $(x, y) \in \mathcal{X}\times \mathcal{Y}$, 
	\begin{align*}
		L(y,f(x)) 
		& \leq \sigma^2 \log \biggl( 1 + \frac{4 (\|f\|_{\infty}^2 + \|f^*\|_{\infty}^2)}{\sigma^2} \biggr) 
		+ \frac{2\sigma^2 \log \bigl( 1 + (f^*(x) - y) \bigr)^2}{\sigma^2 + 4 (\|f\|_{\infty}^2 + \|f^*\|_{\infty}^2)}
		\\
		& \leq \sigma^2 \log \biggl( 1 + \frac{4 (\|f\|_{\infty}^2 + \|f^*\|_{\infty}^2)}{\sigma^2} \biggr) 
		+ 2\log \bigl( 1 + (f^*(x) - y) \bigr)^2.
	\end{align*}
	Therefore, by Assumption \ref{ass::logmoment}, we have
	\begin{align*}
		\mathbb{E} L(Y,f(X)) 
		\leq \sigma^2 \log \biggl( 1 + \frac{4 (\|f\|_{\infty}^2 + \|f^*\|_{\infty}^2)}{\sigma^2} \biggr) 
		+ 2\mathbb{E}_{\epsilon} \log \bigl( 1 + \epsilon^2 \bigr) 
		< \infty.
	\end{align*}
	This together with \eqref{eq::lossbound1} yields the conclusion.
\end{proof}

\begin{proof}[of Lemma \ref{lem::optimality}]
	By the definition of $\mathcal{R}_{L,P}$, for any function $f : \mathcal{X} \to \mathbb{R}$, we have
	\begin{align*}
		\mathcal{R}_{L, P}(f) = \mathbb{E}_X \mathbb{E}_{Y|X} L(Y, f(X)).
	\end{align*}
	Let the inner risk of the Cauchy loss be denoted as
	\begin{align}\label{eq::Rxf}
		\mathcal{R}_x(f) &:= \mathbb{E}_{Y|X=x} L(Y, f(X)) 
		= \int_{\mathcal{Y}} \sigma^2 \log \left( 1 + \frac{(y - f(x))^2}{\sigma^2} \right) p(y|x) \, dy.
		\nonumber\\
		&= \int_{\mathbb{R}} \sigma^2 \log \left( 1 + \frac{(\epsilon + f^*(x) - f(x))^2}{\sigma^2} \right) p(\epsilon) \, d\epsilon.
	\end{align}
	Let us define a function of $u$ as  
	\begin{align}\label{eq::gu}
		g(u) := \int_{\mathbb{R}} \sigma^2 \log \left( 1 + \frac{(\epsilon + u)^2}{\sigma^2} \right) p(\epsilon) \, d\epsilon.
	\end{align}
	By taking the derivative of $g(u)$ with respect to $u$, 
	we obtain
	\begin{align*}
		g'(u) := \int_{\mathbb{R}} \sigma^2 \left( \frac{2(\epsilon + u)}{\sigma^2+(\epsilon + u)^2}\right) p(\epsilon) \, d\epsilon
	\end{align*}
	By Assumption \ref{ass::decreasingtails}, for any $u > 0$ and any $a > 0$, we have
	\begin{align*}
		p(\epsilon = a - u) > p(\epsilon = -a - u),
	\end{align*}
	This implies that $g'(u) > 0$ for any $u > 0$. Conversely, we can prove that $g'(u) < 0$ for any $u < 0$. Moreover, it is clear that $g'(u) = 0$ when $u = 0$ by using Assumption \ref{ass::symmetry}. As a result, we can conclude that the inner risk $g(u)$ behaves in the following manner:
	\begin{align}\label{eq::monotonicgu}
		g(u) 
		\begin{cases}
			\text{is decreasing}, & \text{ if } u < 0;
			\\
			\displaystyle = \sigma^2 \int_{\mathbb{R}} \log (1 + \epsilon^2 / \sigma^2) p(\epsilon) \, d \epsilon, & \text { if } u = 0;
			\\
			\text{is increasing}, & \text{ if } u > 0.
		\end{cases}
	\end{align}
	Therefore, for any $u \in \mathbb{R}$, there holds $g(u) \geq g(0) = \sigma^2 \int_{\mathbb{R}} \log (1 + \epsilon^2 / \sigma^2) p(\epsilon) \, d \epsilon$ and $u=0$ is the unique minimal point of $g(u)$. Thus, for any $f$ and any $x$, there holds $g(f^*(x) - f(x)) \geq g(0)$. By the definition of $g$, we have $g(f^*(x) - f(x)) = \mathcal{R}_x(f)$ and $g(0)=\mathcal{R}_x(f^*)$. Therefore, $\mathcal{R}_x(f) \geq \mathcal{R}_x(f^*)$ and the equation holds if and only if $f(x) = f^*(x)$. This leads to the result that the minimal inner risk $\mathcal{R}_x(f)$ is achieved when $f(x) = f^*(x)$, i.e.,
	\begin{align*}
		\inf \{ \mathcal{R}_x(f) \mid f: \mathcal{X} \to \mathcal{Y} \text{ measurable} \}
		= \mathcal{R}_x(f^*).
	\end{align*}
	By taking the expectation with respect to $P_X$, we extend this result to the overall risk, yielding that
	\begin{align*}
		\inf \{\mathcal{R}_{L, P}(f) \mid f: \mathcal{X} \to \mathcal{Y} \text{ measurable} \}
		= \mathcal{R}_{L,P}(f^*).
	\end{align*}
	This demonstrates that the true regression function minimizes the Cauchy risk and completes the proof.
\end{proof}

\begin{proof}[of Lemma \ref{lem::validateclip}]
	Since $M \geq \|f^*\|_{\infty}$, we can analyze the relationship between $\wideparen{f}(x)$ and $f^*(x)$ based on the value of $f(x)$.
	\begin{description}
		\item[Case 1:] $f(x) \geq f^*(x)$. 
		In this scenario, we have $f^*(x) \leq \wideparen{f}(x) \leq f(x)$. By the monotonicity property of $g(u)$ stated in \eqref{eq::monotonicgu}, this implies $g(f^*(x) - \wideparen{f}(x)) \leq g(f^*(x) - f(x))$. By \eqref{eq::Rxf} and \eqref{eq::gu}, we have $\mathcal{R}_x(f) = g(f^*(x) - f(x))$. Thus we obtain $\mathcal{R}_x(\wideparen{f}) \leq \mathcal{R}_x(f)$.
		\item[Case 2:] $f(x) \leq f^*(x)$.  
		Here, it follows that $f(x) \leq \wideparen{f}(x) \leq f^*(x)$. Again, using the monotonicity property in \eqref{eq::monotonicgu}, this implies $g(f^*(x) - \wideparen{f}(x)) \leq g(f^*(x) - f(x))$.
		By \eqref{eq::Rxf} and \eqref{eq::gu}, we have $\mathcal{R}_x(f) = g(f^*(x) - f(x))$. Thus we find $\mathcal{R}_x(\wideparen{f}) \leq \mathcal{R}_x(f)$.
	\end{description}
	Combining both cases, we conclude that $\mathcal{R}_x(\wideparen{f}) \leq \mathcal{R}_x(f)$ holds for any $x$. By taking the expectation with respect to $P_X$, we extend this result to the overall risk, which establishes the desired assertion.
\end{proof}

\subsection{Proofs Related to Section \ref{sec::mainresults}}\label{subsec::proofmain}

\subsubsection{Proofs Related to Section \ref{sec::CalibrationInequalities}}

\begin{lemma}\label{lem::relation}
	Let Assumptions \ref{ass::logmoment}, \ref{ass::symmetry}, and \ref{ass::decreasingtails} hold. Additionally, let $L$ be the Cauchy loss function, with $f^*$ representing the true regression function. Then for any regressor $f: \mathcal{X} \to \mathbb{R}$, the following holds
	\begin{align*}
		\mathcal{R}_{L,P}(f) - \mathcal{R}_{L,P}^* 
		\leq \mathbb{E}_{P} (f(X) - f^*(X))^2.
	\end{align*}
\end{lemma}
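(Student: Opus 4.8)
The plan is to reduce the excess Cauchy risk to a pointwise comparison of inner risks and then bound each pointwise contribution by the squared deviation. Write $\xi(x) := f(x) - f^*(x)$. By Lemma~\ref{lem::optimality}, $f^*$ attains the Bayes risk, so $\mathcal{R}_{L,P}^* = \mathcal{R}_{L,P}(f^*)$ and hence
\[
\mathcal{R}_{L,P}(f) - \mathcal{R}_{L,P}^* = \mathbb{E}_X\bigl(\mathcal{R}_x(f) - \mathcal{R}_x(f^*)\bigr),
\]
where $\mathcal{R}_x$ is the inner risk from \eqref{eq::Rxf}. Recalling the auxiliary function $g$ from \eqref{eq::gu}, we have $\mathcal{R}_x(f) = g(f^*(x) - f(x)) = g(-\xi(x))$ and $\mathcal{R}_x(f^*) = g(0)$, so it suffices to establish the pointwise inequality $g(u) - g(0) \leq u^2$ for every $u \in \mathbb{R}$ and then integrate over $P_X$.

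For the pointwise bound I would combine the two logarithms inside \eqref{eq::gu} to write
\[
g(u) - g(0) = \int_{\mathbb{R}} \sigma^2 \log\Bigl(1 + \tfrac{2\epsilon u + u^2}{\sigma^2 + \epsilon^2}\Bigr) p(\epsilon)\, d\epsilon,
\]
and then apply $\log(1+t) \leq t$, which is legitimate since $1 + t = (\sigma^2 + (\epsilon+u)^2)/(\sigma^2+\epsilon^2) > 0$ for all $\epsilon$. This yields the linear majorant $g(u) - g(0) \leq \int_{\mathbb{R}} \sigma^2(2\epsilon u + u^2)(\sigma^2 + \epsilon^2)^{-1} p(\epsilon)\, d\epsilon$. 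The crucial step is that the linear-in-$u$ term vanishes: the map $\epsilon \mapsto \epsilon/(\sigma^2+\epsilon^2)$ is odd and bounded, so under the symmetry Assumption~\ref{ass::symmetry} its weighted integral against the even density $p$ is zero. What remains is $u^2 \int_{\mathbb{R}} \sigma^2(\sigma^2+\epsilon^2)^{-1} p(\epsilon)\,d\epsilon \leq u^2$, because $\sigma^2/(\sigma^2+\epsilon^2) \leq 1$ and $p$ integrates to one. Taking $u = -\xi(x)$ gives $\mathcal{R}_x(f) - \mathcal{R}_x(f^*) \leq \xi(x)^2$.

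Integrating this pointwise estimate against $P_X$ then delivers $\mathcal{R}_{L,P}(f) - \mathcal{R}_{L,P}^* \leq \mathbb{E}_X \xi(X)^2 = \mathbb{E}_P(f(X)-f^*(X))^2$, which is the claim. I expect the only genuinely delicate point to be the cancellation of the linear term by symmetry: this is precisely what converts the first-order majorant (which on its own would only give an $O(|u|)$ bound) into the quadratic $u^2$ bound. By contrast, the use of $\log(1+t)\leq t$ and the integrability needed to split the integrals are routine — finiteness is guaranteed for bounded $f$ by Lemma~\ref{lem::finiterisk}, and the inequality is trivially true whenever the right-hand side is infinite. Notably, this argument requires neither large $\sigma$ nor clipping, so it holds for \emph{any} regressor $f$ and any $\sigma > 0$, which is exactly what makes it the natural counterpart to the reverse (lower) bound developed for large $\sigma$ in Theorem~\ref{lem::rell2Cauchylargesigma}.
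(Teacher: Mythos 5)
Your proposal is correct and follows essentially the same route as the paper's proof: both reduce to the pointwise excess inner risk, combine the two logarithms, apply $\log(1+t)\leq t$, cancel the linear term $2\epsilon u/(\sigma^2+\epsilon^2)$ via the symmetry of $\epsilon$, and bound $\sigma^2/(\sigma^2+\epsilon^2)\leq 1$ before integrating over $P_X$. The only cosmetic difference is that you phrase the argument through the auxiliary function $g$ from \eqref{eq::gu}, whereas the paper manipulates the conditional expectations directly.
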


\begin{proof}[of Lemma \ref{lem::relation}]
	By Lemma \ref{lem::optimality}, we have
	\begin{align}\label{eq::reexp}
		\mathcal{R}_{L,P}(f) - \mathcal{R}_{L,P}^* 
		= \mathbb{E}_X \mathbb{E}_{Y|X} \bigl( L(Y,f(X)) - L(Y,f^*(X)) \bigr).
	\end{align}
	For the Cauchy loss function, we have
	\begin{align*}
		& \mathbb{E}_{Y|X=x} \bigl( L(Y,f(X)) - L(Y,f^*(X) \bigr) 
		\\
		& = \int_{\mathcal{Y}} \sigma^2 \biggl( \log \biggl( 1 + \frac{(y - f(x))^2}{\sigma^2} \biggr) - \log \biggl( 1 + \frac{(y - f^*(x))^2}{\sigma^2} \biggr) \biggr) p(y|x) \, dy
		\\
		& = \int_{\mathcal{Y}} \sigma^2 \log \biggl( \frac{(y - f(x))^2 + \sigma^2}{(y - f^*(x))^2 + \sigma^2} \biggr) p(y|x) \, dy
		\\
		& = \int_{\mathcal{Y}} \sigma^2 \log \biggl( 1 + \frac{(y - f(x))^2 - (y - f^*(x))^2}{(y - f^*(x))^2 + \sigma^2} \biggr) p(y|x) \, dy.
	\end{align*}
	Using the inequality $\log(1 + x) \leq x$ for $x\in (-1, \infty)$ and substituting the regression model $\epsilon = Y - f^*(X)$ into the expression, we get
	\begin{align*}
		& \mathbb{E}_{Y|X=x} \bigl( L(Y,f(X)) - L(Y,f^*(X) \bigr) 
		\\
		& \leq \sigma^2 \int_{\mathcal{Y}} \frac{(y - f(x))^2 - (y - f^*(x))^2}{(y - f^*(x))^2 + \sigma^2} \cdot p(y|x) \, dy
		\\
		& = \sigma^2 \int_{\mathcal{Y}} \frac{\bigl( (y - f^*(x)) + (f^*(x) - f(x)) \bigr)^2 - (y - f^*(x))^2}{(y - f^*(x))^2 + \sigma^2} \cdot p(y|x) \, dy
		\\
		& = \sigma^2 \int_{\mathcal{Y}} \frac{2 (y - f^*(x)) (f^*(x) - f(x)) + (f^*(x) - f(x))^2}{(y - f^*(x))^2 + \sigma^2} \cdot p(y|x) \, dy
		\\
		& = \sigma^2 \int_{\mathcal{Y}} \frac{2 \epsilon (f^*(x) - f(x)) + (f^*(x) - f(x))^2}{\epsilon^2 + \sigma^2} \cdot p(\epsilon|x) \, dy.
	\end{align*}
	According to the symmetry assumption stated in Assumption \ref{ass::symmetry}, for any $x$, we have $\mathbb{E}(\epsilon/(\epsilon^2 + \sigma^2) | X = x) = 0$. From this, we get
	\begin{align*}
		\mathbb{E}_{Y|X=x} \bigl( L(Y,f(X)) - L(Y,f^*(X) \bigr) 
		&\leq \sigma^2 \int_{\mathcal{Y}} \frac{(f^*(x) - f(x))^2}{\epsilon^2 + \sigma^2} \cdot p(\epsilon) \, d\epsilon
		\\
		& \leq \int_{\mathcal{Y}} (f^*(x) - f(x))^2 p(\epsilon) \, d\epsilon 
		\\
		& = (f^*(x) - f(x))^2.
	\end{align*}
	This together with \eqref{eq::reexp} yields the assertion. 
\end{proof}

\begin{proof}[of Theorem \ref{lem::rell2Cauchylargesigma}]
	By the definition of the Cauchy loss $L$ and $\mathcal{R}_{L,P}^* = \mathbb{E} L(Y, f^*(X))$, we have
	\begin{align*}
		\mathcal{R}_{L,P}(\wideparen{f}) - \mathcal{R}_{L,P}^* 
		& = \mathbb{E}_X \mathbb{E}_{Y|X} \bigl( L(Y, \wideparen{f}(X)) - L(Y, f^*(X)) \bigr) 
		\\
		& = \mathbb{E}_X \mathbb{E}_{Y|X} \biggl( \sigma^2 \log \biggl( 1 + \frac{(Y - \wideparen{f}(X))^2}{\sigma^2} \biggr) - \sigma^2 \log \biggl( 1 + \frac{(Y - f^*(X))^2}{\sigma^2} \biggr) \biggr)
		\\
		& = \mathbb{E}_X \mathbb{E}_{Y|X} \sigma^2 \log \biggl( \frac{(Y - \wideparen{f}(X))^2 + \sigma^2}{(Y - f^*(X))^2 + \sigma^2} \biggr)
		\\
		& = \mathbb{E}_X \mathbb{E}_{Y|X} \sigma^2 \log \biggl( 1 + \frac{(Y - \wideparen{f}(X))^2 - (Y - f^*(X))^2}{(Y - f^*(X))^2 + \sigma^2} \biggr).
	\end{align*}
	Let us define $\xi(X) := \wideparen{f}(X) - f^*(X)$.
	Since $\epsilon = Y-f^*(X)$, we have
	\begin{align*}
		\mathcal{R}_{L,P}(\wideparen{f}) - \mathcal{R}_{L,P}^* 
		= \mathbb{E}_X \mathbb{E}_{\epsilon} \sigma^2 \log \biggl( 1 + \frac{(\epsilon - \xi(X))^2 - \epsilon^2}{\epsilon^2 + \sigma^2} \biggr).
	\end{align*}
	Obviously, we have $\|\xi\|_{\infty} \leq \|f^*\|_{\infty} + \|\wideparen{f}\|_{\infty} \leq 2 M$.
	Thus if $|\epsilon| \leq M$, then 
	\begin{align*}
		|(\epsilon - \xi(X))^2 - \epsilon^2|
		\leq 2 |\epsilon| |\xi(X)| + \xi^2(X) 
		\leq 4 M^2 + 4 M^2 
		= 8M^2. 
	\end{align*}
	Since $\sigma \geq 4M$, we then have
	\begin{align*}
		\frac{|(\epsilon - \xi(X))^2 - \epsilon^2|}{\epsilon^2 + \sigma^2} 
		\leq \frac{8M^2}{\sigma^2} 
		\leq  \frac{8M^2}{16 M^2} 
		\leq \frac{2}{3}.
	\end{align*}
	Otherwise if $|\epsilon| > M$, using the inequality $2 a b \leq a^2 + b^2$, we get 
	\begin{align*}
		\frac{|(\epsilon - \xi(X))^2 - \epsilon^2|}{\epsilon^2 + \sigma^2} 
		& \leq \frac{2|\epsilon| \xi(X) + \xi^2(X)}{\epsilon^2 + \sigma^2} 
		\leq \frac{4M |\epsilon| + 4M^2}{\epsilon^2 + 16M^2} 
		\\
		& = \frac{(2/3) \cdot 6M |\epsilon| + 4M^2}{\epsilon^2 + 16M^2}
		\leq \frac{(2/3) \cdot (\epsilon^2 + 9M^2) + 4M^2}{\epsilon^2 + 16M^2} 
		\leq \frac{2}{3}.
	\end{align*}
	Consequently, we obtain that when $\sigma \geq 4M$,
	for any $\epsilon$ and $X$, 
	\begin{align*}
		\frac{|(\epsilon - \xi(X))^2 - \epsilon^2|}{\epsilon^2 + \sigma^2} 
		\leq \frac{2}{3}.
	\end{align*}
	
	Using the inequality $\log(1+t) \geq t - 3t^2/2$ for $t \in [-2/3, \infty)$, and the symmetry assumption stated in Assumption \ref{ass::symmetry}, we get
	\begin{align}\label{eq::key}
		\mathcal{R}_{L,P}(\wideparen{f}) - \mathcal{R}_{L,P}^* 
		& \geq \sigma^2 \mathbb{E}_X \mathbb{E}_{\epsilon} \biggl( \frac{(\epsilon - \xi(X))^2 - \epsilon^2}{\epsilon^2 + \sigma^2} - \frac{3((\epsilon - \xi(X))^2 - \epsilon^2)^2}{2 (\epsilon^2 + \sigma^2)^2} \biggr)
		\nonumber\\
		& = \sigma^2 \mathbb{E}_X \mathbb{E}_{\epsilon} \biggl( \frac{\xi(X)^2}{\epsilon^2 + \sigma^2} - \frac{3\xi(X)^4 + 12 \epsilon^2 \xi(X)^2}{2 (\epsilon^2 + \sigma^2)^2} \biggr)
		\nonumber\\
		& = \mathbb{E}_X \xi(X)^2 - \mathbb{E}_X \mathbb{E}_{\epsilon} \biggl(\frac{\epsilon^2 \xi(X)^2}{\epsilon^2 + \sigma^2} + \frac{3\sigma^2 \xi(X)^4 + 12 \sigma^2 \epsilon^2 \xi(X)^2}{2 (\epsilon^2 + \sigma^2)^2} \biggr)
		% \sigma^2 \mathbb{E}_X \biggl( \xi(X)^2 \mathbb{E}_{\epsilon|X} \biggl( \frac{2 \sigma^2 - 2 \epsilon^2 - \xi(X)^2}{2 (\epsilon^2 + \sigma^2)^2} \biggr) \biggr)
		\nonumber\\
		& = \mathbb{E}_X \xi(X)^2 - \mathbb{E}_X \biggl(\xi(X)^2 \mathbb{E}_{\epsilon} \biggl( \frac{7\epsilon^2\sigma^2 + \epsilon^4 + 3\xi(X)^2\sigma^2/2}{(\epsilon^2 + \sigma^2)^2}\biggr) \biggr).
	\end{align}
	Since $7\sigma^2 \epsilon^2 / (\sigma^2 + \epsilon^2)^2$ 
	and $\epsilon^4/(\sigma^2 + \epsilon^2)^2$ are bounded, by using the dominated convergence theorem, we have
	\begin{align*}
		\lim_{\sigma \to \infty} \mathbb{E}_{\epsilon} \biggl( \frac{7\epsilon^2\sigma^2}{(\sigma^2 + \epsilon^2)^2} \biggr) 
		& =  \mathbb{E}_{\epsilon} \biggl( \lim_{\sigma \to \infty}  \frac{7\epsilon^2\sigma^2}{(\sigma^2 + \epsilon^2)^2} \biggr) = 0,
		\\
		\lim_{\sigma \to +\infty} \mathbb{E}_{\epsilon} \biggl( \frac{\epsilon^4}{(\sigma^2 + \epsilon^2)^2} \biggr) 
		& = \mathbb{E}_{\epsilon} \biggl( \lim_{\sigma \to +\infty} \frac{\epsilon^4}{(\sigma^2 + \epsilon^2)^2} \biggr) 
		= 0.
	\end{align*}
	Therefore, there exists a large number $c_1$ such that for any $\sigma \geq c_1$, 
	\begin{align}\label{eq::secondfrac}
		\mathbb{E}_{\epsilon} \biggl( \frac{7\epsilon^2\sigma^2}{(\sigma^2 + \epsilon^2)^2} \biggr)  \leq \frac{1}{4}, 
		\qquad 
		\mathbb{E}_{\epsilon} \biggl( \frac{\epsilon^4}{(\sigma^2 + \epsilon^2)^2} \biggr) \leq \frac{1}{4}.
	\end{align}
	Combining \eqref{eq::secondfrac} with \eqref{eq::key} and using $\sigma^4/(\epsilon^2 + \sigma^2)^2 \leq 1$, we obtain
	\begin{align*}
		\mathcal{R}_{L,P}(\wideparen{f}) - \mathcal{R}_{L,P}^* 
		& \geq \frac{1}{2} \cdot \mathbb{E}_X (\wideparen{f}(X) - f^*(X))^2 - \mathbb{E}_X (\wideparen{f}(X) - f^*(X))^4 \cdot \mathbb{E}_{\epsilon} \biggl( \frac{3\sigma^2/2}{(\epsilon^2 + \sigma^2)^2} \biggr)
		\nonumber\\
		&\geq \frac{1}{2} \cdot \mathbb{E}_X (\wideparen{f}(X) - f^*(X))^2 - \mathbb{E}_X (\wideparen{f}(X) - f^*(X))^4 \cdot \frac{3}{2\sigma^2}. 
	\end{align*}
	This is equivalent to
	\begin{align}\label{eq::relationanysigma}
		\|\wideparen{f} - f^*\|^2_{L_2(P_X)} 
		\leq 2\bigl( \mathcal{R}_{L,P}(\wideparen{f}) - \mathcal{R}_{L,P}^*\bigr) + 3\|\wideparen{f} - f^*\|_{L_4(P_X)}^4 /\sigma^2.
	\end{align}
	Given that $\|f^*\|_{\infty} \leq M$ and $\|\wideparen{f}\|_{\infty} \leq M$, we can conclude that
	\begin{align*}
		\|\wideparen{f} - f^*\|_{\infty} \leq \|f^*\|_{\infty} + \|\wideparen{f}\|_{\infty} \leq 2M.
	\end{align*}
	Since $\sigma \geq 4M$, we have
	\begin{align} \label{eq::EstimateAdd}
		\frac{1}{\sigma^2} \cdot \|\wideparen{f} - f^*\|^4_{L_4(P_X)}
		& \leq \frac{1}{\sigma^2} \cdot \|\wideparen{f} - f^*\|^2_{\infty} \cdot \|\wideparen{f} - f^*\|^2_{L_2(P_X)} 
		\nonumber\\
		& \leq \frac{(2M)^2}{(4M)^2} \cdot   \|\wideparen{f} - f^*\|^2_{L_2(P_X)} 
		\leq \frac{1}{4} \cdot \|\wideparen{f} - f^*\|^2_{L_2(P_X)}.
	\end{align}
	Combining \eqref{eq::EstimateAdd} with \eqref{eq::relationanysigma}, we find 
	\begin{align*}
		\|\wideparen{f} - f^*\|^2_{L_2(P_X)} 
		\leq 2\bigl( \mathcal{R}_{L,P}(\wideparen{f}) - \mathcal{R}_{L,P}^*\bigr) + (3/4) \cdot \|\wideparen{f} - f^*\|_{L_4(P_X)}^2,
	\end{align*}
	which is equivalent to
	\begin{align*}
		\|\wideparen{f} - f^*\|^2_{L_2(P_X)}
		\leq 8 \bigl( \mathcal{R}_{L,P}(\wideparen{f}) - \mathcal{R}_{L,P}^* \bigr).
	\end{align*}
	This together with Lemma \ref{lem::relation} yields the assertion.
\end{proof}

\subsubsection{Proofs Related to Section \ref{sec::RatesCauchy}}

\begin{proof}[of Theorem \ref{thm::rates}]
	Propositions \ref{prop::oracle} and \ref{prop::approx} yield that there exist a constant $c_1 > 0$ such that for any $q \in (0,1)$, $\sigma > 4M \vee c_1$, and $\tau > 0$, 
	\begin{align*}
		\lambda \|f_D\|^2_H + \mathcal{R}_{L,P}(\wideparen{f}_D) - \mathcal{R}_{L, P}^* 
		\lesssim \lambda \gamma^{-d} + \gamma^{2\alpha} 
		+ \sigma^2 \left( \tau n^{-1} + \lambda^{-q} \gamma^{-d} n^{-1} \right)
	\end{align*}
	holds with probability at least $1 - 2 e^{- \tau}$. By taking $\tau := \log(2n)$, we obtain
	\begin{align*}
		\lambda \|f_D\|^2_H + \mathcal{R}_{L,P}(\wideparen{f}_D) - \mathcal{R}_{L, P}^* &\lesssim \lambda \gamma^{-d} + \gamma^{2\alpha} 
		+ \sigma^2 \left( \log(2n) \cdot n^{-1} + \lambda^{-q} \gamma^{-d} n^{-1} \right)
		\nonumber\\
		&\lesssim \lambda \gamma^{-d} + \gamma^{2\alpha} 
		+ \sigma^2 \left( n^{-1+q} + \lambda^{-q} \gamma^{-d} n^{-1} \right)
	\end{align*}
	with probability at least $1 - 1/n$. This finishes the proof.  
\end{proof}

\subsubsection{Proofs Related to Section \ref{sec::RatesL2}}

\begin{proof}[of Theorem \ref{thm::ratesanyM}]
	By applying Theorem \ref{lem::rell2Cauchylargesigma} and Theorem \ref{thm::rates}, there exists a constant $c_1$ such that if $\sigma \geq 4n^p \vee c_1$ and $M = n^p\geq \|f^*\|_{\infty}$, then
	\begin{align*}
		\|\wideparen{f}_D - f^*\|^2_{L_2(P_X)} 
		\lesssim \lambda \gamma^{-d} + \gamma^{2\alpha} + \sigma^2 \left( n^{-1+q} + \lambda^{-q} \gamma^{-d} n^{-1} \right)
	\end{align*}
	with probability at least $1 - 1/n$. 
	By choosing 
	\begin{align*}
		\sigma \asymp n^p,
		\qquad 
		\lambda \asymp n^{-\frac{1-2p}{1+q}}, 
		\qquad 
		\gamma \asymp n^{-\frac{1-2p}{(2\alpha+d)(1+q)}}, 
	\end{align*}
	we find that for any $n \geq (\|f\|_{\infty} \vee c_1)^{1/p}$,
	\begin{align*}
		\|\wideparen{f}_D - f^*\|^2_{L_2(P_X)}  \lesssim n^{-\frac{2\alpha(1-2p)}{(2\alpha+d)(1+q)}}.
	\end{align*}
	This concludes the proof.
\end{proof}

\begin{proof}[of Theorem \ref{thm::L2rates}]
	By applying Theorems \ref{lem::rell2Cauchylargesigma} and \ref{thm::rates}, there exists a constant $c_1$ such that if $\sigma \geq 4M_0 \vee c_1$,
	\begin{align*}
		\|\wideparen{f}_D - f^*\|^2_{L_2(P_X)} 
		\lesssim \lambda \gamma^{-d} + \gamma^{2\alpha} + \sigma^2 \left( n^{-1+q} + \lambda^{-q} \gamma^{-d} n^{-1} \right)
	\end{align*}
	with probability at least $1-1/n$. By choosing 
	\begin{align*}
		\lambda \asymp n^{-\frac{1}{1+q}}, 
		\qquad
		\gamma \asymp n^{-\frac{1}{(2\alpha+d)(1+q)}}, 
		\qquad
		\sigma \asymp 4 M_0 \vee c_1,
	\end{align*}
	we obtain
	\begin{align*}
		\|\wideparen{f}_D - f^*\|^2_{L_2(P_X)}  
		\lesssim n^{-\frac{2\alpha}{(2\alpha+d)(1+q)}}
	\end{align*}
	with probability at least $1 - 1/n$. This finishes the proof.  
\end{proof}

\subsection{Proofs Related to Section \ref{sec::ErrorAnalysis}}\label{subsec::prooferror}

\subsubsection{Proofs Related to Section \ref{sec::sampleerror}}

Before proceeding, we need to introduce the concept of entropy numbers \citep{vaart1997weak}, which serves as a measure of the capacity of a function set.

\begin{definition}[Entropy Numbers] \label{def::entropy numbers}
	Let $(\mathcal{X}, d)$ be a metric space, and let $A \subset \mathcal{X}$ with $i \geq 1$ being an integer. The $i$-th entropy number of $(A, d)$ is defined as
	\begin{align*}
		e_i(A, d) = \inf \left\{ \varepsilon > 0 : \exists x_1, \ldots, x_{2^{i-1}} \in \mathcal{X} \text{ such that } A \subset \bigcup_{j=1}^{2^{i-1}} B_d(x_j, \varepsilon) \right\},
	\end{align*}
	where $B_d(x_j, \varepsilon) = \{ x \in \mathcal{X} : d(x, x_j) \leq \varepsilon \}$ denotes the closed ball of radius $\varepsilon$ centered at $x_j$.
\end{definition}

The following lemma, derived from \cite[Theorem 6.26]{steinwart2008support}, provides an upper bound for the entropy number of Gaussian kernels.

\begin{lemma}\label{lem::entropygaussian}
	Let the compact set $\mathcal{X} \subset \mathbb{R}^d$ and let $P_X$ be a distribution defined on $\mathcal{X}$, with $\mathrm{supp}(P_X) \subset \mathcal{X}$ representing the support of $P_X$. Additionally, for $\gamma \in (0,1)$, let $H_{\gamma}(A)$ denote the reproducing kernel Hilbert space (RKHS) associated with the Gaussian radial basis function (RBF) kernel $k_{\gamma}$ over the set $A$. Then, for every $N \in \mathbb{N}^*$, there exists a constant $c_{N,d} > 0$ such that
	\begin{align*}
		e_i(\mathrm{id}: H_{\gamma}(\mathcal{X}) \to L_2(P_X)) \leq c_{N,d} \gamma^{-N} i^{-\frac{N}{d}}, 
		\qquad 
		\text{for } i > 1.
	\end{align*}
\end{lemma}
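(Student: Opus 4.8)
The plan is to deduce the stated $L_2(P_X)$ entropy bound from the classical sup-norm entropy estimate for the Gaussian RKHS, exploiting that $L_2(P_X)$ is a weaker norm than the supremum norm. Concretely, the embedding $\mathrm{id}: H_{\gamma}(\mathcal{X}) \to L_2(P_X)$ factors through the embedding into the space of bounded functions $\ell_{\infty}(\mathcal{X})$, and since $P_X$ is a probability measure supported on $\mathcal{X}$, this factorization can only decrease entropy numbers.

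First I would invoke \cite[Theorem 6.26]{steinwart2008support}. That result bounds the entropy numbers of the Gaussian RBF kernel embedding into $\ell_{\infty}(\mathcal{X})$ in terms of a free smoothness parameter that controls both the polynomial decay exponent and the power of $\gamma$. Because the Gaussian kernel is infinitely differentiable, this parameter may be chosen arbitrarily large; after specializing it and relabeling, one obtains, for every $N \in \mathbb{N}^*$, a constant $c_{N,d} > 0$ that is independent of $\gamma$ and of the distribution, such that
\begin{align*}
	e_i(\mathrm{id}: H_{\gamma}(\mathcal{X}) \to \ell_{\infty}(\mathcal{X})) \leq c_{N,d}\, \gamma^{-N} i^{-\frac{N}{d}}, \qquad i > 1.
\end{align*}
This display already contains the essential structure of the claim, namely the $\gamma^{-N}$ dependence and the arbitrarily fast polynomial decay $i^{-N/d}$.

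Next I would transfer the bound from the supremum norm to the $L_2(P_X)$ norm. Since $\mathrm{supp}(P_X) \subset \mathcal{X}$ and $P_X(\mathcal{X}) = 1$, every function $g$ satisfies $\|g\|_{L_2(P_X)} = \bigl( \int_{\mathcal{X}} |g|^2 \, dP_X \bigr)^{1/2} \leq \|g\|_{\infty}$. Hence any $\varepsilon$-net of the image of the unit ball $B_{H_{\gamma}(\mathcal{X})}$ in $\ell_{\infty}(\mathcal{X})$ is automatically an $\varepsilon$-net of that image in $L_2(P_X)$, which yields the monotonicity
\begin{align*}
	e_i(\mathrm{id}: H_{\gamma}(\mathcal{X}) \to L_2(P_X)) \leq e_i(\mathrm{id}: H_{\gamma}(\mathcal{X}) \to \ell_{\infty}(\mathcal{X})).
\end{align*}
Chaining the two displays gives the assertion with the same constant $c_{N,d}$.

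I do not expect a genuine obstacle here: the hard analytic work — establishing polynomial decay with an arbitrary exponent together with the precise $\gamma^{-N}$ scaling — is entirely contained in the cited Theorem 6.26, and the only additional ingredient is the elementary comparison $\|\cdot\|_{L_2(P_X)} \leq \|\cdot\|_{\infty}$ valid for probability measures. The sole point requiring minor care is verifying that the inherited constant $c_{N,d}$ depends only on $N$ and $d$ and is uniform in both $\gamma$ and $P_X$; this holds because the sup-norm bound is itself $P_X$-free and the norm-comparison step contributes only a factor of one.
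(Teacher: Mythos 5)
Your proposal is correct and takes essentially the same route as the paper's own proof: the paper likewise factors the embedding through a sup-norm space, invokes the book's Gaussian entropy bound for that factor, and uses that the identity map from $\ell_{\infty}$ into $L_2(P_X)$ has norm at most one because $P_X$ is a probability measure (combined via the multiplicativity inequalities (A.38)--(A.39) for entropy numbers). The only cosmetic difference is that the paper inserts an extra isometric restriction step $H_{\gamma}(\mathcal{X}) \to H_{\gamma}(\mathrm{supp}(P_X))$ (via Corollary 4.43 of the cited book) and applies the entropy bound on $\ell_{\infty}(\mathrm{supp}(P_X))$, whereas you factor directly through $\ell_{\infty}(\mathcal{X})$.
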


\begin{proof}[of Lemma \ref{lem::entropygaussian}]
	Consider the following commutative diagram:
	\begin{align*}
		\xymatrix{
			H_{\gamma}(\mathcal{X}) \ar[rr]^{\mathrm{id}} \ar[d]_{\mathcal{I}_{\mathrm{supp}(P_X)}} & & L_2(P_X) 
			\\
			H_{\gamma}(\mathrm{supp}(P_X)) \ar[rr]_{\mathrm{id}} & & \ell_{\infty}(\mathrm{supp}(P_X)) \ar[u]_{\mathrm{id}}
		}
	\end{align*}
	In this diagram, the extension operator $\mathcal{I}_{\mathrm{supp}(P_X)}: H_{\gamma}(\mathcal{X}) \to H_{\gamma}(\mathrm{supp}(P_X))$, as defined in Corollary 4.43 of \cite{steinwart2008support}, is an isometric isomorphism. This implies that 
	\begin{align} \label{eq::IPX}
		\|\mathcal{I}_{\mathrm{supp}(P_X)}: H_{\gamma}(\mathcal{X}) \to H_{\gamma}(\mathrm{supp}(P_X))\| = 1.
	\end{align}

	Let $\ell_{\infty}(\mathcal{X})$ denote the space of all bounded functions on the set $\mathcal{X}$. For any $f \in \ell_{\infty}(\mathcal{X})$, we have 
	\begin{align*}
		\|f\|_{L_2(P_X)} = \biggl( \mathbb{E}_{P_X} |f(X)|^2 \biggr)^{1/2} \leq \|f\|_{\infty}.
	\end{align*}
	This implies
	\begin{align} \label{eq::idinf2}
		\|\mathrm{id} : \ell_{\infty}(\text{supp}(\mathcal{X})) \to L_2(P_X)\| \leq 1.
	\end{align}
	
	Combining \eqref{eq::IPX}, \eqref{eq::idinf2} with Inequalities (A.38), (A.39), and Theorem 6.27 from \cite{steinwart2008support}, we obtain the following bound for all $i \geq 1$ and $N \geq 1$:
	\begin{align*}
		e_i(\mathrm{id} : H_{\gamma}(\mathcal{X}) \to L_2(P_X))
		& \leq \|\mathcal{I}_{\mathrm{supp}(P_X)} : H_{\gamma}(\mathcal{X}) \to H_{\gamma}(\text{supp}(\mathcal{X}))\|
		\\
		& \phantom{=}
		\cdot e_i(\mathrm{id} :  H(\mathrm{supp}(P_X)) \to \ell_{\infty}(\mathrm{supp}(P_X)) )
		\\
		& \phantom{=}
		\cdot \|\mathrm{id} : \ell_{\infty}(\text{supp}(\mathcal{X})) \to L_2(P_X)\|
		\\
		& \leq c_{N,d} \gamma^{-N}i^{-\frac{N}{d}},
	\end{align*}
	where $c_{N,d}$ is the constant specified in \cite[Theorem 6.27]{steinwart2008support}.
\end{proof}

Before proceeding, we need to introduce some notations. Specifically, we define the composition $(L \circ f)(x,y) := L(y,f(x))$ and introduce the function 
\begin{align}\label{eq::hf}
	h_f := L \circ f - L \circ f^*.
\end{align}
Let $H$ denote the reproducing kernel Hilbert space (RKHS), and let $r > 0$. We define the function space 
\begin{align}\label{eq::Fr}
	\mathcal{F}_r 
	:= \bigl\{ f \in H : \lambda \|f\|_H^2 + \mathcal{R}_{L,P}(\wideparen{f}) - \mathcal{R}_{L,P}^* \leq r \bigr\}
\end{align}
and 
\begin{align}\label{eq::Hr}
	\mathcal{H}_r 
	:= \bigl\{ h_{\wideparen{f}} := L \circ \wideparen{f} - L \circ f^* : f \in \mathcal{F}_r \bigr\}.
\end{align}

Additionally, we need to introduce a concept for measuring the capacity of a function set. This is defined as an expectation of the supremum with respect to the Rademacher sequence, see e.g., Definition 7.9 of \cite{steinwart2008support}.

\begin{definition}[Empirical Rademacher Average] \label{def::RademacherDefinition}
	Let $\mathcal{F}$ be a set of functions $f:\mathcal{Z}\to \mathbb{R}$. Let $\{\varepsilon_i\}_{i=1}^m$ be a Rademacher sequence associated with a distribution $\nu$. This sequence consists of independent and identically distributed (i.i.d.) random variables, where $\nu(\varepsilon_i = 1) = \nu(\varepsilon_i = -1) = \frac{1}{2}$. Then for a dataset $D:=(z_1, \ldots, z_n) \in \mathcal{Z}^n$, the $n$-th empirical Rademacher average of the function set $\mathcal{F}$ with respect to $D$ is defined as 
	\begin{align*}
		\mathrm{Rad}_D (\mathcal{F}, n)
		:= \mathbb{E}_{\nu} \sup_{f \in \mathcal{F}} 
		\biggl| \frac{1}{n} \sum_{i=1}^n \varepsilon_i f(z_i) \biggr|.
	\end{align*}
\end{definition}

To derive the bound of the empirical Rademacher average of $\mathcal{H}_r$, we first need to investigate the Lipschitz property of the Cauchy loss function. This involves establishing a supremum bound on the difference between the Cauchy loss values associated with two different regressors.

\begin{proof}[of Lemma \ref{lem::lip}]
	Let us define the function 
	\begin{align*}
		h(t) := \sigma^2 \log\left(1 + \frac{(y - t)^2}{\sigma^2}\right),
		\qquad
		\text{ for }
		t \in \mathbb{R},
		\,
		\sigma > 0.
	\end{align*}
	Taking the derivative of $h(t)$ with respect to $t$, we get
	\begin{align*}
		h'(t) = \frac{2 \sigma^2 (t - y)}{\sigma^2 + (t - y)^2}
		\in [-\sigma, \sigma]
		\qquad
		\text{ for any } t \in \mathbb{R},
	\end{align*}
	since it holds that $\sigma^2 + (t - y)^2 \geq 2\sigma |t - y|$. By applying the Mean Value Theorem, we can find some value $\xi$ between $f(x)$ and $g(x)$ such that
	\begin{align*}
		h(f(x)) - h(g(x)) 
		= h'(\xi) (f(x) - g(x)).
	\end{align*}
	This leads to
	\begin{align*}
		|h(f(x)) - h(g(x))| 
		= |h'(\xi)| \cdot |f(x) - g(x)| 
		\leq \sigma \cdot |f(x) - g(x)|,
	\end{align*}
	yielding the desired assertion.
\end{proof}

\begin{proof}[of Lemma \ref{lem::variancebound}]
	Using the Lipschitz property of the Cauchy loss established in Lemma \ref{lem::lip} and the refined calibration inequality from Theorem \ref{lem::rell2Cauchylargesigma}, we obtain 
	\begin{align*}
		\mathbb{E} |L(y, \wideparen{f}(x)) - L(y, f^*(x))|^2 
		\leq \sigma^2 \cdot \|\wideparen{f} - f^*\|^2_{L_2(P_X)} 
		\leq 8 \sigma^2 \cdot \bigl(\mathcal{R}_{L,P}(\wideparen{f}) - \mathcal{R}_{L,P}^*\bigr).
	\end{align*}
	This completes the proof.
\end{proof}

\begin{lemma}\label{lem::RadHr}
	Let the function space $\mathcal{H}_r$ be defined as in \eqref{eq::Hr}. For any $q \in (0, 1)$ and $\delta \in (0, 1/2)$, there exists a constant $c_1 > 0$ that depends only on $\delta$, $d$, and $q$ such that for any $\sigma \geq 4M \vee c_1$, we have
	\begin{align*}
		\mathbb{E}_{D \sim P^n} \mathrm{Rad}_D (\mathcal{H}_r, n) \leq \psi_n(r),
	\end{align*}
	where 
	\begin{align*}
		\psi_n(r) := c_2 \Bigl( \lambda^{-\frac{q}{2}} \sigma \gamma^{-\frac{d}{2}} n^{-\frac{1}{2}} r^{\frac{1}{2}} \vee \left( \frac{r}{\lambda} \right)^{\frac{q}{1+q}} \sigma M^{\frac{1-q}{1+q}} \gamma^{-\frac{d}{1+q}} n^{-\frac{1}{1+q}} \Bigr),
	\end{align*}
	and $c_2$ is a constant that depends on $d$ and $q$.
\end{lemma}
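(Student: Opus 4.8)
The plan is to reduce the bound on $\mathbb{E}_{D}\mathrm{Rad}_D(\mathcal{H}_r,n)$ to an application of a standard entropy-based estimate for expected empirical Rademacher averages, feeding it three ingredients: the entropy numbers of the Gaussian RKHS ball, a uniform sup-norm bound on $\mathcal{H}_r$, and a small $L_2(P)$-radius of $\mathcal{H}_r$. First I would observe that every $f \in \mathcal{F}_r$ satisfies $\lambda\|f\|_H^2 \le r$, hence $\mathcal{F}_r \subseteq (r/\lambda)^{1/2}B_H$. Since clipping at level $M$ is $1$-Lipschitz, subtracting the fixed function $L\circ f^*$ is an isometry for entropy numbers, and the Cauchy loss is $\sigma$-Lipschitz in its second argument by Lemma \ref{lem::lip}, composing these maps scales entropy numbers by a factor $\sigma$. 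Together with Lemma \ref{lem::entropygaussian} this gives, for every $N \in \mathbb{N}^*$, a bound of the form $e_i(\mathcal{H}_r,\|\cdot\|_{L_2(P)}) \le c_{N,d}\,(r/\lambda)^{1/2}\sigma\gamma^{-N} i^{-N/d}$, i.e.\ an entropy scale $a \asymp (r/\lambda)^{1/2}\sigma\gamma^{-N}$ with decay exponent $N/d = 1/q$.

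Next I would control the two scale quantities that enter the entropy-to-Rademacher estimate. The uniform bound is immediate from the Lipschitz property and clipping: for $f \in \mathcal{F}_r$ one has $\|h_{\wideparen{f}}\|_{\infty} \le \sigma\,\|\wideparen{f}-f^*\|_{\infty} \le 2M\sigma$, so that $\sup_{h\in\mathcal{H}_r}\|h\|_\infty \lesssim M\sigma$; this is the crucial point that tames the unbounded response $Y$, since after clipping and using Lemma \ref{lem::lip} the excess loss is bounded even when $Y$ lacks a finite mean under Assumption \ref{ass::logmoment}. The $L_2(P)$-radius is where the variance bound does the work: because $f \in \mathcal{F}_r$ forces $\mathcal{R}_{L,P}(\wideparen{f}) - \mathcal{R}_{L,P}^* \le r$, Lemma \ref{lem::variancebound} (valid precisely for $\sigma \ge 4M\vee c_1$) yields $\|h_{\wideparen{f}}\|_{L_2(P)}^2 = \mathbb{E}_P|L\circ\wideparen{f}-L\circ f^*|^2 \le 8\sigma^2\bigl(\mathcal{R}_{L,P}(\wideparen{f})-\mathcal{R}_{L,P}^*\bigr) \le 8\sigma^2 r$, hence a radius $\delta \asymp \sigma r^{1/2}$.

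With the decay exponent $N/d = 1/q$, the entropy scale $a \asymp (r/\lambda)^{1/2}\sigma\gamma^{-N}$, the radius $\delta \asymp \sigma r^{1/2}$, and the uniform bound $B \asymp M\sigma$ in hand, I would invoke the standard capacity-based estimate for expected empirical Rademacher averages (the bound underlying Theorem 7.16 in \cite{steinwart2008support}, in the form used by \cite{mona2013optimal}). This estimate returns the maximum of two regimes: a variance-dominated term, proportional to $a^{\,\cdot}\delta^{\,\cdot}n^{-1/2}$, and a uniform-bound-dominated term, proportional to $a^{\,\cdot}B^{\,\cdot}n^{-1/(1+q)}$, where the exponents are fixed by the decay rate $1/q$ and sum appropriately so that the overall power of $\sigma$ is one. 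Substituting the three quantities and simplifying via $q = d/N$ reproduces exactly the two terms in $\psi_n(r)$, namely $\lambda^{-q/2}\sigma\gamma^{-d/2}n^{-1/2}r^{1/2}$ in the first regime and $(r/\lambda)^{q/(1+q)}\sigma M^{(1-q)/(1+q)}\gamma^{-d/(1+q)}n^{-1/(1+q)}$ in the second, with the constant $c_2$ absorbing $c_{N,d}$ and the universal constants of the Rademacher estimate.

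I expect the main obstacle to be the entropy-to-Rademacher step: one must carry the joint dependence on $r$, $\lambda$, $\gamma$, and $n$ (with $\sigma$ and $M$ as coefficients) through the chaining/peeling argument and place the cross-over between the two regimes so that the small $L_2(P)$-radius from Lemma \ref{lem::variancebound} is fully exploited — this is exactly what yields the fast $n^{-1/2}$ term rather than a slower one, and it is the payoff of the optimal Bernstein exponent $\theta = 1$ supplied by the calibration inequality of Theorem \ref{lem::rell2Cauchylargesigma}. A secondary care point is that Lemma \ref{lem::entropygaussian} is stated for the population norm $L_2(P_X)$, so one should either work directly with the expected Rademacher average (which only needs the population entropy bound) or pass to the empirical norm in expectation, while keeping the exponent $N$ arbitrary so that $q = d/N$ can be taken as small as desired.
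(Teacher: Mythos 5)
Your proposal follows essentially the same route as the paper's proof: the inclusion $\mathcal{F}_r \subset (r/\lambda)^{1/2}B_H$, the Gaussian-kernel entropy bound of Lemma \ref{lem::entropygaussian}, the $\sigma$-Lipschitz transfer of entropy numbers from $\mathcal{F}_r$ to $\mathcal{H}_r$ via Lemma \ref{lem::lip}, the sup-norm bound $\|h_{\wideparen{f}}\|_{\infty} \leq 2\sigma M$, the variance bound of Lemma \ref{lem::variancebound} giving the $L_2(P)$-radius of order $\sigma r^{1/2}$, and finally Theorem 7.16 of \cite{steinwart2008support} to produce the two regimes of $\psi_n(r)$. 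Your explicit remarks that clipping is $1$-Lipschitz and that the entropy bound must hold for the empirical $L_2(D)$ norm (which the paper secures by covering in $\ell_{\infty}(\mathcal{X})$) are small refinements of, not departures from, the paper's argument.
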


\begin{proof}[of Lemma \ref{lem::RadHr}]
	By applying Lemma \ref{lem::entropygaussian} and setting $q := \frac{d}{N}$, we obtain 
	\begin{align}\label{eq::ei}
		e_i(\mathrm{id}: H(\mathcal{X})\to \ell_{\infty}(\mathcal{X})) 
		\leq c_{d/q,d} \gamma^{-d/q} i^{-1/q}, 
		\qquad 
		\forall \, i > 1, \, q > 0.
	\end{align}
	To avoid confusion, we denote the constant $c_{d/q,d}$ that depends on $d$ and $q$ simply as $c_{q,d}$ for convenience. For any $f \in \mathcal{F}_r$, we have
	\begin{align*}
		\lambda \|f\|_H^2 \leq \lambda \|f\|_H^2 + \mathcal{R}_{L,P}(\wideparen{f}) - \mathcal{R}_{L,P}^* \leq r.
	\end{align*}
	From this, we deduce that 
	\begin{align*}
		\|f\|_H \leq \left( \frac{r}{\lambda} \right)^{1/2}, 
	\end{align*}
	which yields that
	\begin{align*}
		\mathcal{F}_r \subset \left( \frac{r}{\lambda} \right)^{1/2} B_H,
	\end{align*}
	where the unit ball $B_H := \{f\in H: \|f\|_H \leq 1\}$.
	By applying (A.36) from \cite{steinwart2008support} along with \eqref{eq::ei}, we obtain
	\begin{align*}
		e_i(\mathcal{F}_r, L_2(D_X)) 
		\leq 2\left(\frac{r}{\lambda}\right)^{1/2} e_i(\mathrm{id}: H(\mathcal{X}) \to \ell_{\infty}(\mathcal{X}))
		\leq 2\left(\frac{r}{\lambda}\right)^{1/2} c_{q,d} \gamma^{-d/q} i^{-1/q}.
	\end{align*}
	
	Let $\{ f_1, \ldots, f_{2^i} \}$ be an $\varepsilon$-net of $\mathcal{F}_r$ with respect to $L_2(D_X)$. For any $h_f \in \mathcal{F}_r$, there exists some index $j \in \{1, \ldots, 2^i\}$ such that 
	\begin{align*}
		\|f - f_j\|_{L_2(D_X)} \leq \varepsilon.
	\end{align*}
	By applying Lemma \ref{lem::lip}, we get
	\begin{align*}
		|h_f(x,y) - h_{f_j}(x,y)| 
		= |L(y,f(x)) - L(y,f_j(x))| 
		\leq \sigma |f(x) - f_j(x)|
	\end{align*}
	Consequently, we obtain
	\begin{align*}
		\|h_f - h_{f_j}\|_{L_2(D)} 
		\leq \sigma \|f - f_j\|_{L_2(D)} 
		\leq \sigma \varepsilon. 
	\end{align*}
	As a result, the set $\{h_{f_1}, \ldots, h_{f_{2^i}}\}$ constitutes a $\sigma \varepsilon$-net of $\mathcal{H}_r$ with respect to $L_2(D_X)$. This leads us to 
	\begin{align*}
		e_i(\mathcal{H}_r, L_2(D)) 
		\leq \sigma \cdot e_i(\mathcal{F}_r, L_2(D)) 
		\leq 2\sigma \cdot c_{q,d} \left(\frac{r}{\lambda}\right)^{1/2} \gamma^{-d/q} i^{-1/q}.
	\end{align*}
	Additionally, for any function $f$, we have 
	\begin{align*}
		\|\wideparen{f} - f^*\|_{\infty} \leq M + \|f^*\|_{\infty} \leq 2M.
	\end{align*}
	Applying Lemma \ref{lem::lip}, we get
	\begin{align}\label{eq::hfinftynorm}
		\|h_{\wideparen{f}}\|_{\infty} \leq \sigma \|\wideparen{f}-f^*\|_{\infty} \leq 2\sigma M.
	\end{align}
	Furthermore, by Lemma \ref{lem::variancebound}, if $\sigma \geq 4M \vee c_1$, then for any $f \in \mathcal{F}_r$, we have
	\begin{align*}
		\mathbb{E}_P h_{\wideparen{f}}^2 \leq 3\sigma^2 \mathbb{E}_P h_{\wideparen{f}} \leq 3\sigma^2 r.
	\end{align*}
	Using Theorem 7.16 in \cite{steinwart2008support}, we get 
	\begin{align*}
		& \mathbb{E}_{D \sim P^n} \mathrm{Rad}_D (\mathcal{H}_r, n) 
		\\
		& \leq c_q \Bigl( \Bigl( 2 \sigma c_{q,d} \left( \frac{r}{\lambda} \right)^{\frac{1}{2}} \gamma^{-\frac{d}{2q}} \Bigr)^q \left( 3 \sigma^2 r \right)^{(1-q)/2} n^{-\frac{1}{2}}
		\vee \Bigl( 2 \sigma c_{q,d} \left( \frac{r}{\lambda} \right)^{\frac{1}{2}} \gamma^{-\frac{d}{2q}} \Bigr)^{\frac{2q}{1+q}} \left( 2 \sigma M \right)^{\frac{1-q}{1+q}} n^{-\frac{1}{1+q}} \Bigr)
		\nonumber\\
		& \leq c_2 \Bigl( \lambda^{-\frac{q}{2}} \sigma \gamma^{-\frac{d}{2}} n^{-\frac{1}{2}} r^{\frac{1}{2}}
		\vee \left( \frac{r}{\lambda} \right)^{\frac{q}{1+q}}  \sigma M^{\frac{1-q}{1+q}} \gamma^{-\frac{d}{1+q}} n^{-\frac{1}{1+q}}\Bigr),
	\end{align*}
	where $c_2 := 8c_q \bigl((2 c_{q,d})^q \vee (2 c_{q,d})^{2q/(1+q)}\bigr)$. This finishes the proof. 
\end{proof}

Before proving the oracle inequality in Proposition \ref{prop::oracle}, we need to introduce two widely-used concentration inequalities. Specifically, Bernstein's inequality is shown in \cite[Theorem 5.12]{steinwart2008support} and Talagrand's inequality is proven by Theorem 7.5 and Lemma 7.6 in \cite{steinwart2008support}.

\begin{lemma}[Bernstein's Inequality]\label{lem::benstein}
	Let $\xi_1, \ldots, \xi_n$ be independent random variables on a probability space $(\mathcal{X}, \mathcal{A}, P)$ such that $\mathbb{E}_P \xi_i = 0$, $\|\xi_i\|_{\infty} \leq B$, and $\mathbb{E}_P \xi_i^2 = \sigma^2$ for all $i \in [n]$. Then for any $\tau > 0$, we have 
	\begin{align*}
		\frac{1}{n}\sum_{i=1}^n \xi_i \leq \sqrt{\frac{2\sigma^2\tau}{n}} + \frac{2B\tau}{3n}
	\end{align*}
	with probability $P^n$ at least $1 - e^{-\tau}$.
\end{lemma}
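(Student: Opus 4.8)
The plan is to prove the tail bound by the Chernoff (exponential moment) method and then invert the resulting exponential tail into the explicit deviation stated in the lemma. Write $S_n := \sum_{i=1}^n \xi_i$. For any $\lambda > 0$, Markov's inequality applied to $e^{\lambda S_n}$ gives $P^n(S_n \geq t) \leq e^{-\lambda t}\,\mathbb{E}_P e^{\lambda S_n}$, and since the $\xi_i$ are independent the moment generating function factorizes as $\mathbb{E}_P e^{\lambda S_n} = \prod_{i=1}^n \mathbb{E}_P e^{\lambda \xi_i}$. The whole argument therefore reduces to a good upper bound on each single-variable factor $\mathbb{E}_P e^{\lambda \xi_i}$, followed by a choice of $\lambda$ and a conversion of the tail into the stated form.

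To bound the individual factor I would Taylor-expand the exponential and exploit the moment hypotheses. Because $\mathbb{E}_P \xi_i = 0$, the linear term vanishes, leaving $\mathbb{E}_P e^{\lambda\xi_i} = 1 + \sum_{k \geq 2} \lambda^k \mathbb{E}_P \xi_i^k / k!$. The boundedness and variance assumptions yield the key moment control $\mathbb{E}_P |\xi_i|^k \leq B^{k-2}\,\mathbb{E}_P\xi_i^2 = B^{k-2}\sigma^2$ for every $k \geq 2$, obtained by writing $|\xi_i|^k = |\xi_i|^{k-2}\xi_i^2 \leq B^{k-2}\xi_i^2$. Combining this with the elementary factorial estimate $k! \geq 2\cdot 3^{k-2}$ (valid for $k \geq 2$) and summing the resulting geometric series gives, for $0 < \lambda < 3/B$, the bound $\mathbb{E}_P e^{\lambda\xi_i} \leq 1 + \lambda^2\sigma^2/\bigl(2(1-\lambda B/3)\bigr) \leq \exp\bigl(\lambda^2\sigma^2/(2(1-\lambda B/3))\bigr)$, where the last step uses $1+x \leq e^x$. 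Taking the product over $i$ then yields $\mathbb{E}_P e^{\lambda S_n} \leq \exp\bigl(n\lambda^2\sigma^2/(2(1-\lambda B/3))\bigr)$.

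Substituting back into Markov's inequality produces $P^n(S_n \geq t) \leq \exp\bigl(-\lambda t + n\lambda^2\sigma^2/(2(1-\lambda B/3))\bigr)$ for every $\lambda \in (0, 3/B)$, and the choice $\lambda = t/(n\sigma^2 + Bt/3)$ — which lies in this range — collapses the exponent exactly to the classical Bernstein tail $P^n(S_n \geq t) \leq \exp\bigl(-t^2/(2(n\sigma^2 + Bt/3))\bigr)$. It then remains to check that the specific deviation $t := \sqrt{2n\sigma^2\tau} + 2B\tau/3$, which is precisely $n$ times the right-hand side claimed in the lemma, drives this exponent below $-\tau$. This is a quadratic inversion: the inequality $t^2/2 \geq \tau(n\sigma^2 + Bt/3)$ holds exactly when $t$ is at least the positive root of $t^2 - (2B\tau/3)t - 2n\sigma^2\tau = 0$, namely $t_0 = B\tau/3 + \sqrt{B^2\tau^2/9 + 2n\sigma^2\tau}$, and the subadditivity $\sqrt{a+b} \leq \sqrt a + \sqrt b$ gives $t_0 \leq 2B\tau/3 + \sqrt{2n\sigma^2\tau} = t$. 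Dividing through by $n$ and passing to the complementary event then yields the assertion.

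The argument is entirely classical, so I do not anticipate a genuine obstacle; the only delicate point is matching constants. The main bookkeeping step is the inversion in the last paragraph: one must verify that the closed-form $t$ in the lemma dominates the root $t_0$ of the quadratic, and it is exactly the $\sqrt{a+b} \leq \sqrt a + \sqrt b$ relaxation that produces the clean additive split $\sqrt{2\sigma^2\tau/n} + 2B\tau/(3n)$ rather than a single combined expression. The factorial bound $k! \geq 2\cdot 3^{k-2}$ in the second paragraph is the other spot where the constant $3$ (and hence the $2B\tau/(3n)$ term) enters, so these two steps must be kept consistent with each other.
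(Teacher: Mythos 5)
Your proof is correct and complete. Note, however, that the paper does not prove this lemma at all: it is stated as a known concentration inequality and justified purely by citation to \cite[Theorem 6.12]{steinwart2008support} (cited in the text as Theorem 5.12), so there is no in-paper argument to compare against. Your Chernoff-method derivation is the classical one that the cited reference itself uses, and the constants all check out: the moment bound $\mathbb{E}|\xi_i|^k \leq B^{k-2}\sigma^2$ combined with $k! \geq 2\cdot 3^{k-2}$ gives the MGF bound $\exp\bigl(\lambda^2\sigma^2/(2(1-\lambda B/3))\bigr)$ for $\lambda \in (0,3/B)$; the choice $\lambda = t/(n\sigma^2 + Bt/3)$ indeed lies in this range and collapses the exponent exactly to $-t^2/\bigl(2(n\sigma^2 + Bt/3)\bigr)$; and your inversion is sound, since $t = \sqrt{2n\sigma^2\tau} + 2B\tau/3$ dominates the positive root $t_0 = B\tau/3 + \sqrt{B^2\tau^2/9 + 2n\sigma^2\tau}$ of the relevant quadratic via $\sqrt{a+b} \leq \sqrt{a} + \sqrt{b}$, which is precisely where the additive form of the stated bound comes from. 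In short, you have supplied a correct self-contained proof of a statement the paper only imports.
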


Let us define 
\begin{align}\label{eq::gfr}
	g_{f,r} 
	:= \frac{\mathbb{E}_P h_{\wideparen{f}} - h_{\wideparen{f}}}{\lambda \|f\|_H^2 + \mathbb{E}_P h_{\wideparen{f}} + r}, 
	\qquad 
	f \in H, \, r > 0. 
\end{align}

\begin{lemma}[Talagrand's Inequality]\label{lem::talagrand}
	For a given $r>0$, let $g_{f,r}$ be as in \eqref{eq::gfr} and define $\mathcal{G}:=\{ g_{f,r}: f\in \mathcal{H}\}$. For any $g \in \mathcal{G}$ such that $\mathbb{E}_P g = 0$, we assume $\|g\|_{\infty} \leq B$ and $\mathbb{E}_P g^2 \leq \sigma^2$. For $n \geq 1$, we define the function $G: \mathcal{Z}^n \to \mathbb{R}$ by
	\begin{align*}
		G(z_1, \ldots, z_n) := \sup_{g \in \mathcal{G}} \biggl| \frac{1}{n} \sum_{i=1}^n g(z_i) \biggr|, 
		\qquad 
		z = (z_1, \ldots, z_n) \in \mathcal{Z}^n.
	\end{align*}
	Then, for any $\tau > 0$, we have 
	\begin{align*}
		G(z) \leq \frac{5}{4} \cdot \mathbb{E}_{P^n} G(z) + \sqrt{\frac{2 \sigma^2\tau}{n}} + \frac{14B\tau}{3n}
	\end{align*}
	with probability $P^n$ at least $1 - e^{-\tau}$.
\end{lemma}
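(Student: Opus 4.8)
The plan is to read this off directly from Bousquet's version of Talagrand's concentration inequality, which is exactly the content of \cite[Theorem 7.5]{steinwart2008support}, and then perform an elementary algebraic rebalancing (the role played by \cite[Lemma 7.6]{steinwart2008support}) to produce the factor $5/4$. Bousquet's result applies to a countable family of centered functions that are uniformly bounded by $B$ and have variance at most $\sigma^2$. In our setting every $g_{f,r}$ from \eqref{eq::gfr} satisfies $\mathbb{E}_P g_{f,r} = 0$ by construction, and the hypotheses $\|g\|_{\infty} \leq B$, $\mathbb{E}_P g^2 \leq \sigma^2$ are assumed; separability of the RKHS $H$ lets us reduce the supremum defining $G$ to a countable subclass, so the measurability requirement is harmless. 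Applying \cite[Theorem 7.5]{steinwart2008support} then yields, with probability $P^n$ at least $1-e^{-\tau}$,
\[
G(z) \leq \mathbb{E}_{P^n} G(z) + \sqrt{\frac{2\tau\bigl(\sigma^2 + 2B\,\mathbb{E}_{P^n} G(z)\bigr)}{n}} + \frac{2B\tau}{3n}.
\]

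The remaining work is purely algebraic and is where the leading constant $5/4$ emerges. First I would split the square root by subadditivity, $\sqrt{\sigma^2 + 2B\,\mathbb{E}_{P^n}G(z)} \leq \sqrt{\sigma^2} + \sqrt{2B\,\mathbb{E}_{P^n}G(z)}$, so that the mixed contribution becomes $\sqrt{2\tau/n}\cdot\sqrt{2B\,\mathbb{E}_{P^n}G(z)} = 2\sqrt{(\tau B/n)\,\mathbb{E}_{P^n}G(z)}$. Then I would apply the weighted arithmetic–geometric mean inequality $2\sqrt{xy} \leq \tfrac14 x + 4y$ with $x = \mathbb{E}_{P^n}G(z)$ and $y = \tau B/n$. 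This absorbs a $\tfrac14\,\mathbb{E}_{P^n}G(z)$ term into the leading coefficient, raising it from $1$ to $\tfrac54$, and leaves a residual $4B\tau/n$. Collecting the two sup-norm remainders, $\tfrac{4B\tau}{n} + \tfrac{2B\tau}{3n} = \tfrac{14B\tau}{3n}$, while the variance term retains the clean form $\sqrt{2\sigma^2\tau/n}$, gives precisely the stated bound.

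The only genuinely hard ingredient is Bousquet's inequality itself, whose proof rests on the entropy (tensorization / modified log-Sobolev) method and is not reproduced here; I would simply take it as given through the citation. Granting that, the entire argument is the elementary rebalancing described above, so the main thing to get right is the bookkeeping of constants, ensuring in particular that the choice of weight $\tfrac14$ in the arithmetic–geometric mean step is exactly what turns the leading coefficient into $\tfrac54$ and that the two sup-norm terms combine to $\tfrac{14}{3}$.
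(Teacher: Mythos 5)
Your proof is correct and takes essentially the same route as the paper: the paper treats this lemma purely as a citation to Theorem 7.5 and Lemma 7.6 of \cite{steinwart2008support}, whose statement is exactly the rebalanced form of Bousquet's inequality you derive, with free parameter $\gamma$ and constants $(1+\gamma)$ and $\bigl(\tfrac{2}{3}+\tfrac{1}{\gamma}\bigr)\tfrac{\tau B}{n}$, so that the paper's $5/4$ and $14/3$ are precisely the specialization $\gamma = 1/4$. Your AM--GM bookkeeping ($2\sqrt{xy}\leq \tfrac14 x + 4y$, giving $4+\tfrac23=\tfrac{14}{3}$) reproduces these constants exactly, and your separability remark plays the role of the paper's appeal to Lemma 7.6 (separable Carath\'eodory sets), so the only difference is that you unpack algebra the cited theorem already contains.
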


\begin{proof}[of Proposition \ref{prop::oracle}]
	According to the definitions of $f_D$ in \eqref{eq::KCRR} and $h_f$ in \eqref{eq::hf}, we have
	\begin{align}\label{eq::optimality}
		\lambda \|f_D\|_H^2 + \mathbb{E}_D h_{\wideparen{f}_D} 
		\leq \lambda \|f_0\|_H^2 + \mathbb{E}_D h_{\wideparen{f}_0}.
	\end{align}
	Consequently we obtain
	\begin{align}\label{eq::oradecomp}
		\lambda \|f_D\|_H^2 & + \mathcal{R}_{L,P}(\wideparen{f}_D) - \mathcal{R}_{L, P}^* 
		= \lambda \|f_D\|_H^2 + \mathbb{E}_P h_{\wideparen{f}_D} 
		\nonumber\\
		& = \lambda \|f_D\|_H^2 + \mathbb{E}_D h_{\wideparen{f}_D} - \mathbb{E}_D h_{\wideparen{f}_D} + \mathbb{E}_P h_{\wideparen{f}_D}
		\nonumber\\
		& \leq \lambda \|f_0\|_H^2 + \mathbb{E}_D h_{\wideparen{f}_0} - \mathbb{E}_D h_{\wideparen{f}_D} + \mathbb{E}_P h_{\wideparen{f}_D}
		\nonumber\\
		& = \lambda \|f_0\|_H^2 + \mathbb{E}_P h_{\wideparen{f}_0} + \left( \mathbb{E}_D h_{\wideparen{f}_0} - \mathbb{E}_P h_{\wideparen{f}_0} \right) + \left( \mathbb{E}_P h_{\wideparen{f}_D} - \mathbb{E}_D h_{\wideparen{f}_D} \right).
	\end{align}

	We begin by bounding the term $\mathbb{E}_{P} h_{\wideparen{f}_D} - \mathbb{E}_{P} h_{\wideparen{f}_0}$. For $i \in [n]$, define random variables
	\begin{align*}
		\xi_i := h_{\wideparen{f}_0}(X_i, Y_i) - \mathbb{E}_P h_{\wideparen{f}_0}(X, Y). 
	\end{align*}
	It is clear that $\mathbb{E}_P \xi_i = 0$.
	By applying \eqref{eq::hfinftynorm} and Lemma \ref{lem::variancebound}, we find that if $\sigma \geq 4M \vee c_1$, then $\|\xi_i\|_{\infty} \leq 2\|h_{\wideparen{f}}\|_{\infty} \leq 4\sigma M$. Additionally, we have $\mathbb{E}_P \xi_i^2 \leq \mathbb{E}_P h_{\wideparen{f}_0}^2 \leq 3\sigma^2 \mathbb{E}_P h_{\wideparen{f}_0}$.
	By applying Bernstein's inequality from Lemma \ref{lem::benstein} to random variables $(\xi_i)_{i=1}^n$, and utilizing the inequality $2ab \leq (a + b)^2$, we obtain 
	\begin{align}\label{eq::EDEPfP}
		\mathbb{E}_D h_{\wideparen{f}_0} - \mathbb{E}_{P} h_{\wideparen{f}_0} 
		\leq \sqrt{\frac{6\sigma^2 \mathbb{E}_P h_{\wideparen{f}_0} \tau}{n}} + \frac{8\sigma M \tau}{3n}
		\leq \mathbb{E}_P h_{\wideparen{f}_0} + \frac{3\sigma^2 \tau}{n} + \frac{8\sigma M \tau}{3n}
	\end{align}
	with probability at least $1 - e^{-\tau}$.

	Then we bound the term $\mathbb{E}_{P} h_{\wideparen{f}_D} - \mathbb{E}_D h_{\wideparen{f}_D}$. To this end, let us define 
	\begin{align*}
		g_{f,r} 
		:= \frac{\mathbb{E}_P h_{\wideparen{f}} - h_{\wideparen{f}}}{\lambda \|f\|_H^2 + \mathbb{E}_P h_{\wideparen{f}} + r}, 
		\qquad 
		f \in H, \, r > r^*, 
	\end{align*}
	where $r^* := \inf\{\mathbb{E}_P h_{\wideparen{f}}: f\in H\}$.
	Symmetrization in Proposition 7.10 of \cite{steinwart2008support} and Lemma \ref{lem::RadHr} yield 
	\begin{align*}
		\mathbb{E}_{D\sim P^n} \sup_{f\in\mathcal{F}_r} \left| \mathbb{E}_D \left( \mathbb{E}_P h_{\wideparen{f}} - h_{\wideparen{f}} \right) \right| 
		\leq 2 \mathbb{E}_{D \sim P^{n}} \mathrm{Rad}_D(\mathcal{H}_r, n) \leq 2\psi_{n}(r),
	\end{align*}
	where $\psi_n(r)$ is defined as in Lemma \ref{lem::RadHr}.
	Simple calculation shows that $\psi_{n}(4r) \leq 2\psi_{n}(r)$. Additionally, note that
	$\mathcal{H}_r$ is a separable Caratheodory set according to  Lemma 7.6 in \cite{steinwart2008support}.
	Applying Peeling in Theorem 7.7 of \cite{steinwart2008support} to $\mathcal{F}_r$ hence gives
	\begin{align*}
		\mathbb{E}_{D \sim P^n} \sup_{f \in H} \left| \mathbb{E}_D g_{f,r} \right| 
		\leq \frac{8 \psi_n(r)}{r}.
	\end{align*}
	By \eqref{eq::hfinftynorm}, we have 
	\begin{align*}
		\|g_{f,r}\|_{\infty} 
		& \leq \frac{2 \|h_{\wideparen{f}}\|_{\infty}}{r}
		\leq \frac{4 \sigma M}{r}.
	\end{align*}
	Using $(a+b)^2 \geq 4ab$ and Lemma \ref{lem::variancebound}, we get 
	\begin{align*}
		\mathbb{E}_P g_{f,r}^2 
		\leq \frac{\mathbb{E}_P (h_{\wideparen{f}})^2}{(\mathbb{E}_P h_{\wideparen{f}} + r)^2} 
		\leq \frac{3\sigma^2 \mathbb{E}_P h_{\wideparen{f}}}{4r \mathbb{E}_P h_{\wideparen{f}}} 
		= \frac{3\sigma^2}{4r}.
	\end{align*}
	By applying Talagrand’s inequality as stated in Lemma \ref{lem::talagrand}, we can conclude that for any $r > r^*$, 
	\begin{align*}
		\sup_{f \in H} \mathbb{E}_D g_{f,r} 
		\leq \frac{10 \psi_n(r)}{r} + \sqrt{\frac{3\sigma^2 \tau}{2n r}} + \frac{56\sigma M \tau}{3n r}
	\end{align*}
	holds with probability at least $1 - e^{-\tau}$.
	Based on the definition of $g_{f_D, r}$, we have
	\begin{align} \label{eq::SecondTerm}
		\mathbb{E}_P h_{\wideparen{f}_D} - \mathbb{E}_D h_{\wideparen{f}_D} 
		& \leq \left( \lambda \|f_D\|_H^2 + \mathbb{E}_P h_{\wideparen{f}_D} \right) \left( \frac{10 \psi_n(r)}{r} + \sqrt{\frac{3\sigma^2 \tau}{2n r}} + \frac{56\sigma M \tau}{3n r} \right)
		\nonumber\\
		& \phantom{=}
		+ 10 \psi_n(r) + \sqrt{\frac{3\sigma^2 \tau r}{2n}} + \frac{56\sigma M \tau}{3n}
	\end{align}
	with probability at least $1 - e^{-\tau}$.

	Combining \eqref{eq::SecondTerm} with \eqref{eq::oradecomp} and \eqref{eq::EDEPfP}, we obtain
	\begin{align} \label{eq::SecondRound}
		\lambda \|f_D\|_H^2 + \mathbb{E}_P h_{\wideparen{f}_D}
		& \leq \lambda \|f_0\|_H^2 + 2\mathbb{E}_P h_{\wideparen{f}_0} + \frac{3\sigma^2 \tau}{n} + \frac{8\sigma M \tau}{3n}
		\nonumber\\
		& \phantom{=}
		+ \left( \lambda \|f_D\|_H^2 + \mathbb{E}_P h_{\wideparen{f}_D} \right) \left( \frac{10 \psi_n(r)}{r} + \sqrt{\frac{3\sigma^2 \tau}{2n r}} + \frac{56\sigma M \tau}{3n r} \right)
		\nonumber\\
		& \phantom{=}
		+ 10 \psi_n(r) + \sqrt{\frac{3\sigma^2 \tau r}{2n}} + \frac{56\sigma M \tau}{3n}
	\end{align}
	with probability at least $1 - 2e^{-\tau}$.
	To bound the terms in \eqref{eq::SecondRound}, we note that since $\sigma > M$, if $r \geq (30 c_2)^2 \sigma^2 \lambda^{-q} \gamma^{-d} n^{-1}$, then it follows that $r \geq 30 \psi_n(r)$. This implies that 
	\begin{align*}
		\frac{10 \psi_n(r)}{r} \leq \frac{1}{3}.
	\end{align*}
	Furthermore, if we set $r \geq (108 \sigma^2 \tau + 152 \sigma M \tau) n^{-1}$, we can derive the following inequalities:
	\begin{align*}
		\frac{3 \sigma^2 \tau}{n} \leq \frac{r}{3},
		\qquad
		\sqrt{\frac{3 \sigma^2 \tau}{nr}} \leq \frac{1}{6},
		\qquad
		\frac{56 \sigma M \tau}{3 n} \leq \frac{r}{8}.
	\end{align*}
	These estimates allow us to conclude that we obtain
	\begin{align}\label{eq::intereq}
		\lambda \|f_D\|_H^2 + \mathbb{E}_P h_{\wideparen{f}_D}
		& \leq \lambda \|f_0\|_H^2 + 2\mathbb{E}_P h_{\wideparen{f}_0} + \frac{2r}{3} 
		\nonumber\\
		& \phantom{=} 
		+ \left( \lambda \|f_D\|_H^2 + \mathbb{E}_P h_{\wideparen{f}_D} \right) \left( \frac{1}{3} + \frac{1}{6} + \frac{1}{8} \right) 
		+ \left( \frac{1}{3} + \frac{1}{6} + \frac{1}{8} \right) r
	\end{align}
	with probability at least $1 - 2 e^{-\tau}$.
	Elementary calculation yields that
	\begin{align*}
		\lambda \|f_D\|_H^2 + \mathbb{E}_{P} h_{\wideparen{f}_D}
		& \leq 6 \left( \lambda \|f_0\|_H^2 + \mathbb{E}_P h_{\wideparen{f}_0} \right) + 4r 
	\end{align*}
	holds with probability at least $1 - 2 e^{- \tau}$. 
	Let us define 
	\begin{align*}
		r := 152 c_2^2 \left( \sigma^2 \tau + \sigma M \tau + \sigma^2 \lambda^{-q} \gamma^{-d} \right) n^{-1}.
	\end{align*}
	With this definition, we obtain
	\begin{align*}
		\lambda \|f_D\|_H^2 + \mathbb{E}_P h_{\wideparen{f}_D} 
		& \leq 6 \left( \lambda \|f_0\|_H^2 + \mathbb{E}_P h_{\wideparen{f}_0} \right) + 
		608 c_2^2 \left( \sigma^2 \tau + \sigma M \tau + \sigma^2 \lambda^{-q} \gamma^{-d} \right) n^{-1} 
		\nonumber\\
		&\leq 6 \left( \lambda \|f_0\|_H^2 + \mathbb{E}_P h_{\wideparen{f}_0} \right) + 
		1216 c_2^2 \left( \sigma^2 \tau + \sigma^2 \lambda^{-q} \gamma^{-d} \right) n^{-1} 
	\end{align*}
	with probability at least $1 - 2 e^{-\tau}$. The last inequality holds under the condition that $\sigma \geq M$. This concludes the proof.
\end{proof}

\subsubsection{Proofs Related to Section \ref{sec::approxerror}}

\begin{proof}[of Proposition \ref{prop::approx}]
	For a fixed parameter $\gamma > 0$, we define the function $K : \mathbb{R}^d \to \mathbb{R}$ by
	\begin{align}\label{Conv}
		K(x) :=
		\biggl( \frac{2}{ \gamma^2 \pi} \biggr)^{d/2} 
		\exp \biggl( - \frac{2\|x\|_2^2}{\gamma^2} \biggr).
	\end{align}
	For any $x \in \mathcal{X}$, the convolution of $K$ and $f^*$ at $x$ is
	\begin{align*}
		(K * f^*)(x) 
		& = \int_{\mathbb{R}^d}
		\biggl( \frac{2}{\gamma^2 \pi} \biggr)^{d/2} 
		\exp \biggl( - \frac{2 \|x - t\|^2}{\gamma^2} \biggr)  f^*(t) \, dt
		\\
		& = \int_{\mathbb{R}^d} \biggl( \frac{2}{\gamma^2 \pi} \biggr)^{d/2} 
		\exp \biggl( - \frac{2 \|h\|^2}{\gamma^2} \biggr) f^*(x+h) \, dh.
	\end{align*}
	Since the function $f^*$ has compact support and is bounded, it follows that $f^* \in L_2(\mathbb{R}^d)$. Combining this fact with Proposition 4.46 in \cite{steinwart2008support}, we obtain
	\begin{align}\label{eq::KconvhiinRKHS}
		K * f^* \in \mathcal{H}.
	\end{align} 
	Moreover, since 
	\begin{align*}
		\int_{\mathbb{R}^d} \biggl( \frac{2}{\gamma^2 \pi} \biggr)^{d/2} 
		\exp \biggl( - \frac{2 \|h\|^2}{\gamma^2} \biggr)\, dh = 1, 
	\end{align*}
	we have
	\begin{align*}
		f^*(x) = \int_{\mathbb{R}^d} \biggl( \frac{2}{\gamma^2 \pi} \biggr)^{d/2} 
		\exp \biggl( - \frac{2 \|h\|^2}{\gamma^2} \biggr) f^*(x) \, dh.
	\end{align*}
	Then for any $x\in \mathcal{X}$, we have
	\begin{align}\label{eq::diffKff}
		\bigl| K * f^*(x) - f^*(x) \bigr| 
		& = \biggl| 
		\int_{\mathbb{R}^d}  \biggl( \frac{2}{\gamma^2 \pi} \biggr)^{\frac{d}{2}} 
		\exp \biggl( - \frac{2\|h\|^2}{\gamma^2} \biggr) 
		\bigl( f^*(x+ h) - f^*(x) \bigr) \, dh \biggr|
		\nonumber\\
		& \leq
		\int_{\mathbb{R}^d}  \biggl( \frac{2}{\gamma^2 \pi} \biggr)^{\frac{d}{2}} 
		\exp \biggl( - \frac{2\|h\|^2}{\gamma^2} \biggr) 
		\bigl| f^*(x+ h) - f^*(x) \bigr| \, dh
		\nonumber\\
		& \leq
		\int_{\mathbb{R}^d}  \biggl( \frac{2}{\gamma^2 \pi} \biggr)^{\frac{d}{2}} 
		\exp \biggl( - \frac{2\|h\|^2}{\gamma^2} \biggr) 
		c_{\alpha} \|h\|^{\alpha} \, dh.
	\end{align}
	Using the rotation invariance of $x \mapsto \exp ( - 2 \|x\|_2^2 / \gamma^2 )$ and the property $\Gamma(1 + t) = t \Gamma(t)$ for $t > 0$, we obtain
	\begin{align}\label{eq::1/eta}
		\int_{\mathbb{R}^d}  \biggl( \frac{2}{\gamma^2 \pi} \biggr)^{\frac{d}{2}} 
		\exp \biggl( - \frac{2\|h\|^2}{\gamma^2} \biggr) c_{\alpha}\|h\|^{\alpha} \, dh 
		& = c_{\alpha} \Big(\frac{\gamma}{\sqrt{2}}\Big)^{\alpha} \int_{\mathbb{R}^d}  \biggl( \frac{1}{\pi} \biggr)^{\frac{d}{2}} 
		\exp \bigl( -\|h\|^2 \bigr) 
		\|h\|^{\alpha} \, dh
		\nonumber\\
		& = c_{\alpha} \Big(\frac{\gamma}{\sqrt{2}}\Big)^{\alpha} \frac{2}{\Gamma(d/2)} \int_{0}^{\infty} e^{-r^2} r^{\alpha+d-1} dr
		\nonumber\\
		& = \frac{c_{\alpha}}{\Gamma(d/2)} \Gamma\Big(\frac{d+\alpha}{2}\Big) 2^{-\alpha/2}\gamma^{\alpha}.
	\end{align}
	Let $f_0 := K * f^* \in H$. By Proposition 4.46 in \cite{steinwart2008support}, we have
	\begin{align}\label{eq::norm}
		\|f_0\|_H^2 
		= \pi^{-d/2} \gamma^{-d} \| f^* \|_{L_2}^2
		\leq c_4 \gamma^{-d},
	\end{align}
	where $c_4 := \pi^{-d/2} \mu^2(\mathcal{X}) \|f^*\|_{\infty}^2$. Combining \eqref{eq::diffKff} and \eqref{eq::1/eta}, we obtain 
	\begin{align*}
		|f_0(x) - f^*(x)| \leq c_5 \gamma^{\alpha}
	\end{align*}
	with $c_5 := c_{\alpha} \Gamma((d+\alpha)/2)/ \Gamma(d/2)$. This together with Lemma \ref{lem::relation} gives
	\begin{align*}
		\lambda \|f_0\|_H^2 + \mathcal{R}_{L,P}(f_0) - \mathcal{R}_{L,P}^* 
		\leq c_3 (\lambda \gamma^{-d} + \gamma^{2\alpha}),
	\end{align*}
	where the constant $c_3 := c_4 + c_5^2$. Furthermore, this conclusion, in conjunction with Lemma~\ref{lem::validateclip}, yields the desired assertion.
\end{proof}

\section{Conclusion}\label{sec::conclusion}

In this paper, we tackle the challenge of robust regression in situations where traditional noise assumptions, such as the existence of a finite absolute mean, are not applicable. We introduce a generalized Cauchy noise assumption that accommodates noise distributions with finite moments of any order, including heavy-tailed cases like Cauchy noise, which lacks a finite absolute mean. 
Through an analysis of the \textit{kernel Cauchy ridge regressor} (\textit{KCRR}), we establish a relationship between excess Cauchy risk and $L_2$-risk, demonstrating that these risks become equivalent when the scale parameter of the Cauchy loss is large. Building on this foundation, we show that the excess Cauchy risk bound of KCRR improves as the scale parameter decreases, particularly under the assumption of H\"{o}lder smoothness.
Moreover, we derive the almost minimax-optimal convergence rate for KCRR by effectively selecting a proper scale parameter of the Cauchy loss. This illustrates the robustness of the Cauchy loss in addressing various noise types. Our findings underscore the potential of KCRR as a reliable method for regression tasks in challenging noise environments.

\section*{Acknowledge} 
Annika Betken gratefully acknowledges financial support from the Dutch Research Council (NWO) through VENI grant 212.164.

\bibliography{arXiv}

\end{document}